\newtheorem{definition}{Definition}
\newtheorem{theorem}{Theorem}
\newtheorem{example}{Example}
\newtheorem{lemma}{Lemma}
\begin{document}

\title{Runtime Analysis of Evolutionary Algorithms for Multi-party Multi-objective Optimization}

\author{Yuetong~Sun, Peilan~Xu,~\IEEEmembership{Member,~IEEE}, Wenjian~Luo,~\IEEEmembership{Senior Member,~IEEE}
    \thanks{This work has been accepted for publication in IEEE Transactions on Evolutionary Computation. The final published version will be available via IEEE Xplore.}
	\thanks{This work is partly supported by the Natural Science Foundation of Jiangsu Province (Grant No. BK20230419), Natural Science Foundation of the Jiangsu Higher Education Institutions of China (Grant No. 23KJB520018), National Natural	Science Foundation of China (Grant No. U23B2058).(\textit{Corresponding author: Peilan Xu.})}
	\thanks{Yuetong Sun and Peilan Xu are with School of Artificial Intelligence, Nanjing University of Information Science and Technology, Nanjing 210044, China (e-mail:202283460028@nuist.edu.cn; xpl@nuist.edu.cn).

            Wenjian Luo is with Guangdong Provincial Key Laboratory of Novel Security Intelligence Technologies, Institute of Cyberspace Security, School of Computer Science and Technology, Harbin Institute of Technology, Shenzhen 518055, Guangdong, China (e-mail:luowenjian@hit.edu.cn).}
	
}

\maketitle

\begin{abstract}
In scenarios where multiple decision-makers operate within a common decision space, each focusing on their own multi-objective optimization problem (e.g., bargaining games), the problem can be modeled as a multi-party multi-objective optimization problem (MPMOP). While numerous evolutionary algorithms have been proposed to solve MPMOPs, most results remain empirical. This paper presents the first theoretical analysis of the expected runtime of evolutionary algorithms on bi-party multi-objective optimization problems (BPMOPs). Our findings demonstrate that employing traditional multi-objective optimization algorithms to solve MPMOPs is both time-consuming and inefficient, as the resulting population contains many solutions that fail to achieve consensus among decision-makers. An alternative approach involves decision-makers individually solving their respective optimization problems and seeking consensus only in the final stage. While feasible for pseudo-Boolean optimization problems, this method may fail to guarantee approximate performance for one party in NP-hard problems. Finally, we propose evolutionary multi-party multi-objective optimizers (EMPMO) for pseudo-Boolean optimization and shortest path problems within a multi-party multi-objective context, maintain a common solution set among all parties. Theoretical and experimental results demonstrate that the proposed \( \text{EMPMO}_{\text{random}} \) outperforms previous algorithms in terms of the lower bound on the expected runtime for pseudo-Boolean optimization problems. Additionally, the consensus-based evolutionary multi-party multi-objective optimizer( \( \text{EMPMO}_{\text{cons}}^{\text{SP}} \) ) achieves better efficiency and precision in solving shortest path problems compared to existing algorithms.

\begin{IEEEkeywords}
Multi-party Multi-objective Optimization, Evolutionary Algorithm, Runtime Analysis, Game.
\end{IEEEkeywords}
\end{abstract}

\IEEEpeerreviewmaketitle

\section{Introduction}
\label{sec: introduction}

A significant proportion of multi-objective optimization problems (MOPs) involve two or more parties, and multi-objective games are one of typical scenarios \cite{roijers2013survey}, where multiple parties negotiate within the same decision space to reach a consensus or compromise. For example, UAV path planning requires consensus between government's concerns about urban safety and the enterprise's focus on economic benefits \cite{10463192}, the public facility construction involves the coordination among multiple functional departments \cite{zhang2012negotiation}, and the investment selection for commercial projects requires the evaluation of risks and benefits by various departments \cite{song2022multiobjective}. Liu \emph{et al.} \cite{liu2020evolutionary} defined such problems as multi-party multi-objective optimization problems (MPMOPs), which aim to find a solution set that satisfies or approximates the true Pareto front (PF) of all parties, corresponding to agreement or compromise through negotiation, respectively. 

Because each party in an MPMOP holds a separate MOP, multi-party multi-objective evolutionary algorithms (MPMOEAs) naturally evolve from multi-objective evolutionary algorithms (MOEAs). In other words, MPMOEAs are derived from MOEAs by further considering the relationships between multiple parties. Here, MPMOEAs refer to evolutionary algorithms involving M parties, where $M \geq 2$. In particular, when $M=2$, it is expressed as bi-party multi-objective evolutionary algorithms. The practice of addressing MPMOPs builds on the successful methodologies established in the evolutionary multi-objective optimization (EMO) community. Based on widely acclaimed MOEAs, MPMOEAs have been developed and can be classified into three categories \cite{chang2022multiparty}.
Dominance-based algorithms extend MOEAs such as the non-dominated sorting genetic algorithm II (NSGA-II) \cite{deb2002fast}, strength pareto evolutionary algorithm (SPEA2)~[\citen{song2022multiobjective},\citen{zitzler2001spea2}] , and multi-objective Immune Algorithm (MIA)~[\citen{10463192},\citen{lin2015hybrid}] by incorporating multi-party non-dominated sorting techniques~[\citen{liu2020evolutionary},\citen{she2021new}]. 
In this type of algorithm, each party performs non-dominated sorting of the population with respect to its own set of objectives, giving each individual a level per party. Then these per-party ranks are used jointly to define a multi-party dominance relationship. This allows selecting or ranking individuals in a way that reflects consensus or trade‐offs among all parties.
Decomposition-based methods leverage frameworks like the multi-objective evolutionary algorithm based on decomposition (MOEA/D) \cite{zhang2007moea}. Chang \emph{et al.} \cite{chang2022multiparty} proposed a party-by-party optimization strategy combined with MOEA/D, further refining the approach for solving bi-party multi-objective optimal power flow problems \cite{chang2023biparty}. Indicator-based algorithms, such as those proposed by Song \emph{et al.} \cite{song2024indicator}, integrate indicator-based multi-objective optimization techniques like SMS-EMOA \cite{beume2007sms} with multi-party non-dominated sorting. Overall, these studies illustrate the development of MPMOEAs, although they remain primarily empirical in nature. However, this also highlights the need for supplementary theoretical analysis.

Experimental results provide empirical support for the favorable performance of MPMOEAs, particularly for MPMOPs that involve common solutions (i.e., the intersection of the parties' Pareto-optimal solution sets). To the best of our knowledge, despite significant progress in evolutionary multi-objective optimization \cite{qian2022result, zheng2023mathematical, zheng2023runtime, opris2024runtime, ren2024first}, no theoretical analysis currently offers performance guarantees for evolutionary multi-party multi-objective optimization (EMPMO). 
This raises several key questions: Can traditional MOEAs effectively solve MPMOPs? Are their theoretical guarantees applicable in multi-party contexts? Furthermore, how can a simple and general framework for MPMOPs be designed to facilitate the theoretical analysis of evolutionary algorithms' approximation performance? It is worth noting that for MPMOPs with significant conflicts among parties, the set of common Pareto solutions is often empty, resulting in no common solution. Moreover, there is no clear definition of what constitutes an optimal solution in such cases without a common solution, whereas the case with a common solution is well-defined as the intersection of the individual Pareto fronts of all parties. Therefore, this paper focuses exclusively on scenarios where a common solution exists.

Early practical experience has demonstrated the limitations of MOEAs in addressing MPMOPs \cite{10463192}. Consider a bi-party bi-objective optimization problem where the Pareto sets of the two parties are denoted as \( \Phi_1 \) and \( \Phi_2 \), respectively. The common Pareto set, defined as the intersection of their Pareto-optimal sets, is given by \( \Phi_c = \Phi_1 \cap \Phi_2 \). For the corresponding four-objective optimization problem, that is, without considering multiple attributes, let its Pareto set be denoted as \( \Phi_3 \). Clearly, the cardinalities of these sets satisfy \( |\Phi_3| \geq \max\{|\Phi_1|, |\Phi_2|\} \geq |\Phi_c| \). This inequality highlights that if the multi-party attributes are ignored, the solutions obtained by the algorithm may include many Pareto-optimal solutions that fail to meet the definition of common Pareto optimality in the multi-party context.

Moreover, applying MOEAs to MPMOPs introduces additional computational overhead. For example, the runtime analysis of simple evolutionary multi-objective optimizer (SEMO) on pseudo-Boolean optimization functions such as COCZ (count ones count zeros) typically involves two stages \cite{laumanns2004running, qian2013analysis}. The first stage identifies a single Pareto-optimal solution starting from any arbitrary solution, while the second stage involves discovering the complete Pareto set \( \Phi \) starting from one Pareto-optimal solution. The runtime of the second stage heavily depends on the cardinality of the Pareto set \( |\Phi| \). Moreover, in the theoretical analysis of MOEAs, each stage considers all possible non-dominated solutions to compute transition probabilities \cite{horoba2010exploring}. When multi-party attributes are ignored, the number of non-dominated solutions in the population grows exponentially, potentially worsening the expected runtime of the algorithm significantly.

Based on the definition of multi-party common Pareto-optimal solutions, a straightforward MPMOEAs can be derived from MOEAs. Taking a bi-party multi-objective optimization problem as an example, we first use MOEAs to obtain the individual Pareto solution sets \( \Phi_1' \) and \( \Phi_2' \) for the two parties. The common Pareto solution set is then computed as their intersection \( \Phi_c' = \Phi_1' \cap \Phi_2' \). This approach leverages existing multi-objective optimization theories without introducing additional conceptual frameworks. It is evident that if the complete Pareto solution sets are obtained, i.e., \( \Phi_1' = \Phi_1 \) and \( \Phi_2' = \Phi_2 \), then \( \Phi_c' = \Phi_c \). However, for NP-hard problems, exact solutions cannot be found in polynomial time \cite{koza2003s, benkhelifa2009design, yu2012approximation} unless P=NP. Consequently, this method may result in an empty intersection, making it impossible to identify a solution, or require one party to further compromise on the quality of the solution.

Overall, our contributions are that we propose a series of evolutionary multi-party multi-objective optimizers (EMPMO) for solving MPMOPs (with a focus on the bi-party case for simplicity),  in order to fill the theoretical analysis gap between MOEAs and MPMOEAs. These include both an artificial multi-party multi-objective problem and an NP-hard multi-party multi-objective shortest path problem (MPMOSP).
We first prove that the transitional algorithm \(\text{EMPMO}_{\text{simple}}\) achieves the common Pareto solutions with a runtime bound of \( O(3n^2\log n) \), yet exposes \( (1 + \varepsilon) \)-approximation limitations for NP-hard problems, which is to solve each party’s PS separately and then return the intersection. To overcome this, for the artificial problem, we introduce two EMPMOs: \( \text{EMPMO}_{\text{random}} \) employs an alternating evolution strategy with the runtime bounded by \( O\left(\left(\frac{1}{\varphi} + \frac{1}{1-\varphi} \right) \frac{n^2}{2} \log n \right) \), where $\varphi$ is the probability of choosing the first party, and \( \text{EMPMO}_{\text{payoff}} \) achieves a runtime bound of \( O\left( n \log n \right) \) for this problem.

For the MPMOSP problem, we demonstrate that it satisfies the multi-party optimal substructure property and design \( \text{EMPMO}_{\text{cons}}^{\text{SP}} \), which achieves a \( (1 + \varepsilon) \)-approximation with an expected running time bounded by
\[
O\left(n^4 \cdot \delta^{k-1} \cdot \log\left(n\delta^{k-1}\right) \right).
\]
where $1+\varepsilon$ represent the approximation ratio, $k$ is the maximum number of objectives among all parties, $w^{\max}$ is the maximum weights across all parties, and $\delta=\frac{n \log \left(n w^{\max}\right)}{\log(1+\varepsilon)}$.

Experiments confirm that \( \text{EMPMO}_{\text{random}} \)has near-optimal performance at balanced selection ($\varphi=0.5$), approaching the performance of\( \text{EMPMO}_{\text{payoff}} \). The experiment on a real-world bi-party UAV path planning problem also validate the feasibility of \( \text{EMPMO}_{\text{cons}}^{\text{SP}} \)and show the limitations of \( \text{EMPMO}_{\text{simple}}^{\text{SP}} \) .

The rest of this paper is organized as follows. Section \ref{sec:preliminaries} introduces the preliminaries. Section \ref{sec: theoretical analysis} provides the theoretical analysis for the artificially constructed pseudo-Boolean optimization problem. Section \ref{sec: theoretical analysis2} also presents the theoretical analysis for the bi-party multi-objective shortest path problem. Section \ref{sec:experiments} presents experiments that complement the theoretical analysis. Finally, Section \ref{sec: Conclusion} concludes the paper and discusses future work. 

\section{Preliminaries}
\label{sec:preliminaries}
In this section, we give brief introductions to multi-party multi-objective optimization problems (MPMOPs) and simple evolutionary multi-objective optimizer. In this article, for the pseudo-Boolean optimization problem, we used the number of mutations needed to cover the Pareto front to measure the running time, and for the bi-party multi-objective shortest path problem (BPMOSP), we counted the number of generations.

\subsection{Multi-party Multi-objective Optimization Problems}
In real-world scenarios, many optimization problems involve multi-party games, which each party has its own set of objectives that may conflict with those of others. These problems are known as MPMOPs, and represent an extension of classical MOPs, where a single party simultaneously optimizes multiple objectives. In contrast, MPMOPs consider a setting with multiple parties, each having distinct objectives and priorities. Without loss of generality, a minimization MPMOP can be defined as follows:

\begin{definition}[MPMOP \cite{liu2020evolutionary}] \label{def:MPMOP}
	Let $M$ denote the number of parties. For each party $m = 1, 2, \ldots, M$, let $k_m$ denote the number of objectives they are concerned with. The MPMOP is defined as:
	\begin{equation}\label{eqt:MPMOPs}
		\min_{\mathbf{x} \in X } \mathcal{F}(\mathbf{x}) = \left(F_1(\mathbf{x}), F_2(\mathbf{x}), \ldots, F_M(\mathbf{x})\right),
	\end{equation}
	where the objective functions $F_m(\mathbf{x})$ for each party $m$ are given by:
	\begin{equation}\label{eqt:MOPs}
		F_m(\mathbf{x}) = \left(f_{m1}(\mathbf{x}), f_{m2}(\mathbf{x}), \ldots, f_{mk_m}(\mathbf{x})\right),  m = 1, 2, \ldots, M,
	\end{equation}
	where $X$ is the feasible solution space, $\mathbf{x} \in X $ is the decision vector, bounded by lower and upper limits $x_{\min}$ and $x_{\max}$, and $\mathcal{F}(\mathbf{x}) = \left(F_1(\mathbf{x}), F_2(\mathbf{x}), \ldots, F_M(\mathbf{x})\right)$ is the vector of objective functions from all parties.
\end{definition}

In multi-objective optimization problems (MOPs) of each party, dominance relations (Definition \ref{def:Domination}) for minimization problems are introduced to assess solution quality, forming the foundation for defining Pareto optimality (Definition \ref{def:Pareto}).

\begin{definition}[Domination] \label{def:Domination}
	Let \( F_m = (f_1, f_2, \ldots, f_k) : X \to \mathbb{R}^k \) be the objective vector of any party $m$, where \( X \) is the feasible solution space. For two solutions \( \mathbf{x}, \mathbf{x}' \in X \):
	\begin{enumerate}
		\item \( \mathbf{x} \) weakly dominates \( \mathbf{x}' \) if \( f_j(\mathbf{x}) \leq f_j(\mathbf{x}') \) for all \( j \in \{1, \ldots, k\} \), denoted as \( \mathbf{x} \succeq \mathbf{x}' \).
		\item \( \mathbf{x} \) dominates \( \mathbf{x}' \) if \( \mathbf{x} \) weakly dominates \( \mathbf{x}' \) and \( f_j(\mathbf{x}) < f_j(\mathbf{x}') \) for at least one \( j \), denoted as \( \mathbf{x} \succ \mathbf{x}' \).
	\end{enumerate}
\end{definition}

\begin{definition}[Pareto Optimality] \label{def:Pareto}
	Let \( F_m = (f_1, f_2, \ldots, f_k) : X \to \mathbb{R}^k \) be the objective vector of any party $m$, where \( X \) is the feasible solution space. A solution \( \mathbf{x} \) is Pareto optimal if no other solution \( \mathbf{x}' \in X \) satisfies \( \mathbf{x}' \succ \mathbf{x} \).
	
	A set \( \Phi_m =\{\mathbf{x}\in X\mid\nexists\mathbf{x}^{\prime}\in X\mathrm{~with~}\mathbf{x}^{\prime}\succ\mathbf{x}\} \subset X \) is called a Pareto set(PS), which contains only all the Pareto optimal solutions on this party. $F_m^*=F_m(\Phi_m)$ is the collection of objective values corresponding to \( \Phi_m \) and is called Pareto front(PF).
\end{definition}

Building on these concepts, the notion of common Pareto optimality is introduced for multi-party multi-objective optimization problems (MPMOPs) (Definition \ref{def:CommonPareto}).

\begin{definition}[Common Pareto Optimality \cite{liu2020evolutionary}] \label{def:CommonPareto}
	Let \( \Phi_m \) represents the Pareto set of party \( m \). For all $m \in \{1,2,\cdots,M\}$, the set
	\[
	\Phi = \bigcap_{m=1}^{M} \Phi_m,
	\]
	is referred to as the common Pareto set. Then a solution \( \mathbf{x}^* \in \Phi \subset X \) is called common Pareto optimal.
\end{definition}

A common Pareto optimal solution represents an equilibrium where the objectives of all parties are simultaneously optimized, rendering it inherently acceptable to all without the need for further compromise. Such a solution is universally optimal across all parties, distinguishing it from non-common solutions, which may satisfy only a subset of the parties' objectives and often require additional negotiations or adjustments to achieve broader consensus. It is worth noting that this paper focuses exclusively on MPMOPs in which common Pareto optimal solutions exist.

\subsection{Simple Evolutionary Multi-objective Optimizer}
Simple evolutionary multi-objective optimizer (SEMO) \cite{laumanns2004running} represents the simplest instance of a population-based evolutionary algorithm for multi-objective optimization and is commonly employed in theoretical analyses on multi-objective optimization \cite{friedrich2007approximating, giel2006effect, qian2011analysis, allmendinger2015multiobjective}. SEMO maintains a population of unfixed size to store all solutions that are not dominated by any other solution encountered thus far. The algorithm begins by randomly selecting an initial solution from the decision space to initialize the population. In each iteration, a solution is randomly chosen from the current population to undergo a one-bit mutation. The resulting mutated solution is then compared against all existing solutions in the population. Dominated solutions are subsequently removed to refine and maintain the population's quality.

\begin{algorithm}[!h]
	\caption{Simple Evolutionary Multi-objective Optimizer (SEMO)}
	\label{alg:SEMO}
	\begin{algorithmic}[1]
		\State Choose an individual $\mathbf{x}$ uniformly from $X$;
		\State $P\leftarrow \{ \mathbf{x}\}$;
		\While {termination condition is not met}
		\State Choose a solution $\mathbf{x}$ uniformly at random from $P$;
		\State $\mathbf{x}'$ is generated by one-bit mutation to $\mathbf{x}$;
		\If{$\nexists \mathbf{z}\in P\: \mathrm{ satisfying}\:(\mathbf{z}\succ \mathbf{x}'\lor f(\mathbf{z})=f(\mathbf{x}'))$}
		\State $P\leftarrow(P \setminus \{\mathbf{z}\in P \mid \mathbf{x}'\succ \mathbf{z}\})\cup\{\mathbf{x}'\}$;
		\EndIf
		\EndWhile
	\end{algorithmic}
\end{algorithm}

\section{Runtime Analysis of Evolutionary Algorithm on Artificial Multi-party Multi-objective Problem}
\label{sec: theoretical analysis}

We initiate a theoretical investigation into evolutionary multi-party multi-objective optimizer by employing a pseudo-Boolean function. To the best of our knowledge, this is the first study addressing multi-party multi-objective optimization problems. As the foundation for our analysis, we construct a bi-party bi-objective pseudo-Boolean problem, termed BPAOAZ (bi-party all ones all zeros), tailored to this context. Our work begins by deriving the simple evolutionary multi-party multi-objective optimizer from the well-known SEMO and analyzes its runtime performance. Building on this baseline, we introduce two variants, i.e., random and payoff-based evolutionary multi-party multi-objective optimizer, inspired by the concepts of ``bounded rationality" and ``full rationality" from game theory, and provide their respective runtime complexity analysis in solving the BPAOAZ problem. Finally, we demonstrate a runtime performance comparison between these evolutionary bi-party multi-objective optimizers and SEMO, where BPAOAZ is treated as a standard multi-objective optimization problem by the latter. In these algorithms, the population is explicitly or implicitly divided into several subpopulations, each tasked with exploring the Pareto set of a specific decision-making party. Ultimately, the algorithms identify the common Pareto set through an evolutionary process. Therefore, we refer to these algorithms as evolutionary algorithms \cite{potter1994cooperative, ficici2000game}.

\subsection{Artificial Bi-party Bi-objective Optimization Problem}

Bi-objective pseudo-Boolean optimization problems, such as LOTZ, COCZ, and OneMinMax, are widely used to analyze the performance of evolutionary multi-objective optimizers \cite{laumanns2004running, giel2010effect, zheng2023runtime}. Building upon these foundational bi-objective test cases, we propose an artificial problem referred to as BPAOAZ (bi-party all ones all zeros), designed for two parties, each with two objectives, and ensure the existence of at least one solution in the Pareto optimal set for both parties.

\begin{definition}[BPAOAZ]\label{def: BPAOAZ}
	The pseudo-Boolean function BPAOAZ : $\{0,1\}^n \to \mathbb{N}^2 \times \mathbb{N}^2$ is defined as
	\[	
	\mathrm{BPAOAZ}(\mathbf{x})=\left(\mathrm{AORZ}(\mathbf{x}), \mathrm{AOFZ}(\mathbf{x})\right),
	\]
	where \(\mathbf{x} = (x_1, x_2, \ldots, x_n) \in \{0,1\}^n\), and \(n = 2a\) for some \(a \in \mathbb{N}\). The components \(\mathrm{AORZ}(\mathbf{x})\) and \(\mathrm{AOFZ}(\mathbf{x})\) are given as follows:
	\[
	\mathrm{AORZ}(\mathbf{x}) = \big(f_{11}(\mathbf{x}), f_{12}(\mathbf{x})\big),
	\]
	where
	\[
	f_{11}(\mathbf{x}) = \sum_{i=n/2+1}^n x_i, \quad
	f_{12}(\mathbf{x}) = \sum_{i=1}^{n/2} x_i + \sum_{i=n/2+1}^n (1 - x_i);
	\]
	
	\[
	\mathrm{AOFZ}(\mathbf{x}) = \big(f_{21}(\mathbf{x}), f_{22}(\mathbf{x})\big),
	\]
	where
	\[
	f_{21}(\mathbf{x}) = \sum_{i=1}^{n/2} (1 - x_i) + \sum_{i=n/2+1}^n x_i, \quad
	f_{22}(\mathbf{x}) = \sum_{i=1}^{n/2} x_i.
	\]
\end{definition}

The problem BPAOAZ consists of two bi-objective pseudo-Boolean optimization problems, AORZ (all ones rear zeros) and AOFZ (all ones front zeros), whose decision space cardinality is $2^n$. In AORZ, the first objective is to maximize the number of ones in the second half of a solution, while the second objective is to maximize the sum of the ones in the first half and the zeros in the second half. These objectives are conflicting in the second half of the solution. In AOFZ, the first objective is to maximize the sum of the zeros in the first half and the ones in the second half, and the second objective is to maximize the number of ones in the first half of a solution. These two objectives are conflicting in the first half of the solution. Thus, BPAOAZ involves conflicts between the objectives of the two parties.

The objective space of AORZ can be partitioned into $\frac{n}{2}+1$ subsets $F_{1,i}$ (Fig. \ref{fig:AORZ}), where $i \in \{0,1,...,\frac{n}{2}\}$ represents the number of ones in the first half of the solution. Each subset $F_{1, i}$ contains $\frac{n}{2}+1$ objective vectors of the form $(j, i+\frac{n}{2}-j)$, where $j \in \{0,1,...,\frac{n}{2}\}$ is the number of ones in the second half of the solution. Notably, $F_{1,\frac{n}{2}}=\{(\frac{n}{2},\frac{n}{2}),(\frac{n}{2}-1,\frac{n}{2}+1),...,(0,n)\}$ represents the PF $F_1^{*}$, with cardinality $|F_1^{*}|=|F_{1,\frac{n}{2}}|=\frac{n}{2}+1$. The subdomains $X_{1,i}$ are defined as the sets of all decision vectors mapped to elements of $F_{1,i}$. The Pareto set $X_1^{*}=X_{1, \frac{n}{2}}=\{1^{\frac{n}{2}}a, a\in \{0,1\}^{\frac{n}{2}}\}$ has cardinality $ |X_{1,\frac{n}{2}}| = 2^{\frac{n}{2}}$. The entire decision space $X_1$ contains $2^n$ elements.

\begin{figure}
	\centering
	\small
	\subfloat[AORZ.]{\includegraphics[width=0.45\columnwidth]{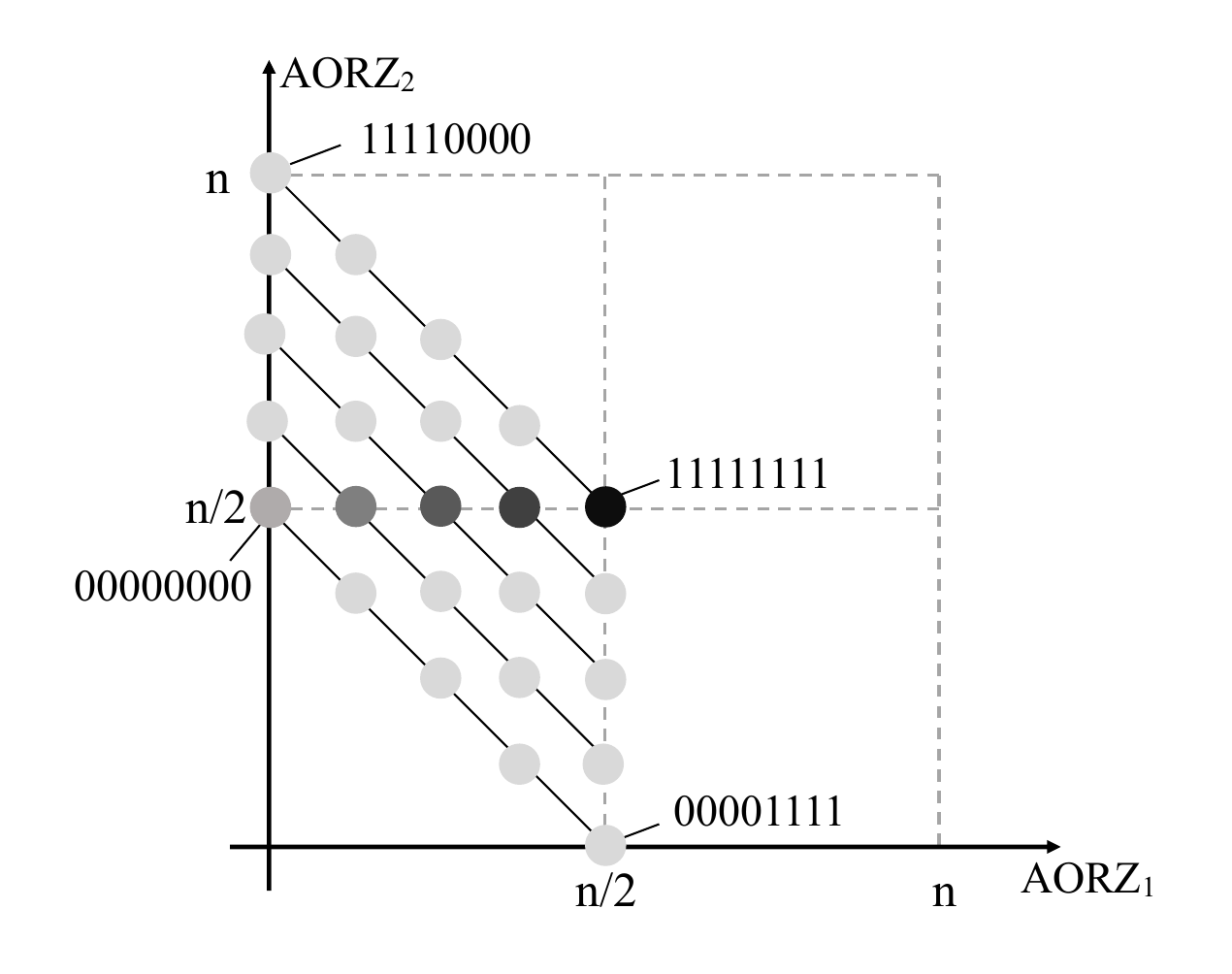}
		\label{fig:AORZ}}
	\subfloat[AOFZ.]{\includegraphics[width=0.45\columnwidth]{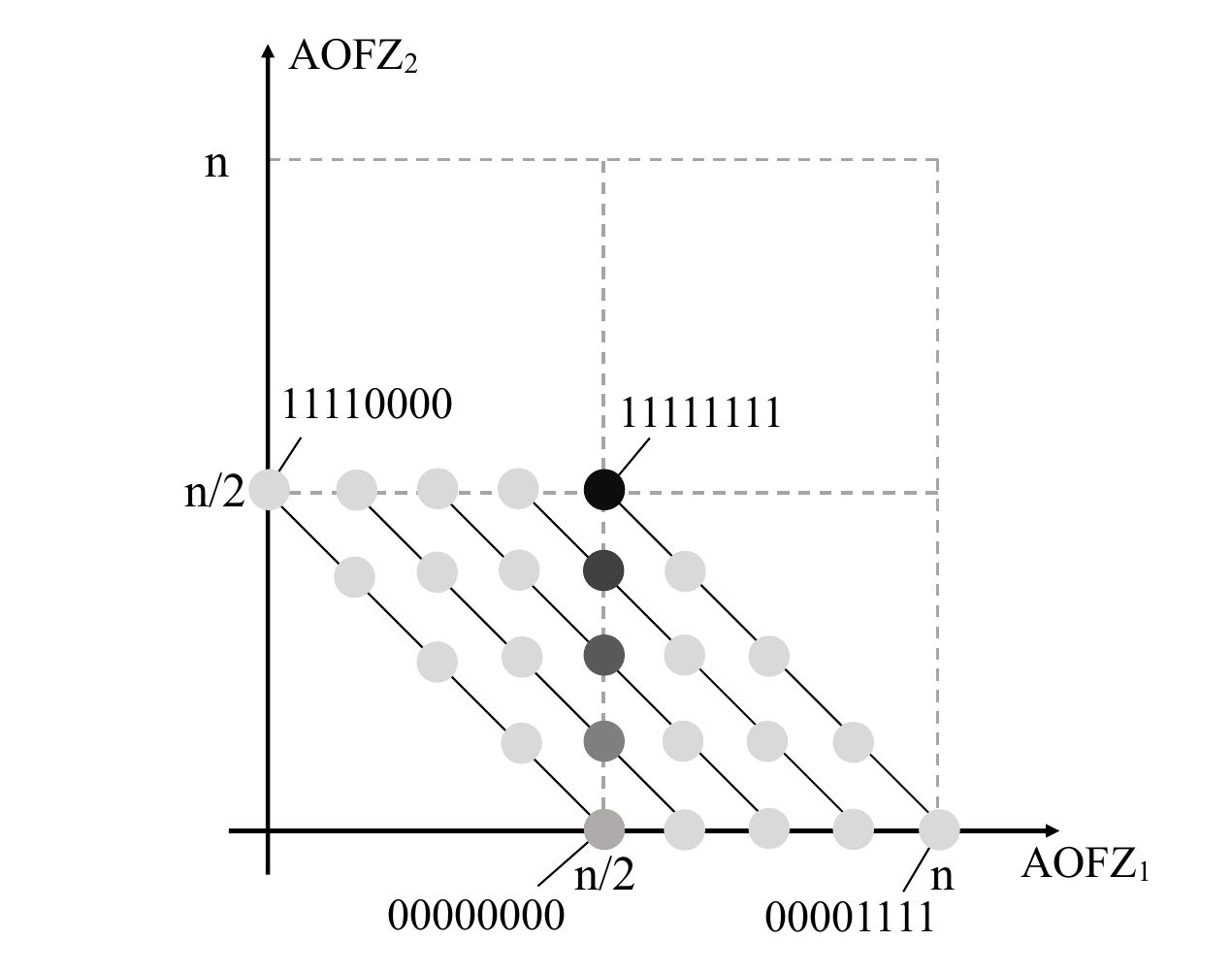}
		\label{fig:AOFZ}}
	\caption{The objective space of the AORZ and AOFZ problem with $n=8$.}
	\label{fig:BPAOAZ}
\end{figure}

The objective space of AOFZ can be partitioned into $\frac{n}{2}+1$ subsets $F_{2, j}$ (Fig. \ref{fig:AOFZ}), where $j \in \{0,1,...,\frac{n}{2}\}$ represents the number of ones in the second half of the solution. Each subset $F_{2, j}$ contains $\frac{n}{2}+1$ objective vectors of the form $(\frac{n}{2}-i+j, i)$, where $i \in \{0,1,...,\frac{n}{2}\}$ is the number of ones in the first half of the solution. Notably, $F_{2, \frac{n}{2}}=\{(\frac{n}{2},\frac{n}{2}),(\frac{n}{2}+1,\frac{n}{2}-1),...,(n,0)\}$ represents the PF $F_2^{*}$, with cardinality $|F_2^{*}|=|F_{\frac{n}{2}}|=\frac{n}{2}+1$. The Pareto set $X_2^{*}= X_{2,\frac{n}{2}}  =\{a1^{\frac{n}{2}},a\in \{0,1\}^{\frac{n}{2}}\}$ has cardinality $ |X_{2,\frac{n}{2}}|= 2^{\frac{n}{2}}$. The entire decision space $X_2$ contains $2^n$ elements.



Consider the BPAOAZ problem. Let us take Figure \ref{fig:AORZ} as an example for analysis, with Figure \ref{fig:AOFZ} exhibiting similar properties. In Figure \ref{fig:AORZ}, the points that are darker than their surroundings represent the common solutions for each layer, denoted as \( X_{1,i} \cap X_{2,i} \), where the solution \(\mathbf{x} = (x_1, x_2, \ldots, x_n) \in \{0,1\}^n\) satisfies $\sum_{i=1}^{n/2} x_i=\sum_{i=n/2+1}^n x_i$. Due to the dominance relationship between \( F_{1,i} \) and \( F_{2,i} \) within each layer, the common solutions at a higher layer dominate those at a lower layer for both parties. Specifically, among the five darker points in Figure \ref{fig:AORZ}, the darker points dominate the lighter points for both parties. The darkest point in the figure represents the common Pareto optimal solution. Consequently, there exists only one common Pareto optimal solution for BPAOAZ, which is \( 1^n \). The corresponding common PF is unique and is given by \( F^* = \left\{ \left( \frac{n}{2}, \frac{n}{2} \right) \right\} \).

\subsection{Runtime Analysis of Evolutionary Multi-party Multi-objective Optimizer}\label{22}

Based on Definition \ref{def:CommonPareto} regarding the common Pareto set in MPMOPs, a straightforward approach can be derived from SEMO for solving such problems, referred to as the simple evolutionary multi-party multi-objective optimizer ($\text{EMPMO}_{\text{simple}}$), which is outlined in Algorithm \ref{alg:MPCOEAs}. The $\text{EMPMO}_{\text{simple}}$ operates by independently identifying the Pareto sets for each party and subsequently determining their intersection. Its feasibility depends on two critical assumptions: 1) the presence of at least one solution common to the Pareto optimal sets of both parties and 2) the ability to obtain the complete Pareto set for each evolutionary multi-objective optimizer.

\begin{algorithm}[!h]
	\caption{ Simple Evolutionary Multi-party Multi-objective Optimizer ($\text{EMPMO}_{\text{simple}}$) }
	\label{alg:MPCOEAs}
	\begin{algorithmic}[1]
		\State Randomly select an individual $\mathbf{x} \in X$;
		\State $\Phi \gets \{\mathbf{x}\}$;
		\For {$m \gets 1$ \textbf{to} M}
		\State Set initial population $P_m \leftarrow \{ \mathbf{x}\}$ for each party;
		\EndFor
		\While {termination condition is not met}
		\For {$m \gets 1$ \textbf{to} $M$}
		\State Randomly select an individual $\mathbf{x} \in P_m$;
		\State Apply one-bit mutation on $\mathbf{x}$ to generate $\mathbf{x}'$;
		\If{$\nexists \, \mathbf{z} \in P_m$ \textbf{such that} $\mathbf{z} \succ_m \mathbf{x}'$ \textbf{or} $F_m(\mathbf{z}) = F_m(\mathbf{x}')$}
		\State $P_m \gets (P_m \setminus \{\mathbf{z} \in P_m \mid \mathbf{x}' \succ_m \mathbf{z}\}) \cup \{\mathbf{x}'\}$
		\If{ $\nexists \, \mathbf{z} \in \Phi$ \textbf{such that} $\forall m'\ \mathbf{z} \succ_{m'} \mathbf{x}'$ \textbf{or} $F_{m'}(\mathbf{z}) = F_{m'}(\mathbf{x}')$  }
		\State $\Phi \gets (\Phi \setminus \{ \mathbf{z} \in \Phi \mid \exists m''\ \mathbf{x}' \succ_{m''} \mathbf{z}\}) \cup \{\mathbf{x}'\}$
		\EndIf
		\EndIf
		\EndFor
		\EndWhile
	\end{algorithmic}
\end{algorithm}

The \( \text{EMPMO}_{\text{simple}} \) algorithm begins by randomly initializing the population for two parties. Initially, the populations of both parties are identical and consist of a single individual. Subsequently, each party employs one-bit mutation to iteratively search for non-dominated solutions, which are stored in their respective populations. Simultaneously, non-dominated solutions common to both parties are identified and stored. $\mathbf{z} \succ_m \mathbf{x}'$ in the algorithm indicates that $\mathbf{z}$ dominates $\mathbf{x}'$ on the objectives of party $m$, and $\mathbf{z} \succ_{m'} \mathbf{x}'$ in the algorithm indicates that $\mathbf{z}$ dominates $\mathbf{x}'$ on the objectives of party $m'$.

First, we prove the correctness of the algorithm \( \text{EMPMO}_{\text{simple}} \), demonstrating that it can identify the common non-dominated solutions of both parties in each iteration and, ultimately, find the common Pareto optimal solution.

\begin{lemma}\label{lemma_Phi}
	Consider an algorithm that iteratively constructs the set $\Phi$ and the populations $P_m,\, m \in \{1, \dots, M\} $ for each party through a sequence of mutation and selection steps, satisfying the following properties:
	
	\begin{enumerate}
		\item A new decision vector is added to $P_m$ if, and only if it is not weakly dominated by any member of $P_m$, and it is added to $\Phi$ if, and only if it is not weakly dominated by any member of $\Phi$ across all parties.
		\item A decision vector is removed from $P_m$ if and only if, a dominating vector is added to $P_m$, and it is removed from $\Phi$ if and only if, a dominating vector is added to $\Phi$ for at least one party.
	\end{enumerate}
	
	Then, the set $\Phi$ is the common non-dominated solution set of all parties, and $\Phi$ is the common Pareto optimal solution set if each population $P_m$ is the Pareto optimal solution set $PS_m$ for party $m$.
\end{lemma}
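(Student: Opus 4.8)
The plan is to establish two invariants maintained across all iterations, and then combine them with the termination hypothesis. The first invariant is that $\Phi$ at every step consists only of decision vectors that lie in each $P_m$ simultaneously — i.e., $\Phi \subseteq \bigcap_{m=1}^M P_m$ — and that $\Phi$ contains no pair of vectors where one is weakly dominated by the other for every party. The second invariant is that every vector which has ever entered all of the $P_m$'s and is non-dominated across all parties is represented in $\Phi$ (either itself or by a vector with the same objective values in every party). I would prove both invariants by induction on the number of mutation-selection steps, using properties (1) and (2) from the statement to control exactly when vectors are inserted into or deleted from $\Phi$ and each $P_m$.

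First I would handle the base case: initialization sets $\Phi = \{\mathbf{x}\}$ and $P_m = \{\mathbf{x}\}$ for all $m$, so both invariants hold trivially. For the inductive step, consider a step that produces an offspring $\mathbf{x}'$. By property (1), $\mathbf{x}'$ is added to $P_m$ iff it is not weakly dominated in $P_m$, and to $\Phi$ iff it is not weakly dominated in $\Phi$ across all parties; by property (2), a vector leaves $P_m$ only when a dominator is inserted, and leaves $\Phi$ only when a dominator (for at least one party) is inserted. I would argue that these rules preserve ``$\Phi$ is an antichain under the all-parties weak-dominance preorder'' and that no vector is ever spuriously removed from $\Phi$ while still being globally non-dominated and present in all $P_m$. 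The key consistency point to check is that whenever $\mathbf{x}'$ is globally non-dominated \emph{and} enters every $P_m$, it also enters $\Phi$ — this follows because ``not weakly dominated in $\Phi$'' is implied once we know $\Phi$ only ever held globally non-dominated vectors (first invariant), so any member of $\Phi$ that weakly dominated $\mathbf{x}'$ in all parties would contradict $\mathbf{x}'$ being globally non-dominated unless it has identical objective values, in which case the representative clause applies.

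Once the invariants are in place, the conclusion is short. Upon termination, suppose each $P_m$ equals the Pareto-optimal set $PS_m$ of party $m$. Then by the first invariant $\Phi \subseteq \bigcap_m PS_m$, which is exactly the common Pareto set of Definition~\ref{def:CommonPareto}. Conversely, take any common Pareto-optimal $\mathbf{x}^* \in \bigcap_m PS_m$; since $\mathbf{x}^*$ is Pareto optimal for every party it is globally non-dominated, and since each $P_m = PS_m$ it lies in every $P_m$, so by the second invariant $\Phi$ contains $\mathbf{x}^*$ or a vector with the same objective values in all parties — and because the common Pareto front here is such that equal objective values across all parties identify the solution set up to the granularity the algorithm tracks, $\Phi$ represents the full common Pareto set. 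For the weaker ``common non-dominated'' claim (without assuming $P_m = PS_m$), the same two invariants give that $\Phi$ is precisely the set of vectors found so far that are non-dominated for all parties and simultaneously retained in every $P_m$, which is the intended meaning of the common non-dominated solution set.

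The main obstacle I anticipate is the bookkeeping around \emph{duplicate objective values} and the asymmetry between the deletion rules: a vector is removed from $P_m$ only on a party-$m$ dominator, but removed from $\Phi$ on a dominator for \emph{any single} party. I need to verify that a vector cannot be dropped from $\Phi$ while still belonging to every $P_m$ and being globally non-dominated — i.e., that the ``$\exists m''$'' deletion trigger for $\Phi$ never fires on a vector that the corresponding $P_{m''}$ would have kept. This requires carefully matching the condition ``$\mathbf{x}' \succ_{m''} \mathbf{z}$'' used for $\Phi$-deletion against the condition used for $P_{m''}$-deletion and confirming $\mathbf{x}'$ actually enters $P_{m''}$ in that case; I expect this to go through because strict domination in party $m''$ forces $\mathbf{x}'$ to be non-weakly-dominated in $P_{m''}$ at that moment, but it is the step most prone to a subtle gap.
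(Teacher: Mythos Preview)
Your approach is essentially the paper's: show the double inclusion $\Phi \subseteq \bigcap_m P_m$ and $\bigcap_m P_m \subseteq \Phi$, then specialize to $P_m = PS_m$; the paper argues these directly rather than by an explicit induction on steps. One point worth noting: for the forward inclusion the paper leans on the nested structure of Algorithm~\ref{alg:MPCOEAs} --- a candidate is tested for $\Phi$ only \emph{after} it has been accepted into the current $P_m$ --- which is slightly more than properties (1)--(2) literally provide, so your first invariant will need that structural fact as well. The deletion-rule asymmetry you flag is exactly the step the paper's reverse-inclusion contradiction argument treats most loosely (it only considers the case where a vector was blocked at entry to $\Phi$, not evicted later), so your caution there is well placed.
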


\begin{proof}
	First, assume a vector \(\mathbf{x}\) is added to the set \(\Phi\). By Property 1, \(\mathbf{x}\) is not weakly dominated by any element in \(\Phi\) across all parties, and by Property 2, any element dominated by \(\mathbf{x}\) is removed from \(\Phi\). Thus, \(\Phi\) contains only \(\mathbf{x}\) and non-dominated solutions with respect to each other. According to Algorithm 2, only when $x$ is a non-dominated solution in $P_m$, that is, a local non-dominated solution, can it be determined whether it can be added to $\Phi$. Therefore, these solutions are also not weakly dominated by any member of \(P_m\). Hence, every element \(\mathbf{x}\) in \(\Phi\) is a non-dominated solution in each party, i.e., \(\mathbf{x} \in P_m\) for all \(m\). Therefore, we have $\Phi \subseteq \bigcap_{m=1}^M P_m$.
	
	Then, assume there exists a vector \(\mathbf{x}'\) such that \(\mathbf{x}' \in \bigcap_{m=1}^M P_m\) but \(\mathbf{x}' \notin \Phi\). Then $x'$ is a non-dominated solution in each $P_m$. When $P_m=PS_m$, $x'$ is a global non-dominated solution. Since \(\mathbf{x}' \in P_m\), it is not weakly dominated by any vector in any population \(P_m\). However, if \(\mathbf{x}' \notin \Phi\), this means there exists a vector \(\mathbf{z}\) that weakly dominates \(\mathbf{x}'\) and has been added to \(\Phi\). By Property 1, if \(\mathbf{z}\) weakly dominates \(\mathbf{x}'\), then \(\mathbf{x}'\) cannot be a non-dominated solution for some party \(m\), leading to a contradiction. Thus, we have $\bigcap_{m=1}^M P_m \subseteq \Phi$.
	
	From steps 1 and 2, we conclude $\Phi = \bigcap_{m=1}^M P_m$. Finally, if each \(P_m = PS_m\), then $\Phi = \bigcap_{m=1}^M P_m$ represents the common Pareto set.
\end{proof}

The time complexity of \( \text{EMPMO}_{\text{simple}} \) for solving BPAOAZ primarily arises from independently searching for the complete Pareto optimal sets of AORZ and AOFZ. Finding their non-dominated solutions' intersection adds only a constant-time operation in each iteration. Therefore, the time complexity can be analyzed by referencing \cite{laumanns2004running}, which provides a framework for calculating the time complexity of SEMO when solving AORZ and AOFZ, involving two key lemmas, i.e., Lemmas \ref{lemma: GUBI} and \ref{lemma: GUBII}. It is important to note that the running time of the algorithms discussed in this paper is defined as the number of fitness evaluations required to include all Pareto optimal solutions in the population for the first time.

\begin{lemma}[General Upper Bound I \cite{laumanns2004running}]\label{lemma: GUBI}
	Consider an algorithm that iteratively updates a population \( P \) via a sequence of mutation and selection steps, with the following properties:
	\begin{enumerate}
		\item For each \( y \in F \setminus F^* \), the probability that the mutation operator, when applied to any \( \mathbf{x} \in X \) with \( f(\mathbf{x}) = y \), produces a dominating vector \( \mathbf{x}' \succ \mathbf{x} \) is at least \( p(y) > 0 \), where $F$ is the objective space and $F^*$ represents the PF.
		\item A new decision vector is added to \( P \) if and only if it is not weakly dominated by any member of \( P \).
		\item A decision vector is removed from \( P \) if and only if a dominating vector is added to \( P \).
	\end{enumerate}
	
	Then, the expected number of applications of the mutation operator to non-Pareto-optimal vectors is bounded above by \( \sum_{y \in F \setminus F^*} {p(y)}^{-1} \).	
\end{lemma}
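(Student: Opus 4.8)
The plan is to track, for each non-Pareto objective value $y \in F \setminus F^*$, the random variable $T_y$ counting how many times in the whole run the mutation operator is applied to a decision vector whose objective value equals $y$, and to show $\mathbb{E}[T_y] \le p(y)^{-1}$. Since every application of mutation to a non-Pareto-optimal vector is counted by exactly one $T_y$, linearity of expectation then gives the stated bound $\sum_{y \in F \setminus F^*} p(y)^{-1}$.

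First I would establish a monotonicity invariant for the population: once $P$ contains a vector whose objective value dominates $y$, it contains such a vector for the rest of the run. Indeed, by Property~3 a vector leaves $P$ only when a vector dominating it is inserted, and by transitivity of the dominance relation that replacement again dominates $y$; an induction over the iterations gives the invariant. As a consequence of Property~2, from that moment on no vector with objective value exactly $y$ can ever be (re)inserted into $P$, since any dominator of $y$ weakly dominates every vector with value $y$. In particular, once a dominator of $y$ appears in $P$, the mutation operator is never again applied to a vector with objective value $y$.

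Next, fix $y \in F \setminus F^*$ and consider any iteration at which mutation is applied to a vector $\mathbf{x}$ with $f(\mathbf{x}) = y$ (so in particular $\mathbf{x} \in P$ at that iteration). By Property~1, with probability at least $p(y)$ the offspring $\mathbf{x}'$ satisfies $\mathbf{x}' \succ \mathbf{x}$, i.e.\ $f(\mathbf{x}')$ dominates $y$. On this event I claim $\mathbf{x}'$ is actually inserted into $P$: if some $\mathbf{z} \in P$ weakly dominated $\mathbf{x}'$, then $f(\mathbf{z})$ would dominate $y$ by transitivity, contradicting the invariant above since $P$ still contains the selected vector $\mathbf{x}$ with value $y$ at the start of this iteration. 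Hence $\mathbf{x}'$ enters $P$, a dominator of $y$ is now present, and by the invariant this iteration is the last one at which mutation is applied to a value-$y$ vector.

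This argument shows that, conditioned on any history up to the current iteration, each mutation applied to a value-$y$ vector is the final such mutation with probability at least $p(y)$; therefore $\Pr[T_y \ge j] \le (1-p(y))^{j-1}$ for every $j \ge 1$, whence $\mathbb{E}[T_y] = \sum_{j \ge 1} \Pr[T_y \ge j] \le p(y)^{-1}$. Summing over $y \in F \setminus F^*$ completes the proof. The main obstacle is the insertion step of the third paragraph: one must rule out the genuine improvement $\mathbf{x}'$ being rejected or immediately deleted, which is precisely where the monotonicity invariant is combined with the observation that a value-$y$ vector is still present in $P$ at that iteration; the remainder is a routine geometric tail estimate.
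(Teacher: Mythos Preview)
Your argument is correct and is essentially the standard proof of this lemma from Laumanns et al.\ (2004): fix a non-Pareto objective value $y$, use the population update rules to show that once a dominator of $y$ enters $P$ no vector with value $y$ can remain or re-enter, and then bound the number of value-$y$ mutations by a geometric random variable with success probability $p(y)$. The one delicate step---that a strictly improving offspring $\mathbf{x}'$ cannot be rejected---you handle correctly via the contrapositive of the monotonicity invariant.

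Note, however, that the present paper does not give its own proof of this lemma; it is stated and cited from \cite{laumanns2004running} without argument. So there is no paper-specific proof to compare against beyond observing that your approach matches the original source.
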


%
%

\begin{lemma}[General Upper Bound II \cite{laumanns2004running}]\label{lemma: GUBII}
	Let the dominated part of the decision space \( X \setminus X^* \) be partitioned into \( k \) disjoint sets \( X_1, X_2, \ldots, X_k \) such that \( \bigcup_{i=1}^k X_i = X \setminus X^* \) and \( X_i \cap X_j = \emptyset \) for \( i \neq j \). Define the dominance relation on sets as:
	\[
	X_j \succ X_i \iff \forall (a, b) \in X_j \times X_i: a \succ b
	\]
	
	Let \( d(X_i) := \{ X_j \mid X_j \succ X_i \} \) denote the set of all sets that dominate \( X_i \). If the algorithm satisfies the same properties as in Lemma \ref{lemma: GUBI}, and if \( p(X_i) \) is a lower bound for the probability that a mutation applied to an individual \( \mathbf{x} \in X_i \) generates a new individual \( \mathbf{x}' \) in a dominating decision space subset, i.e.,
	\[
	0 < p(X_i) \leq \min_{\mathbf{x} \in X_i} \Pr(\mathbf{x}' \in d(X_i) \mid \mathbf{x} \in X_i),
	\]
	then the expected number of times the mutation operator is applied to non-Pareto-optimal decision vectors is bounded above by $ \sum_{i=1}^{k} {p(X_i)}^{-1}$.
\end{lemma}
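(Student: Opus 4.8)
The plan is to mirror the proof of Lemma~\ref{lemma: GUBI}, but to carry out the bookkeeping on the partition $X_1,\dots,X_k$ of the dominated decision space rather than on objective values. The key structural fact I would establish first is an \emph{absorption} property for each block: once a mutation applied to a population member $\mathbf{x}\in X_i$ produces an offspring $\mathbf{x}'$ lying in some dominating block (i.e.\ $\mathbf{x}'\in X_j$ for some $X_j\in d(X_i)$), no population member ever lies in $X_i$ again. To prove this I would first record the routine invariant that $P$ is always an antichain under $\succ$ (immediate from the insertion and deletion conditions of Lemma~\ref{lemma: GUBI}). Then $X_j\succ X_i$ forces $\mathbf{x}'\succ a$ for every $a\in X_i$, so $\mathbf{x}'$ cannot be weakly dominated by any current population member --- otherwise that member would strictly dominate the parent $\mathbf{x}\in X_i$, violating the antichain property --- hence $\mathbf{x}'$ is inserted and $P$ then contains a vector dominating all of $X_i$. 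That last condition is itself invariant: such a vector is removed only when a strictly dominating vector is inserted, which by transitivity of $\succ$ again dominates all of $X_i$ (and therefore cannot itself lie in $X_i$, since no vector strictly dominates itself). From that moment every candidate offspring in $X_i$ is weakly dominated by a population member and rejected, proving the claim.

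Next I would set up the counting. For each $i$, let $T_i$ denote the number of iterations in which the mutation operator is applied to a population member belonging to $X_i$. Since $\{X_i\}_{i=1}^k$ partitions $X\setminus X^*$, the quantity to be bounded --- the expected number of mutation applications to non-Pareto-optimal decision vectors --- equals $\mathbb{E}\!\left[\sum_{i=1}^k T_i\right]=\sum_{i=1}^k\mathbb{E}[T_i]$ by linearity of expectation, so it suffices to show $\mathbb{E}[T_i]\le p(X_i)^{-1}$ for every $i$.

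To bound $\mathbb{E}[T_i]$ I would look only at the iterations counted by $T_i$ and call such an iteration a \emph{success} if its offspring lands in a dominating block of $X_i$. By the hypothesis $0<p(X_i)\le\min_{\mathbf{x}\in X_i}\Pr(\mathbf{x}'\in d(X_i)\mid \mathbf{x}\in X_i)$, each such iteration is a success with conditional probability at least $p(X_i)$ regardless of the history, and by the absorption property the first success ends the sequence of $X_i$-mutations forever. Hence $T_i$ is stochastically dominated by a geometric random variable with parameter $p(X_i)$, giving $\Pr[T_i\ge t]\le(1-p(X_i))^{t-1}$ and therefore $\mathbb{E}[T_i]\le\sum_{t\ge1}(1-p(X_i))^{t-1}=p(X_i)^{-1}$. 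Summing over $i$ yields the claimed bound $\sum_{i=1}^{k} p(X_i)^{-1}$.

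The main obstacle is the absorption property: one has to argue carefully that an offspring in a dominating block genuinely ``closes off'' $X_i$ for the rest of the run, which comes down to checking two invariants (the population is an antichain, and once some population member dominates all of $X_i$ this remains true) and to using transitivity of $\succ$ together with the fact that no vector strictly dominates itself. Once that is in place, the probabilistic part is exactly the geometric waiting-time argument already used for Lemma~\ref{lemma: GUBI}.
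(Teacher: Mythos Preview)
Your proof is correct and follows the standard argument for this result. Note, however, that the paper does not supply its own proof of this lemma: it is stated with a citation to \cite{laumanns2004running} and used as a black box, so there is no in-paper proof to compare against. Your absorption-plus-geometric-waiting-time argument is precisely the approach of the original Laumanns et al.\ reference, and the invariants you identify (the population is an antichain; once some member dominates all of $X_i$ this persists) are exactly what is needed to make the bookkeeping go through.
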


%
%

Based on the three lemmas above, we can derive the expected running time of  \( \text{EMPMO}_{\text{simple}} \) for solving the BPAOAZ problem.

\begin{theorem}
	The expected running time of $\text{EMPMO}_{\text{simple}}$ applied to BPAOAZ is bounded by $O(3n^{2}\operatorname{log}n).$
\end{theorem}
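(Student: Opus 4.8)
The plan is to reduce the runtime of $\text{EMPMO}_{\text{simple}}$ on BPAOAZ to the known analysis of SEMO on single bi-objective problems, exploiting the fact that the algorithm runs two essentially independent SEMO-like processes (one for AORZ on population $P_1$, one for AOFZ on population $P_2$) and that maintaining $\Phi$ costs only $O(1)$ extra per iteration. By Lemma~\ref{lemma_Phi} we already know the algorithm is correct: once $P_1 = PS_1$ and $P_2 = PS_2$, the set $\Phi$ equals the common Pareto set, which for BPAOAZ is the singleton $\{1^n\}$. So it suffices to bound the expected number of fitness evaluations until both $P_1$ and $P_2$ have covered their respective Pareto fronts $F_1^*$ and $F_2^*$.

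For each party I would split the analysis into the classical two phases. \textbf{Phase 1}: starting from an arbitrary individual, reach some Pareto-optimal point for that party. For AORZ this means driving the first-half bit count up to $n/2$ (i.e.\ reaching $X_{1,n/2}$); using Lemma~\ref{lemma: GUBII} with the layers $X_{1,i}$ ordered by the number of ones in the first half, each improving one-bit flip among the $\le n/2$ zero-bits in the first half has probability $\ge (n/2 - i)/n$, and the flip must be selected from a population of size at most $n/2+1$, giving a per-layer bound of $O(n)$ and a Phase-1 total of $O(n\log n)$ — the harmonic sum $\sum_{i} \frac{n}{n/2-i}$ times the population-selection factor. \textbf{Phase 2}: from one Pareto-optimal point spread along the front $F_1^* = F_{1,n/2}$, which has $|F_1^*| = n/2+1$ points; a one-bit flip of the appropriate sign in the second half produces an adjacent, not-yet-covered front point with probability $\ge \Omega(1/n)$ after accounting for selection from the $\le n/2+1$ members, and covering all $n/2+1$ points costs $O(n \cdot n \cdot \log n) = O(n^2 \log n)$ by a coupon-collector argument over the front. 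The identical bound holds for AOFZ by symmetry. Since both parties' processes run in the same loop iteration and each iteration performs $M=2$ fitness evaluations, the two $O(n^2\log n)$ bounds simply add, and maintaining $\Phi$ adds nothing asymptotically; the overall bound is $O(n^2\log n)$, which the authors write as $O(3n^2\log n)$ to keep the constant from the two-phase-two-party accounting visible.

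The main obstacle I anticipate is being careful about the population-size factor in Phase~2 and making sure the coupon-collector step is correctly set up: when a party's population already holds several front points, the probability that the next iteration both selects a front endpoint (or a point adjacent to an uncovered one) \emph{and} performs the correct single-bit flip is $\Theta\!\big(\frac{1}{(n/2+1)\cdot n}\big)$, so the expected time to gain one new front point is $O(n^2)$ and summing the harmonic series over the $n/2$ missing points gives the $O(n^2\log n)$, not $O(n^2)$. A secondary subtlety is confirming that the $P_m$-updates here genuinely match the hypotheses of Lemmas~\ref{lemma: GUBI}–\ref{lemma: GUBII} (they do: each party adds $\mathbf{x}'$ only if not weakly dominated within $P_m$ and removes exactly the $P_m$-dominated members), and that the interleaving of the two parties plus the $\Phi$-bookkeeping does not interfere with either party's progress — which is immediate since party $m$'s acceptance test references only $P_m$ and $F_m$. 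Once these points are handled, the theorem follows by adding the four $O(n\log n)$ / $O(n^2\log n)$ contributions.
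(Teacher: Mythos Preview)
Your plan matches the paper's proof: run two independent SEMO-style analyses (one per party), split each into a reach-the-front phase and a cover-the-front phase, and add the bounds; Lemma~\ref{lemma_Phi} handles correctness exactly as you say. Two technical points to fix. First, the coarse layers $X_{1,i}$ you feed into Lemma~\ref{lemma: GUBII} do \emph{not} satisfy the required set-dominance condition (a solution in $X_{1,i+1}$ with few second-half ones need not dominate a solution in $X_{1,i}$ with many), so the lemma gives nothing there; the paper instead partitions by both half-counts into sets $X_{i,j}$, each of which maps to a single objective vector, and then GUBII applies with $p(X_{i,j})=i/n$ to yield $\sum_{j=0}^{n/2}\sum_{i=1}^{n/2} n/i = O(\tfrac{n^2}{2}\log n)$. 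Second, your stated Phase-1 total ``$O(n\log n)$'' is off by the very population-selection factor $n/2+1$ you mention in the same sentence; the correct Phase-1 bound is $O(\tfrac{n^2}{2}\log n)$. Neither slip affects the final $O(3n^2\log n)$ since Phase~2 already dominates, but both should be corrected.
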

\begin{proof}
	From Lemma 1, it follows that when $P_m$ is the Pareto set of party $m$, and $\Phi$ is the common Pareto set, the running time of Algorithm 2 is equivalent to the cumulative running time.
	
	We begin by calculating the running time to find the Pareto optimal solution set for party 1, whose objective function is \( \mathrm{AORZ}(\mathbf{x}) \). Without loss of generality, the process is divided into two stages: 1) from initialization to finding a single solution in the Pareto optimal set; 2) from the first Pareto optimal solution to discovering all solutions in the Pareto optimal set.
	
	First, non-Pareto optimal solutions are grouped as $X_{i,j} := \{ \mathbf{x} \in X \mid f(\mathbf{x}) = (j, n - i - j) \} $, where $ i, j \in \{0, \ldots, n/2\} $, $i$ represents the Hamming distance to the Pareto set, and $j$ is the number of ones in the second half of a solution. By Lemma 2, \( X_{i,j} \) forms a partition of the search space. The total number of mutations for non-Pareto-optimal search points is thus bounded by
	\[
	\sum_{j=0}^{n/2} \sum_{i=1}^{n/2} \frac{n}{i} = O(\frac{n^2}{2} \log n).
	\]
	
	Once a Pareto optimal solution is found, new Pareto optimal solutions can be obtained by flipping bits in the second half of the solution. By expanding from the found Pareto optimal solution toward $j=0$ and toward $j=\frac{n}{2}$, the complete PF can be found. Let $j^*$ be the $f_{11}$-value of the first Pareto optimal solution. Define integers $a_t \le b_t$ such that the population contains Pareto optimal search solutions with an $f_{11}$-value of $i$, for all $i \in \{a_t, \ldots , b_t\}$. Initially, $a_t = b_t = j^*$. 
	
	While $a_t > 0$, another Pareto optimal solution with $f_{11}$-value of $a_{t+1} = a_t-1$ can be obtained by selecting a Pareto optimal solution with $f_{11}$-value of $a_t$ and making an appropriate $1$-bit flip. The probability of that occurring is at least \( \frac{1}{\frac{n}{2}+1} \cdot \frac{a_t}{n} \). Summing up expected waiting time and then the time to successfully extend to $a_t=0$ is 
	\[ \sum_{a_t=1}^{j^*}\frac{n(\frac{n}{2}+1)}{a_t} = O(\frac{n^2}{2} \log j^*).  \]
	In the worst case, $j^*=\frac{n}{2}$, and the total expected waiting time is \( O(\frac{n^2}{2} \log n) \).
	
	While $b_t < \frac{n}{2}$, another Pareto optimal solution with $f_{11}$-value of $b_{t+1} = b_t + 1$ can be obtained by selecting a Pareto optimal solution with $f_{11}$-value of $b_t$ and making an appropriate $0$-bit flip. The probability of that occurring is at least \( \frac{1}{\frac{n}{2}+1} \cdot \frac{ \frac{n}{2} - b_t}{n} \). Summing up expected waiting time and then the time to successfully extend to $b_t = \frac{n}{2}$ is 
	\[\sum_{b_t=j^*}^{\frac{n}{2}-1}\frac{n(\frac{n}{2}+1)}{\frac{n}{2} - b_t} = O(\frac{n^2}{2} \log (\frac{n}{2}-j^*)). \]
	In the worst case, $j^*=0$, and the total expected waiting time is \( O(\frac{n^2}{2} \log n) \).

	
	So the total time until both goals have been reached is \( O(n^2 \log n) \). Combining the two stages, the expected time for solving \( \mathrm{AORZ} \) is \( O(\frac{3}{2}n^2\log n) \). By Lemma 1, the total running time of Algorithm 2 for solving \( \mathrm{BPAOAZ} \) is $O(3n^2\log n) $
\end{proof}

The $\text{EMPMO}_{\text{simple}}$ is an idealized algorithm for solving MPMOPs, relying on the assumption that sufficient prior knowledge is available to ensure the algorithm operates correctly. To address this limitation, we propose a more general framework, referred to as  the random evolutionary multi-party multi-objective optimizer ($\text{EMPMO}_{\text{random}}$), as shown in Algorithm \ref{alg:MPCOEAr}. 
	Unlike $\text{EMPMO}_{\text{simple}}$, $\text{EMPMO}_{\text{random}}$ maintains a single population during its execution period, preserving the negotiation solutions of both parties. In each iteration, a new solution is generated by one party using one-bit mutation. Subsequently, all dominated solutions from that party are removed from the population.

	\begin{algorithm}[!h]
		\caption{Random Evolutionary Multi-party Multi-objective Optimizer ($\text{EMPMO}_{\text{random}}$) } 
		\label{alg:MPCOEAr}
		\begin{algorithmic}[1]
			\State Randomly select an individual $\mathbf{x} \in X$ ;
			\State $P\leftarrow \{ \mathbf{x}\}$;
			\While {termination condition is not met}
			\State Randomly select an individual $\mathbf{x} \in P$;
			\State Randomly select a party $m$;
			\State Apply one-bit mutation to $\mathbf{x}$ to generate $\mathbf{x}'$;
			\If{$\nexists \mathbf{z} \in P \: \mathrm{satisfying}  \:(\mathbf{z} \succ_m \mathbf{x}'\lor F_m(\mathbf{z})=F_m(\mathbf{x}'))$}
			\State $P\leftarrow(P \setminus \{\mathbf{z}\in P \mid \mathbf{x}'\succ_m \mathbf{z}\})\cup\{\mathbf{x}'\}$;
			\EndIf
			\State $P\leftarrow\{\mathbf{z}\in P\mid\nexists \mathbf{z}^{\prime}\in P \setminus \{\mathbf{z}\}  ,\mathbf{z}^{\prime}\succeq_m \mathbf{z}\}$; 
			\EndWhile
		\end{algorithmic}
	\end{algorithm}
	
	For BPAOAZ, we assume that each party is selected with probabilities $\varphi$ and $1-\varphi$, respectively, where $\varphi$ is the probability of choosing the first party. Based on this assumption, we can derive the expected running time of $\text{EMPMO}_{\text{random}}$ for solving the BPAOAZ problem.
	
	\begin{theorem}\label{the:random}
		The expected running time of $\text{EMPMO}_{\text{random}}$ applied to BPAOAZ is bounded by $O\left(\left(\frac{1}{\varphi} + \frac{1}{1-\varphi} \right) \frac{n^{2}}{2} \log n \right).$
	\end{theorem}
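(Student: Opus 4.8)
The plan is to mirror the two-stage analysis used for $\text{EMPMO}_{\text{simple}}$, but to track how the alternating (randomized) choice of party slows down each stage by a factor depending on $\varphi$. Since $\text{EMPMO}_{\text{random}}$ keeps a single population $P$ and in each iteration only ``works on'' a randomly chosen party $m$ (mutating with respect to party $m$'s preferences and pruning $m$-dominated points), I would argue that the progress toward party~1's Pareto set happens only in the iterations where party~1 is chosen (probability $\varphi$), and symmetrically for party~2 (probability $1-\varphi$). The key structural fact to establish first is an \emph{invariant}: once a solution that is Pareto-optimal for AORZ (i.e.\ has $\frac n2$ ones in the first half) enters $P$, it is never removed by a party-1 step, and — crucially — it is never removed by a party-2 step either, because a party-2 improvement step only deletes points $m$-dominated by the newcomer, and one can check against Definition~\ref{def: BPAOAZ} that a point with all ones in the first half cannot be strictly dominated in the AOFZ objectives by any point (its $f_{22}=\frac n2$ is already maximal). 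The analogous statement holds with the roles of the halves swapped. This monotonicity is what lets the two stages be analyzed essentially independently and then summed.

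Next I would run the stage-1/stage-2 decomposition for party~1 exactly as in the $\text{EMPMO}_{\text{simple}}$ proof (Lemmas~\ref{lemma: GUBI} and \ref{lemma: GUBII} with the partition $X_{i,j}=\{\mathbf x: f(\mathbf x)=(j,n-i-j)\}$), but condition on party-1 iterations. A mutation step that, for the unconditional SEMO-style process, succeeds with probability at least $p$, now succeeds with probability at least $\varphi\cdot p$ in a given iteration of $\text{EMPMO}_{\text{random}}$, since with probability $\varphi$ we pick party~1 and then with probability $\ge p$ the right bit flips (and the relevant parent is in $P$ by the invariant). Hence every waiting time in the party-1 analysis is multiplied by $1/\varphi$, giving an $O\!\big(\frac1\varphi\cdot\frac{n^2}{2}\log n\big)$ bound for covering $F_1^\ast$; symmetrically covering $F_2^\ast$ costs $O\!\big(\frac1{1-\varphi}\cdot\frac{n^2}{2}\log n\big)$. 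Adding the two (the second stage's $O(n^2\log n)$ gets absorbed into the same order) yields the claimed $O\!\big((\frac1\varphi+\frac1{1-\varphi})\frac{n^2}{2}\log n\big)$. I would note that here ``running time'' counts fitness evaluations until $P$ first contains all common Pareto-optimal solutions — but since the common Pareto front of BPAOAZ is the single point $1^n$, which is simultaneously Pareto-optimal for both parties, $P$ contains it as soon as \emph{either} party's search has reached the boundary appropriately; strictly it suffices that $P$ contain $1^n$, and $1^n$ is obtained as soon as party~1's stage-1 search produces the all-ones-first-half block and party~2's search produces the all-ones-second-half block, i.e.\ within the bound above.

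The main obstacle I anticipate is the interference between the two parties' prunings: a step on party~2 can legitimately delete a solution that party~1's search would have liked to keep as an intermediate stepping stone (one that is not yet Pareto-optimal for party~1). To handle this I would not claim that intermediate party-1 points survive; instead I would show that the party-1 \emph{potential} — say, the maximum number of first-half ones present among points in $P$ — is non-decreasing under party-2 steps, because a party-2 pruning removes $\mathbf z$ only when some $\mathbf z'$ with $\mathbf z'\succeq_2\mathbf z$ and $\mathbf z'\succ_2\mathbf z$-witness is present, and such $\mathbf z'$ has at least as many first-half ones as $\mathbf z$ (checking $f_{21},f_{22}$: dominating in AOFZ cannot decrease the first-half-ones count $f_{22}$). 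With this monotone potential in hand, the drift argument for party~1 goes through using only party-1 iterations, and the contribution of party-2 iterations is simply ``harmless'', which is exactly what makes the bound additive. A secondary technical point is bookkeeping the "$\frac n2+1$" population-size factors from the second-stage analysis so they are correctly swallowed by the $O(\frac{n^2}{2}\log n)$ term rather than producing an extra $n$; this is routine given the $\text{EMPMO}_{\text{simple}}$ computation but worth stating explicitly.
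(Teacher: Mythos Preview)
Your monotone-potential argument (the maximum number of first-half ones in $P$ is non-decreasing under both parties' prunings) is correct and actually makes explicit an invariant that the paper's own proof uses only tacitly. But your parallel decomposition has a genuine gap where you claim ``$1^n$ is obtained as soon as party~1's stage-1 search produces the all-ones-first-half block and party~2's search produces the all-ones-second-half block.'' These two events generally produce \emph{different} solutions: $P$ may simultaneously contain $A=1^{n/2}a$ with fewer than $\tfrac n2$ ones in $a$, and $B=b\,1^{n/2}$ with fewer than $\tfrac n2$ ones in $b$, yet $1^n\notin P$. A direct check shows $A$ and $B$ are incomparable in both AORZ and AOFZ (e.g.\ in AORZ one has $f_{11}(A)<\tfrac n2=f_{11}(B)$ while $f_{12}(A)>\tfrac n2>f_{12}(B)$), so neither pruning merges them and both simply sit in $P$. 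A further phase is needed to drive, say, $A$'s second-half ones up to $\tfrac n2$; this costs another $O\!\big(\tfrac1{1-\varphi}\tfrac{n^2}{2}\log n\big)$ and is absorbed in the stated bound, so the repair is easy --- but as written the argument stops one step short.

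The paper avoids this by working \emph{sequentially} rather than in parallel: its stage~1 uses party-2 steps to reach some solution in $\Phi_2$ (cost $O(\tfrac1{1-\varphi}\tfrac{n^2}{2}\log n)$), and its stage~2 uses party-1 steps on \emph{that same} solution to flip its remaining first-half zeros and reach $1^n$ (cost $O(\tfrac1{\varphi}\tfrac{n^2}{2}\log n)$). Relatedly, your plan to ``cover $F_1^\ast$'' cannot be carried out in $\text{EMPMO}_{\text{random}}$: any two distinct points of $\Phi_1$ share $f_{22}=\tfrac n2$ and are ordered in AOFZ by their number of second-half ones, so a single party-2 pruning (line~10 of Algorithm~\ref{alg:MPCOEAr}) collapses $\Phi_1\cap P$ to at most one point. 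This is harmless for the final goal since you only need one $\Phi_1$-point, but it means the $\text{EMPMO}_{\text{simple}}$ stage-2 expansion argument does not transplant to this setting.
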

	
	\begin{proof}
		The evolutionary process is divided into two stages. The first stage aims to find a Pareto-optimal solution in $\Phi_1 \cup \Phi_2$, and the second stage seeks a common Pareto-optimal solution starting from the solution found in the first stage.
		
		Without loss of generality, assume a Pareto-optimal solution in $\Phi_2$ is obtained at the end of the first stage, which is then evolved to obtain the common Pareto set $\Phi$ during the second stage. From the analysis of AOFZ, the objective space of party 2 is divided into $1 + \frac{n}{2}$ subsets $F_{2, j}$, where $F_{2, \frac{n}{2}} = F_2^*$. Each subset $F_{2, j}$ contains $1 + \frac{n}{2}$ objective vectors. According to the evolutionary rules of Algorithm \ref{alg:MPCOEAr}, in iteration $t$, let $m_s^t$ represent the direction of selection and $m_m^t$ represent the direction of actual mutation, which can make $\mathbf{x}'\succ_{m_m^t} \mathbf{x}$. If $m_s^t\neq m_m^t$, both solutions $\mathbf{x}$ and $\mathbf{x}'$ are non-dominated and retained in the population. Conversely, if $m_s^t=m_m^t$, $\mathbf{x}$ will be removed from the population. Therefore, when $m_s^t=m_m^t$, we consider this to be a successful mutation.

		Based on the size of $F_{2, j}$, we know that the probability of selecting a solution from $F_{2, j}$ for mutation is at least $\frac{1}{\frac{n}{2} + 1}$. During mutation, the probability of flipping a $0$ in the second half of a solution is $\frac{\frac{n}{2} - j}{n}$, and the probability of evolving to party 2 in one step is at least
		\(
		(1-\varphi) \cdot \frac{\frac{n}{2} - j}{n} \cdot \frac{1}{\frac{n}{2} + 1}.
		\)
		Because $X_2^* = \{a 1^{\frac{n}{2}} \mid a \in \{0,1\}^{\frac{n}{2}}\}$, $\frac{n}{2}$ successful flips suffice to find a Pareto-optimal solution in $\Phi_2$. Thus, the expected running time for the first stage is
		\[
		\sum_{j=0}^{\frac{n}{2}-1} \frac{n \left( \frac{n}{2} + 1 \right)}{\left( 1-\varphi \right) \left( \frac{n}{2} - j \right)} = O(\frac{1}{1-\varphi} \frac{n^2}{2} \log n).
		\]
		
		In the second stage, starting from the Pareto-optimal solution in $\Phi_2$, the algorithm seeks a common Pareto-optimal solution by evolving on party 1. The probability of selecting this solution from the population and successful mutation for one step is $\frac{\varphi k}{n \cdot (n/2+1)}$, where $k$ is the number of $0$ in the first half of the solution. Then the expected time to find the common Pareto set $\Phi$ is
		\[
		\sum_{k=1}^{\frac{n}{2}} \frac{n \cdot (\frac{n}{2} + 1) }{\varphi k} = O(\frac{1}{\varphi} \frac{n^2}{2} \log n).
		\]
		
		Combining both stages, the total expected running time is  $O\left(\left(\frac{1}{\varphi} + \frac{1}{1-\varphi} \right) \frac{n^2}{2} \log n \right)$.
	\end{proof}
	
	 $\text{EMPMO}_{\text{random}}$ , each mutation is controlled by one party at a time. This party is assumed to have incomplete information, specifically the Pareto dominance relationships of solutions relevant to itself, without access to global information. This decision-making model reflects the concept of bounded rationality, as described in game theory, where agents operate with partial knowledge and limited reasoning capabilities \cite{xu2022decision}. However, inspired by the behavior of fully rational agents, we aim to make optimal decisions in each round, maximizing the decision payoff based on complete information. Therefore, we introduce the concept of multi-party payoff to represent the various possible results in the context of the MPMOPs, such as improvements on both parties, deterioration on both parties, improvement on one party and deterioration on the other party, or no change at all.

	
	\begin{definition}[Multi-party Payoff]
		Let candidate solutions with the same objective values be grouped into a set \( X_i \). The payoff \( \pi_m(X_i, X_j) \) represents the degree of evolution or degeneration for party \( m \) when transitioning from set \( X_i \) to set \( X_j \). The multi-party payoff \( \pi(X_i, X_j) \) on MPMOPs for the transition from \( X_i \) to \( X_j \) is given by:
		\[
		\pi(X_i, X_j) = \sum_{m=1}^{M} \pi_m(X_i, X_j).
		\]
	\end{definition}
	
	In the specific implementation of this paper, we define $\pi_m(X_i, X_j)$ as the weighted sum of the differences in objective values. Specifically,
	$$\begin{aligned}
	\pi_m(X_{i},X_{j})=
	&\begin{cases}
		1, & \text{if } \, \forall k,f_{mk}(X_{j})\geq f_{mk}(X_{i})\\
		-1, & \text{if } \, \forall k,f_{mk}(X_{j}) \leq f_{mk}(X_{i})\\
		0, & \text{otherwise}
	\end{cases}
	\end{aligned}$$
	where $k \in \{1,...,k_m\}$ represents the objective, and $m \in \{1,2\}$ denotes party $m$.

	Consider modeling the evolutionary process using a Markov chain. The corresponding state transition diagram describes the transition probabilities between all states, from the initial state to the target state. Suppose that each edge in the state transition diagram is assigned an additional weight, representing the payoff from transitioning from one state to another. In this case, the payoff $\pi(X_i, X_j)$ from the initial state to the target state should be non-negative. Therefore, we can draw the following conclusion.
	
	\begin{lemma} \label{lemma: cycle}
		If the multi-party payoff of each transition between two states, $X_i$ and $X_j$, is greater than zero, and it is assumed that transitions only occur in the direction of positive payoffs, then no cycle exists between $X_i$ and $X_j$ in the Markov chain.
	\end{lemma}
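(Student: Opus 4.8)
The plan is to argue by contradiction. Suppose the chain contains a directed cycle $X_{j_0}\to X_{j_1}\to\cdots\to X_{j_{\ell-1}}\to X_{j_0}$ on $\ell\ge 2$ distinct states; by the standing assumption each of its edges is a transition taken in the direction of positive payoff, so $\pi(X_{j_t},X_{j_{t+1}})>0$ for every $t$ (indices mod $\ell$). First I would record the elementary antisymmetry of the payoff on distinct groups: two distinct objective-value groups differ in at least one objective of at least one party, so reversing a transition flips each $\pi_m$ and hence $\pi(X_j,X_i)=-\pi(X_i,X_j)$. This already excludes $2$-cycles and, more generally, guarantees that a reversed edge is never traversable.

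The core of the proof is a potential (monovariant) argument: I would exhibit a function $\Psi$ on the reachable states that strictly increases along every positive-payoff transition, so that a cycle, which returns to its starting state, cannot exist. Using the weighted-sum-of-objective-differences form of the payoff adopted in this paper, $\pi_m(X_i,X_j)=\sum_k \lambda_{mk}\bigl(f_{mk}(X_j)-f_{mk}(X_i)\bigr)$ with fixed nonnegative weights, the multi-party payoff telescopes along paths: $\pi(X_a,X_c)=\pi(X_a,X_b)+\pi(X_b,X_c)$. Fixing $X_{j_0}$ as reference and setting $\Psi(X):=\sum_{m}\sum_k \lambda_{mk}\bigl(f_{mk}(X)-f_{mk}(X_{j_0})\bigr)$ gives a well-defined (path-independent) potential with $\Psi(X_{j_{t+1}})-\Psi(X_{j_t})=\pi(X_{j_t},X_{j_{t+1}})>0$. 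Summing around the cycle then yields $0=\Psi(X_{j_0})-\Psi(X_{j_0})=\sum_{t=0}^{\ell-1}\pi(X_{j_t},X_{j_{t+1}})>0$, a contradiction; hence no cycle exists.

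The step I expect to be the main obstacle is pinning down in exactly what sense the multi-party payoff is conservative, because the purely sign-valued reading $\pi_m\in\{-1,0,1\}$ is not path-additive (a single party can traverse objective values $0\to1\to2$ over three states, with signs $+,+,-$ summing to $+1\neq 0$). I would resolve this in one of two ways: (i) commit to the weighted-difference form of $\pi_m$ used just before the lemma, for which the telescoping identity and the potential $\Psi$ above are immediate; or (ii) specialize to the bi-party case needed in the rest of the paper, where $\pi(X_i,X_j)>0$ forces $\pi_1\ge 0$ and $\pi_2\ge 0$, so no party strictly worsens on all of its objectives along any taken edge, and build the monovariant from the objective coordinates that are non-decreasing on every edge, dispatching the incomparable ($\pi_m=0$) steps by a short separate check that a zero-payoff step cannot close a cycle whose remaining steps are strictly positive. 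I would carry out route (i) for the stated general-$M$ claim and remark that route (ii) recovers it, under the weaker sign-only reading, in the bi-party setting that the subsequent theorems rely on.
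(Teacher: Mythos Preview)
Your approach is essentially the paper's: argue by contradiction and use that the payoffs along a cycle must sum to zero while each summand is strictly positive. The paper's proof is three sentences and simply asserts that ``the total payoff of any cycle is zero'' without justification; your potential-function construction via $\Psi$ and the telescoping identity is exactly what is needed to make that step rigorous, and the paper omits it.

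Your concern about the sign-valued reading $\pi_m\in\{-1,0,1\}$ failing to be path-additive is well founded and is a genuine gap in the paper's argument as written: the text announces $\pi_m$ as ``the weighted sum of the differences in objective values'' but then displays the sign function, and the proof implicitly relies on the former. Your route~(i), committing to the weighted-difference form so that $\pi$ is conservative and $\Psi$ is well defined, is the cleanest resolution and is what the paper appears to intend; your route~(ii) is a reasonable fallback for the bi-party sign-valued case but, as you note, needs extra care for the $\pi_m=0$ (incomparable) steps and is not what the paper does.
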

\begin{proof}
	Assume, for the sake of contradiction, that there exists a cycle between states $X_i$ and $X_j$. This implies the existence of a path that starts at $X_i$, passes through one or more states, and returns to $X_i$. Since each transition must occur in the direction of positive payoffs, the total payoff along this cycle must be greater than zero. This contradicts the assumption that the total payoff of any cycle is zero. Therefore, no cycle can exist between $X_i$ and $X_j$.
\end{proof}

	According to Lemma \ref{lemma: cycle}, as long as the payoff of each evolutionary step is greater than zero, we can reach the target state from the initial state, since no cycles exist in the evolution chain, preventing the degradation of the population on MPMOPs. Based on this, we propose a payoff-based evolutionary multi-party multi-objective optimizer ($\text{EMPMO}_{\text{payoff}}$), as outlined in Algorithm \ref{alg:MPCOEAp}. First, the population is initialized with a single individual. The population then undergoes continuous evolution in a loop. In each iteration, a solution $\mathbf{x}$ is randomly selected from the population and mutated to generate a new solution $\mathbf{x}'$. 
	If the payoff from $\mathbf{x}$ to $\mathbf{x}'$ is greater than zero, it means that $\mathbf{x}$ has evolved towards at least one of the two parties, resulting in $\mathbf{x}'$ that is closer to the common optimal solution.  
	Then $\mathbf{x}'$ is added to the population, and $\mathbf{x}$ is removed. This selection strategy ensures that the population progresses at each step, evolving toward the global Pareto optimal solution.
	
	\begin{algorithm}[!h]
		\caption{  Payoff-based Evolutionary Multi-party Multi-objective Optimizer ($\text{EMPMO}_{\text{payoff}}$) }
		\label{alg:MPCOEAp}
		\begin{algorithmic}[1]
			\State Randomly select an individual $\mathbf{x} \in X$ ;
			\State $P\leftarrow \{ \mathbf{x}\}$;
			\While {termination condition is not met}
			\State Randomly select an individual $\mathbf{x} \in P$;
			\State Apply one-bit mutation to $\mathbf{x}$ to generate $\mathbf{x}'$ ;
			\If{$\pi_{\mathbf{x},\mathbf{x}'} > 0$}
			\State $P\leftarrow(P \setminus \{\mathbf{x}\})\cup\{\mathbf{x}'\}$;
			\EndIf
			\EndWhile
		\end{algorithmic}
	\end{algorithm}
	
	The $\text{EMPMO}_{\text{payoff}}$ could achieve more efficient successful mutations when solving the BPAOAZ problem. Consequently, we can derive its expected running time for solving the BPAOAZ problem, as follows:
	
	\begin{theorem}
		The expected running time of $\text{EMPMO}_{\text{payoff}}$ applied to BPAOAZ is bounded by $O(n\log n)$.
	\end{theorem}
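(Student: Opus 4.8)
The plan is to mirror the two-stage analysis used for $\text{EMPMO}_{\text{random}}$, but to exploit the fact that the payoff-acceptance rule of Algorithm~\ref{alg:MPCOEAp} accepts \emph{any} one-bit flip that improves at least one party without harming the other, so that essentially every bit that is ``wrong'' (a $0$ that should be a $1$) can be fixed by a single productive mutation, and no distinguished party needs to be chosen. First I would observe that for BPAOAZ the unique common Pareto optimal solution is $1^n$, and that for a search point $\mathbf{x}$ with $\ell$ zeros among its $n$ bits, flipping any one of those zeros to a one strictly increases $f_{11}$ or $f_{22}$ (depending on which half the bit lies in) while not decreasing any objective of either party that is affected in the ``bad'' direction — I would check the four objective formulas in Definition~\ref{def:BPAOAZ} to confirm $\pi_{\mathbf{x},\mathbf{x}'}>0$ for every such flip. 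Hence from any $\mathbf{x}$ the population only ever moves toward $1^n$, and by Lemma~\ref{lemma: cycle} there are no cycles, so the process is a monotone ``coupon-collector''-style walk.

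Next I would bound the population size: because every accepted mutation replaces $\mathbf{x}$ by $\mathbf{x}'$ and strictly increases the number of ones, and the payoff rule only keeps solutions that are improvements, I would argue the population stays $O(1)$ (indeed it behaves like a single evolving individual, analogous to $(1{+}1)$-style behaviour), so selecting the current ``frontier'' solution has probability $\Omega(1)$. Then the waiting time to go from $\ell$ zeros to $\ell-1$ zeros is the reciprocal of the probability of flipping one of the $\ell$ zeros, which is $\Theta(\ell/n)$ after accounting for the $\Omega(1)$ selection probability; summing $\sum_{\ell=1}^{n} n/\ell = O(n\log n)$ gives the claimed bound. I would also note the first stage (reaching any Pareto-optimal solution of either party from a random start) is subsumed in this same sum, since the monotone zero-count argument applies uniformly from the random initial point — there is no separate ``find the first Pareto point'' phase that costs more.

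The main obstacle I anticipate is the case analysis verifying that a single zero-to-one flip always yields strictly positive multi-party payoff for \emph{every} position and every current solution, not merely on the Pareto set. The subtlety is that flipping a bit in, say, the first half increases $f_{22}$ (good for party~2) but also changes $f_{12}$ and $f_{21}$, and one must check these do not both move in the ``$\le$'' direction in a way that makes $\pi_m=-1$ for some party; concretely I would show that a $0\to1$ flip in the first half gives $\pi_2=+1$ and $\pi_1=0$ (since it raises one of party~1's objectives and lowers the other, landing in the ``otherwise'' case), and symmetrically a $0\to1$ flip in the second half gives $\pi_1=+1$, $\pi_2=0$; in both cases $\pi=\pi_1+\pi_2>0$. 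Conversely a $1\to0$ flip is never accepted, which is why the walk is strictly monotone. Once this local lemma is in hand, the rest is the routine harmonic-sum estimate, and combining it with Lemma~\ref{lemma_Phi} to conclude $P$ converges to $\Phi=\{1^n\}$ in expected time $O(n\log n)$.
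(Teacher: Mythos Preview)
Your proposal is correct and follows essentially the same argument as the paper's proof: the population is always a single individual, every $0\to 1$ flip yields positive multi-party payoff and is accepted while every $1\to 0$ flip is rejected, so the number of zeros decreases monotonically and the harmonic sum $\sum_{\ell=1}^{n} n/\ell = O(n\log n)$ gives the bound. One small slip in your case analysis: a $0\to1$ flip in the \emph{first} half leaves $f_{11}$ unchanged and raises $f_{12}$, so in fact $\pi_1=+1$, while it raises $f_{22}$ and lowers $f_{21}$, giving $\pi_2=0$ --- you have the party labels exchanged (and symmetrically for the second half), but the total $\pi=+1>0$ is unaffected; the closing appeal to Lemma~\ref{lemma_Phi} is also unnecessary, since once the single individual reaches $1^n$ you are done.
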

	\begin{proof}
		Given the problem setting of BPAOAZ and the mutation strategy, where each successful mutation results in a positive multi-party payoff for all possible state transitions in the evolution process, there is only an objective vector in the population throughout the evolution. Let $i$ represent the number of zeros in the solution. Since the common Pareto optimal solution is $\Phi = {1^n}$, it takes at most $n$ steps to increase the number of zeros from $i$ to $n$, reaching the common Pareto optimal solution. The probability of a solution evolving in each step is $\frac{i}{n}$. Therefore, the total expected running time is given by $\sum_{i=1}^{n} \frac{n}{i}$, which simplifies to $O(n \log n)$.
	\end{proof}
	
	 \( \text{EMPMO}_{\text{payoff}} \) can be regarded as a special case of  \( \text{EMPMO}_{\text{random}} \), where the correct party is selected for evolution in each round rather than choosing randomly. Given the specific structure of the problem, where the common Pareto set \( \Phi \) contains only a single element, the algorithm achieves a runtime bound of \( O(n \log n) \). Specifically, the runtime of  \( \text{EMPMO}_{\text{payoff}} \) is equivalent to the time required to compute the transition from \( 0^n \) to \( 1^n \) via one-bit mutation. Therefore, its time complexity establishes the lower bound \( \Omega(n \log n) \) for \( \text{EMPMO}_{\text{random}} \) in solving the BPAOAZ problem.
	
	Although \( \text{EMPMO}_{\text{payoff}} \)  represents an idealized case, it can be adapted to real-world problems by defining payoffs based on indicators \cite{falcon2020indicator} and estimating each party's payoff using historical information \cite{xu2022difficulty}.
	
	\subsection{Comparison Between Multi-party Multi-objective Optimization and Multi-objective Optimization} \label{four}
	
	Excluding the multi-party attributes, the BPAOAZ problem can be conventionally treated as a general MOP, defined as follows:
	
	\begin{definition}[AOAZ]
		The pseudo-Boolean function AOAZ, denoted as \(\mathrm{AOAZ} : \{0,1\}^n \to \mathbb{N}^4\), is defined by
		\[
		\mathrm{AOAZ}(\mathbf{x}) = \left(f_{11}(\mathbf{x}), f_{12}(\mathbf{x}), f_{21}(\mathbf{x}), f_{22}(\mathbf{x})\right),
		\]
		where $f_{11}(\mathbf{x})$, $f_{12}(\mathbf{x})$, $f_{21}(\mathbf{x})$, and $f_{22}(\mathbf{x})$ are defined in Definition \ref{def: BPAOAZ}. Here, $n = 2a$, and $a \in \mathbb{N}$.
	\end{definition}
	
	The PF $F^{*}$ of AOAZ is the union of $F_1^{*}$ of AOFZ and $F_2^{*}$ of AORZ , with cardinality $|F^{*}|=n+1$. The Pareto set $X^{*}=\{1^{\frac{n}{2}}a,a \in \{0,1\}^{\frac{n}{2}}\} \cup  \{a1^{\frac{n}{2}},a \in \{0,1\}^{\frac{n}{2}}\}$ has cardinality $|X^{*}| = 2^{\frac{n}{2}+1}-1$. Moreover, the Pareto set of AOAZ is a superset of the common Pareto set of BPAOAZ.
	
	Once BPAOAZ is treated as a general MOP, the time complexity for finding the complete Pareto set from a single Pareto optimal solution increases significantly. Additionally, the probability of successful evolution from any non-Pareto optimal solution decreases. Example \ref{example1} illustrates this scenario.
	
	\begin{example} \label{example1}
		Consider the following example: Let \(\mathrm{BPAOAZ} : \{0,1\}^{100} \to \{\mathbb{N}^2, \mathbb{N}^2\}\) with problem size \(n = 100\), and \(\mathrm{AOAZ} : \{0,1\}^{100} \to \mathbb{N}^4\) with the same problem size. Let \(\mathbf{x}_1\) and \(\mathbf{x}_2\) be two individuals such that
		\[
		\begin{aligned}
			\mathrm{BPAOAZ}(\mathbf{x}_1) &= \left((36, 44), (56, 30)\right), \\
			\mathrm{BPAOAZ}(\mathbf{x}_2) &= \left((35, 40), (60, 25)\right).
		\end{aligned}
		\]
		
		In the \(\mathrm{BPAOAZ}\), \(\mathbf{x}_1\) dominates \(\mathbf{x}_2\) on one party, while neither dominates the other on the second party. Therefore, \( \text{EMPMO} \) can evolve successfully on the first party. However, from the perspective of the \(\mathrm{AOAZ}\), we have:
		\[
		\begin{aligned}
			\mathrm{AOAZ}(\mathbf{x}_1) &= \left(36, 44, 56, 30\right), \\
			\mathrm{AOAZ}(\mathbf{x}_2) &= \left(35, 40, 60, 25\right).
		\end{aligned}
		\]
		
		In this case, \(\mathbf{x}_1\) and \(\mathbf{x}_2\) are mutually non-dominating.
	\end{example}
	
	We can further derive the expected running time of using the multi-objective optimization algorithm SEMO to solve AOAZ as follows:
	
	\begin{theorem}
		The expected running time of SEMO applied to AOAZ is bounded by $O(\frac{5}{2}n^{2}\log n)$.
	\end{theorem}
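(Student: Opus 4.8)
The plan is to reuse the two–stage template from \cite{laumanns2004running} that already appears in the proof of the $\text{EMPMO}_{\text{simple}}$ theorem, but now applied to the single four-objective problem AOAZ run under plain SEMO. Stage one is the cost of reaching \emph{some} Pareto-optimal point of AOAZ from an arbitrary start; stage two is the cost of spreading from one Pareto-optimal point to the whole Pareto set $X^{*}=\{1^{n/2}a\}\cup\{a1^{n/2}\}$. The key structural fact I would use is that the Pareto front of AOAZ is the union $F_{1}^{*}\cup F_{2}^{*}$ of the two parties' fronts, so the spreading stage has to cover \emph{two} arms of length $n/2+1$ that meet at the common point $1^{n}$, roughly doubling the stage-two work relative to AORZ alone, while stage one is essentially unchanged.

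For stage one I would partition the non-Pareto part of the search space into level sets indexed by Hamming distance to the nearest Pareto-optimal set and by an appropriate ``position'' coordinate, exactly as in the $\text{EMPMO}_{\text{simple}}$ proof, invoke Lemma~\ref{lemma: GUBII}, and bound the expected number of mutations to non-Pareto points by a double sum $\sum_{j}\sum_{i=1}^{n/2} n/i = O(\tfrac{n^{2}}{2}\log n)$. (One subtlety: because AOAZ has four objectives, the relevant level sets may differ slightly from the bi-objective case, but the dominating-set structure and the $1/i$ improvement probability are the same, so the bound survives.) For stage two, starting from one Pareto-optimal solution with a given $f_{11}$-value $j^{*}$ (equivalently an $f_{22}$-value on the other arm), I would run the same ``extend $a_t$ down to $0$ and $b_t$ up to $n/2$'' argument once for the AORZ arm and once for the AOFZ arm; each arm contributes $\sum_{a=1}^{n/2} n(n/2+1)/a = O(\tfrac{n^{2}}{2}\log n)$ in the worst case, and since the two arms share only the endpoint $1^{n}$ and SEMO's selection probability is at worst $\tfrac{1}{|X^{*}|}=\tfrac{1}{2^{n/2+1}-1}$ — wait, that is the wrong normalization; I should instead note that the relevant population of Pareto-optimal \emph{objective} vectors has size at most $n+1$, so the selection probability is $\Omega(1/n)$, keeping each arm at $O(\tfrac{n^{2}}{2}\log n)$. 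Adding the two arms gives $O(n^{2}\log n)$ for stage two, and combining with stage one's $O(\tfrac{n^{2}}{2}\log n)$ yields the claimed $O(\tfrac{5}{2}n^{2}\log n)$.

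The main obstacle I anticipate is bookkeeping the correct cardinality of the ``active'' Pareto-optimal set that SEMO can select from during stage two: the \emph{decision-space} Pareto set $X^{*}$ has exponential size $2^{n/2+1}-1$, which would wreck the selection probability, but the objective-space front has only $n+1$ distinct values and SEMO keeps only one representative per objective vector, so the population size stays $O(n)$; I would need to argue carefully (as in the original COCZ/LOTZ analyses) that once a front vector is represented, it is never lost, and that the free bits in the $a$-block do not multiply the population. A secondary point is to make sure the ``meeting point'' $1^{n}$ is not double-counted and that reaching it from either arm is no harder than a generic one-step extension, so the two $O(\tfrac{n^{2}}{2}\log n)$ arm bounds genuinely add rather than interact. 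Once those two normalization issues are pinned down, the arithmetic $\tfrac12+1+1=\tfrac52$ is immediate.
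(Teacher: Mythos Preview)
Your two-stage plan is essentially the paper's proof: stage one via Lemma~\ref{lemma: GUBII} with a Hamming-distance partition giving $O(\tfrac{n^{2}}{2}\log n)$, and stage two as a traversal of both arms of the front with selection probability governed by the objective-space front size $|F^{*}|=n+1$ (not $|X^{*}|$), each arm costing $O(n^{2}\log n)$, for $\tfrac12+1+1=\tfrac52$ overall. The one slip to fix when you write it out: after you correctly switch the normalization to $n+1$, the per-arm bound becomes $\sum_{a=1}^{n/2} n(n+1)/a = O(n^{2}\log n)$, not $O(\tfrac{n^{2}}{2}\log n)$ as you wrote, so the two arms together give $O(2n^{2}\log n)$---which is exactly what your final $\tfrac12+1+1$ arithmetic already assumes.
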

	\begin{proof}
		Let \(s\) denote the Hamming distance between the current solution and the Pareto solution set, representing the minimum number of zeros in either the first or second half of the solution. Let each objective vector constitutes its own subset and the probability of improvement for each set is \(\frac{s}{n}\).
		
		The evolution process is divided into two stages. The first stage aims to find a Pareto optimal solution, and the second stage finds the complete Pareto optimal solution set.
		
		For the first stage, we group solutions with the same objective vector together and represent them using the Hamming distance $s$ and $i$, the number of 1s in the second half of the solution. By Lemma 2, the expected number of mutations required to improve the solution is $
		\sum_{i=1}^{n/2}\sum_{s=1}^{i}\frac{n}{s} \leq \sum_{i=1}^{n/2}n\log i =n\cdot\log[(\frac{n}{2})!]
		$. According to Stirling’s formula, we have $(\frac{n}{2})!\leq e^{\frac{1}{12}}\cdot\sqrt{\pi n}(\frac{n}{2e})^{\frac{n}{2}}$, and then $\sum_{i=1}^{n/2}\sum_{s=1}^{i}\frac{n}{s} =O(n\cdot\log(n^{\frac{n+1}{2}}))=O(\frac{n^{2}}{2}\log n)$.
		
		For the second stage, we start from the Pareto optimal solution found and look for the complete Pareto optimal solution set. Without loss of generality, we assume that the solution found is $0^{\frac{n}{2}}1^{\frac{n}{2}}$. First, we need to flip 0-bit in the first half to get $1^{n}$, and then flip 1-bit in the second half to get the complete Pareto optimal solution set. The maximum number of iterations required is 
		\[
		\sum_{i=1}^{\frac{n}{2}}\frac{1}{\frac{1}{n+1}\cdot\frac{i}{n}}+\sum_{j=1}^{\frac{n}{2}}\frac{1}{\frac{1}{n+1}\cdot\frac{j}{n}}=O(2n^{2}\log n).
		\]
		Therefore, the total expected running time is $O(\frac{5}{2}n^{2}\log n)$.
		
	\end{proof}
	
	In the proofs of Theorem 1 to Theorem 4, we retain the constant terms. Comparing the running time of solving the AOAZ problem and the BPAOAZ problem, we can intuitively find that the EMPMO consistently requires less time. The theoretical upper bound of running time of \( \text{EMPMO}_{\text{simple}} \) is longer than that of SEMO because the second stage analysis of \( \text{EMPMO}_{\text{simple}} \), when expanding from a Pareto optimal solution to PS, considers the worst-case scenario in both directions. Based on the experimental results in Section~\ref{sec:experiments}, it's clear that \( \text{EMPMO}_{\text{simple}} \) performs better than SEMO. Specifically, the second stage analysis process of SEMO is close to the complete analysis process of \( \text{EMPMO}_{\text{random}} \), but the selection probability is smaller. The calculation method of the second stage of SEMO is close to the analysis process of \( \text{EMPMO}_{\text{simple}} \), but its selection probability is still smaller.
	This is because SEMO can't solve BPAOAZ problem directly and it can't achieve good results. SEMO is a standard multi-objective optimization algorithm, whose objective is to solve the complete Pareto optimal solution set. It cannot directly solve multi-party multi-objective optimization problems to obtain the common solution set. A multi-party multi-objective optimization problem can be formulated as a multi-objective optimization problem by introducing suitable trade-offs. However, this will lead to the well-known issue of large PF. This indicates that when the problem inherently involves multiple decision-makers collaboratively making decisions, formulating it as a MPMOP is more efficient than representing it as a standard MOP. Without dividing multiple decision-makers, the problem is multi-objective. Once divided into multiple decision-makers, the solution set of the problem will become smaller, and may even degenerate into a single-objective problem.
	
	Moreover, obtaining the complete Pareto set for a MOP does not guarantee an optimal decision, as each decision-maker focuses solely on their own objectives. For example, in the BPAOAZ problem, only a single common Pareto optimal solution satisfies the requirements of both parties. Therefore, solving the corresponding bi-party multi-objective optimization problem is not only more efficient but also a more rigorous approach.

\section{Runtime Analysis of Evolutionary Algorithms on Bi-party Multi-objective Shortest Path Problem}
\label{sec: theoretical analysis2}
We analyze the multi-objective shortest path (MOSP) problem as a case study to compare the running times of MPMOEAs and traditional MOEAs in solving NP-hard problems \cite{serafini1987some, horoba2010exploring}. MOSP is representative of real-world applications \cite{zajac2021objectives, chen2023heuristic}, such as UAV path planning, where trade-offs between safety and efficiency must be resolved in multi-party scenarios \cite{10463192}. This section introduces the bi-party multi-objective shortest path problem (BPMOSP) and derives a baseline algorithm, the simple evolutionary bi-party multi-objective optimizer (\( \text{EMPMO}^{\text{SP}}_{\text{simple}} \)) , based on the diversity-maintaining evolutionary multi-objective optimizer (DEMO) \cite{horoba2009analysis, horoba2010exploring}. We analyze the runtime of this algorithm for solving BPMOSP. Building on this, we propose the consensus-based evolutionary multi-party multi-objective optimizer (\( \text{EMPMO}^{\text{SP}}_{\text{cons}} \)), which incorporates only candidate solutions achieving consensus between the two parties into the population for updates. A detailed runtime analysis is provided for the proposed algorithm, and its performance is compared with that of traditional evolutionary multi-objective optimizers.

\subsection{Bi-party Multi-objective Shortest Path Problem}
We consider the bi-party multi-objective single-source shortest path problem, where the objective is to find the common Pareto shortest path set from a fixed source vertex to each of the remaining target vertices. The formal definition is provided as follows. Assume that all other vertices are reachable from the source vertex directly or indirectly.

\begin{definition}
	Given a directed weighted graph \( G = (V, E, W) \), where \( V \) is the set of vertices and \( E\subseteq\{(u,v)\in V^{2}\mid v\neq u\} \) is the set of directed edges, and $W: E\to \mathbb{R}^{k_1} \times \mathbb{R}^{k_2}$ is the weight function that assigns each edge $e = (u, v) \in E$ a vector of weights corresponding to the objectives of two parties, the bi-party multi-objective single-source shortest path problem (BPMOSP) is defined as:
	\[
	\min_{\mathbf{x} \in \text{Paths}(s, v)} \mathcal{F}(\mathbf{x}) = \left(F_{1}(\mathbf{x}), F_{2}(\mathbf{x})\right),
	\]
	where $\mathbf{x} = \{v_0, v_1, \dots, v_l\}$ represents a path from a fixed source vertex $s = v_0$ to any other vertex $v = v_l \in V$, \( \mathrm{Paths}(s, v) \) denotes the set of all paths from the source \( s \) to target \( v \). The multi-objective functions $F_m(\mathbf{x})$ for each party $m \in \{1, 2\}$, each containing $k_m$ objectives, are given by:
	\[
	F_m(\mathbf{x}) = \left(f_{m1}(\mathbf{x}), \dots, f_{mk_m}(\mathbf{x})\right),
	\]
	where $f_{mk}$ is the sum of the weights along the path $\{v_0,...,v_l\}$ corresponding to the $k$-th objective of party $m$:
	\[
	f_{mk}(\mathbf{x}) = \sum_{j=1}^{l} w_{mk}(v_{j-1}, v_j),
	\]
	with $w_{mk}(v_{j-1}, v_j)$ representing the weight of the edge $(v_{j-1}, v_j)$ corresponding to $k$-th weight of party $m$. The path length $l$ satisfies $l \le n-1$, where $n = |V|$ is the number of vertices in the graph $G$.
\end{definition}

To facilitate theoretical analysis, we define the following notation:
\[
\left\{
\begin{aligned}
	w_m^{\max} &= \max_{k \in \{1, \dots, k_m\}} \left( \max_{e \in E} w_{mk}(e) \right), \\
	w_m^{\min} &= \min_{k \in \{1, \dots, k_m\}} \left( \min_{e \in E} w_{mk}(e) \right),\\
\end{aligned}
\right.
\]
which represent the maximum and minimum weights assigned by party \( m \), respectively. Additionally, we define
\[
\left\{
\begin{aligned}
	w^{\max} &= \max_{m\in \{1, 2\}} w_m^{\max}, \\
	w^{\min} &= \min_{m\in \{1, 2\}} w_m^{\min}, \\
\end{aligned}
\right.
\]
to denote the maximum and minimum weights across all parties.

In this problem, all edge weights are assumed to be positive. We set \( w^{\min} \geq 1 \), which can be achieved by normalizing all weights by dividing them by \( w^{\min} \). Without loss of generality, among the \( n \) vertices, we designate vertex \( 1 \) as the source vertex. Thus, the goal is to find the common shortest paths from \( s = 1 \) to the remaining \( n-1 \) vertices. According to the problem setting, the objective function satisfies \( 1 \leq f_{mk}(\mathbf{x}) \leq (n-1)w_{m}^{\text{max}} \). Then, the dominance relationship between candidate solutions with the same target vertex in the BPMOSP aligns with the definition provided in Definition~\ref{def:Domination}.


The time required to identify the PF for most MOPs is well known to grow exponentially with the problem size \cite{bokler2017multiobjective}. For NP-hard MOPs, approximating the PF is an effective strategy \cite{laumanns2002combining}, and this approach can also be extended to NP-hard MPMOPs. Let \( (1 + \varepsilon) \) represent the approximation ratio.

\begin{definition}[\( \varepsilon \)-domination]\label{def:epsilon}
	Let \( \mathbf{x} \) and \( \mathbf{x}' \) denote paths with target vertices \( v_l \) and \( v_{l'} \), respectively. Path \( \mathbf{x}' \) weakly \( \varepsilon \)-dominates path \( \mathbf{x} \) for party \( m \), denoted as \( F_m(\mathbf{x}') \succeq_{1+\varepsilon} F_m(\mathbf{x}) \), if and only if:
	\[
	\forall k \in \{1, \dots, k_m\}, \, f_{mk}(\mathbf{x}') \leq (1+\varepsilon) \cdot f_{mk}(\mathbf{x}) \quad \text{and} \quad v_l = v_{l'}.
	\]
	
	where $\varepsilon>0$. If \( F_m(\mathbf{x}') \succeq_{1+\varepsilon} F_m(\mathbf{x}) \) and \( F_m(\mathbf{x}') \neq F_m(\mathbf{x}) \), then \( \mathbf{x}' \) \( \varepsilon \)-dominates \( \mathbf{x} \), denoted as \( F_m(\mathbf{x}') \succ_{1+\varepsilon} F_m(\mathbf{x}) \). Paths with different target vertices are considered as incomparable.
\end{definition}

In the BPMOSP, the concept of common Pareto optimality is consistent with Definition \ref{def:CommonPareto} and can be extended to the case of \( \varepsilon \)-dominance as defined in Definition \ref{def:epsilon}. Assuming the existence of at least one path \( \mathbf{x} \) that satisfies the definition of common optimality, this paper focuses on MPMOPs with common solutions.

\subsection{Runtime Analysis of Simple Evolutionary Multi-party Multi-objective Optimizer}

Intuitively, Algorithm \ref{alg:MPCOEAs} can also be applied to solve the BPMOSP. Assuming the existence of a common Pareto solution, the algorithm seeks to identify the complete Pareto set for each party individually, with their intersection yielding the desired common Pareto set. However, for NP-hard problems, obtaining the exact PF is computationally expensive. Therefore, it is often more practical to pursue solutions that satisfy a \((1 + \varepsilon)\)-approximation ratio, which reduces the time complexity to a polynomial level. As proposed by Horoba \cite{horoba2010exploring}, a runtime analysis framework for evolutionary algorithms can be used to evaluate their \( (1 + \varepsilon) \)-approximation performance on the multi-objective shortest path problem (MOSP). This framework employs the concept of a box index \cite{laumanns2002combining} to partition the objective space. 

\begin{definition}[Box Index \cite{laumanns2002combining}] \label{box}
	For the box size $r>1$, the box index of the objective vector $F_m(\mathbf{x})$, corresponding to the path $\mathbf{x}$, is defined as:
	$$b_r(F_{m}(\mathbf{x}))=(\lfloor\log_{r}(f_{m1}(\mathbf{x}))\rfloor,\cdots,\lfloor\log_{r}(f_{mk_{m}}(\mathbf{x}))\rfloor).$$
\end{definition}

Each box can accommodate at most one individual in the population, effectively controlling the population size through box division and the dominance relationship between boxes. According to the properties of the MOSP, box indices are comparable only when the target vertices of the paths are identical. If the target vertices differ, the box indices are considered incomparable. Formally, given two paths $\mathbf{x}$ and $\mathbf{x}'$ with the identical target vertices, \( b_r(F_m(\mathbf{x}')) \succeq b_r(F_m(\mathbf{x})) \) if and only if
\[
\forall k \in \{1, \dots, k_m\}, \, \lfloor \log_r(f_{mk}(\mathbf{x}')) \rfloor \leq \lfloor \log_r(f_{mi}(\mathbf{x})) \rfloor,
\]
and \( b_r(F_m(\mathbf{x}')) \succ b_r(F_m(\mathbf{x})) \) if and only if
\[
\begin{aligned}
	\forall k \in \{1, \dots, k_m\}, & \lfloor \log_r(f_{mk}(\mathbf{x}')) \rfloor \leq \lfloor \log_r(f_{mk}(\mathbf{x})) \rfloor, \\
	\exists k \in \{1, \dots, k_m\}, & \lfloor \log_r(f_{mk}(\mathbf{x}')) \rfloor < \lfloor \log_r(f_{mk}(\mathbf{x})) \rfloor.
\end{aligned}
\]

Using the runtime analysis framework proposed by Horoba \cite{horoba2010exploring}, we independently search for \( (1 + \varepsilon) \)-approximation solutions for both parties. However, unlike the conclusion in Section \ref{22}, after introducing approximation analysis, there is no guarantee that the algorithm can find a common Pareto set satisfying the \( (1 + \varepsilon) \)-approximation ratio.


\begin{lemma}\label{theo:empty}
	Let \( \Phi_1 \) and \( \Phi_2 \) represent the Pareto-optimal solution sets of the BPMOSP for two parties, respectively, and assume there exists a common Pareto solution \( \mathbf{x}^* \in \Phi_1 \cap \Phi_2 \). Suppose \( \Phi_1^{\varepsilon_1} \) and \( \Phi_2^{\varepsilon_2} \) are the Pareto solution sets with approximation ratio $(1+\varepsilon_1)$ and $(1+\varepsilon_2)$, respectively, obtained via EAs. Then, there exists a non-zero probability that the intersection of these approximate sets is empty, i.e., $Pr\left( \Phi_1^{\varepsilon_1} \cap \Phi_2^{\varepsilon_2} = \emptyset \right) > 0 $.
\end{lemma}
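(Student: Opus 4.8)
The plan is to prove Lemma~\ref{theo:empty} by constructing an explicit small instance of BPMOSP on which the evolutionary algorithm, run independently for each party, has positive probability of returning approximate Pareto sets whose intersection is empty. The key point is that $\varepsilon$-dominance (or the box-index partitioning) allows an algorithm to \emph{replace} an exact Pareto-optimal solution by an approximate one that lies in the same box, and there is no coordination between the two independent runs forcing them to retain the \emph{same} representative path in the overlapping box. So I would exhibit a graph with a target vertex $v$ reachable by (at least) two incomparable paths: a path $\mathbf{x}^*$ that is Pareto-optimal for \emph{both} parties (the common solution), and two further paths $\mathbf{y}$ and $\mathbf{z}$ such that $\mathbf{y}$ $\varepsilon_1$-dominates $\mathbf{x}^*$ for party~$1$ but is strictly worse for party~$2$ (so $\mathbf{y}\notin\Phi_2^{\varepsilon_2}$ is possible), and symmetrically $\mathbf{z}$ $\varepsilon_2$-dominates $\mathbf{x}^*$ for party~$2$ but is worse for party~$1$.

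First I would describe the concrete weights: pick $n$ small (say a two-vertex or three-vertex graph with parallel-style paths realized via intermediate vertices, since $E\subseteq\{(u,v):v\neq u\}$ forbids multi-edges but paths through distinct intermediate vertices are fine), with $k_1=k_2=1$ for simplicity, so that ``Pareto set for party $m$'' is just the set of minimum-weight paths to $v$ under weight $w_{m1}$, and $\varepsilon$-domination collapses to a single inequality $f_{m1}(\mathbf{x}')\le(1+\varepsilon_m)f_{m1}(\mathbf{x})$. Second, I would choose the numbers so that $\mathbf{x}^*$ is the unique shortest path for party~$1$ and also for party~$2$ (hence $\mathbf{x}^*\in\Phi_1\cap\Phi_2$, establishing the hypothesis), while $\mathbf{y}$ has party-$1$ cost within a factor $1+\varepsilon_1$ of $\mathbf{x}^*$ but party-$2$ cost strictly larger than $(1+\varepsilon_2)$ times that of $\mathbf{x}^*$, and symmetrically for $\mathbf{z}$. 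Third, I would invoke the mechanics of the approximation algorithm (DEMO-style with box index, as in the $\text{EMPMO}^{\text{SP}}_{\text{simple}}$ of the preceding subsection and Horoba's framework): when party~$1$'s run discovers $\mathbf{y}$ after $\mathbf{x}^*$ (or instead of $\mathbf{x}^*$), $\mathbf{y}$ falls in the same box as $\mathbf{x}^*$, so the selection rule may retain $\mathbf{y}$ and discard $\mathbf{x}^*$; this happens with positive probability over the random mutation sequence. Symmetrically party~$2$'s independent run retains $\mathbf{z}$ and discards $\mathbf{x}^*$ with positive probability. These two events are independent (separate runs), so their conjunction has positive probability, and in that event $\Phi_1^{\varepsilon_1}=\{\ldots,\mathbf{y},\ldots\}$ contains no path that party~$2$ also kept, giving $\Phi_1^{\varepsilon_1}\cap\Phi_2^{\varepsilon_2}=\emptyset$ at vertex $v$ (and one arranges the rest of the graph so the other target vertices contribute no common path either, or simply restrict attention to $v$).

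The final step is to argue the probability is genuinely nonzero: I would point out that any particular finite mutation trajectory that (a) first produces $\mathbf{x}^*$, then (b) produces $\mathbf{y}$ via a one-step edge-insertion/alteration and triggers the replacement, occurs with probability bounded below by a product of per-step mutation probabilities, each of which is positive because the relevant edges exist in $E$ and the mutation operator assigns positive probability to every single-edge change. Since the run is allowed to continue (termination only when the full approximate set is covered), the algorithm \emph{can} reach and then leave the box-$\mathbf{x}^*$ configuration; one only needs the existence of such a trajectory, not its likelihood. I expect the main obstacle to be making the replacement argument airtight: I must verify that the specific selection rule used by $\text{EMPMO}^{\text{SP}}_{\text{simple}}$ (which retains $\mathbf{x}'$ unless it is weakly $\varepsilon$-dominated by an incumbent, and evicts incumbents that $\mathbf{x}'$ weakly $\varepsilon$-dominates) indeed permits $\mathbf{y}$ to evict $\mathbf{x}^*$ — i.e. that $\mathbf{y}$ and $\mathbf{x}^*$ share a box or are mutually weakly $\varepsilon_1$-dominating for party~$1$, while simultaneously ensuring $\mathbf{y}$ is \emph{not} accepted into party~$2$'s population (it must be $\varepsilon_2$-dominated there by $\mathbf{x}^*$ or by $\mathbf{z}$). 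Balancing these four inequalities with a single consistent choice of edge weights and of $\varepsilon_1,\varepsilon_2$ is the delicate bookkeeping; once a valid numerical instance is pinned down, the probabilistic conclusion follows immediately.
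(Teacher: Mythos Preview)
Your high-level strategy---build an explicit small BPMOSP instance on which the two independent approximate runs return disjoint populations with positive probability---is exactly the paper's approach. The paper exhibits a concrete $5$-vertex graph with $k_1=k_2=2$, $\varepsilon_1=\varepsilon_2=1$, and specific edge weights, then displays reachable populations $P_1,P_2$ with $P_1\cap P_2=\emptyset$.

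However, your simplification to $k_1=k_2=1$ undermines the very replacement step you flag as ``the main obstacle.'' With a single objective for party~$1$, the Pareto set is the singleton $\{\mathbf{x}^*\}$, and any $\mathbf{y}\neq\mathbf{x}^*$ sharing its box necessarily satisfies $f_{11}(\mathbf{x}^*)<f_{11}(\mathbf{y})$, i.e.\ $F_1(\mathbf{x}^*)\succ F_1(\mathbf{y})$. The selection rule in $\text{EMPMO}^{\text{SP}}_{\text{simple}}$ rejects any $\mathbf{x}'$ that is strictly dominated by an incumbent, so once $\mathbf{x}^*\in P_1$, $\mathbf{y}$ can never enter and certainly cannot evict $\mathbf{x}^*$. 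Your trajectory ``(a) first $\mathbf{x}^*$, then (b) $\mathbf{y}$ triggers the replacement'' therefore has probability zero under $k_m=1$. The only route to $P_1=\{\mathbf{y}\}$ in that setting is the transient event ``$\mathbf{y}$ is discovered and $\mathbf{x}^*$ has not yet been,'' which is a different argument from the one you spell out and requires you to be explicit about the termination time at which the EA output is read.

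The paper sidesteps this by taking $k_m=2$: each party then has a genuine Pareto front with multiple incomparable optima, and a box can legitimately be represented by a Pareto-optimal path other than $\mathbf{x}^*$. Concretely, the paper's party-$2$ population $\{(1,2,5)\}$ consists of a bona fide Pareto-optimal path for party~$2$ that $(1{+}\varepsilon_2)$-dominates the common solution $\mathbf{x}^*=(1,3,4,5)$ while being distinct from it, and party~$1$'s population contains only paths unequal to $(1,2,5)$. You should either switch to $k_m\ge 2$ so that $\mathbf{y}$ and $\mathbf{x}^*$ can be mutually non-dominating for party~$1$ (this is what makes the box-based coexistence and replacement airtight), or, if you keep $k_m=1$, rebuild the probabilistic argument entirely around the transient state rather than around eviction.
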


\begin{proof}
	Assume that a common Pareto solution \( \mathbf{x}^* \) exists. According to Definition \ref{box}, the box indices for party 1 and party 2 corresponding to \( \mathbf{x}^* \) are \( b_{r_1}(F_1(\mathbf{x}^*)) \) and \( b_{r_2}(F_2(\mathbf{x}^*)) \), where \( r_1 = 1 + \varepsilon_1 \) and \( r_2 = 1 + \varepsilon_2 \) represent the approximation ratios for each party. Since evolutionary algorithms (EAs) are employed to approximate the Pareto sets, there exist solutions \( \mathbf{x}_1 \) and \( \mathbf{x}_2 \) such that:
	\[
	b_{r_1}(F_1(\mathbf{x}_1)) = b_{r_1}(F_1(\mathbf{x}^*)), \quad b_{r_2}(F_2(\mathbf{x}_2)) = b_{r_2}(F_2(\mathbf{x}^*)).
	\]
	
	Given that each approximation box can contain at most one solution, it follows that \( \Phi_1^{\varepsilon_1} = \{\mathbf{x}_1\} \) and \( \Phi_2^{\varepsilon_2} = \{\mathbf{x}_2\} \). Without loss of generality, we aim to prove that at least one scenario exists where \( \mathbf{x}_1 \neq \mathbf{x}_2 \).
	
	To illustrate this, consider a weighted directed graph \( G = (V, E) \), as depicted in Fig. \ref{fig:np}, where \( V = \{1, 2, 3, 4, 5\} \). Let \( \varepsilon_1 = \varepsilon_2 = 1 \), resulting in a final box size of 2 for both parties. Table \ref{tab: G} enumerates all possible paths from the source vertex \( 1 \) to the target vertex \( 5 \) within the solution space, along with their corresponding objective values for the two parties. Boldface values highlight objective values that are not dominated by other solutions.
	
	\begin{figure}[h]
		\centering
		\small
		\includegraphics[width=0.9\columnwidth]{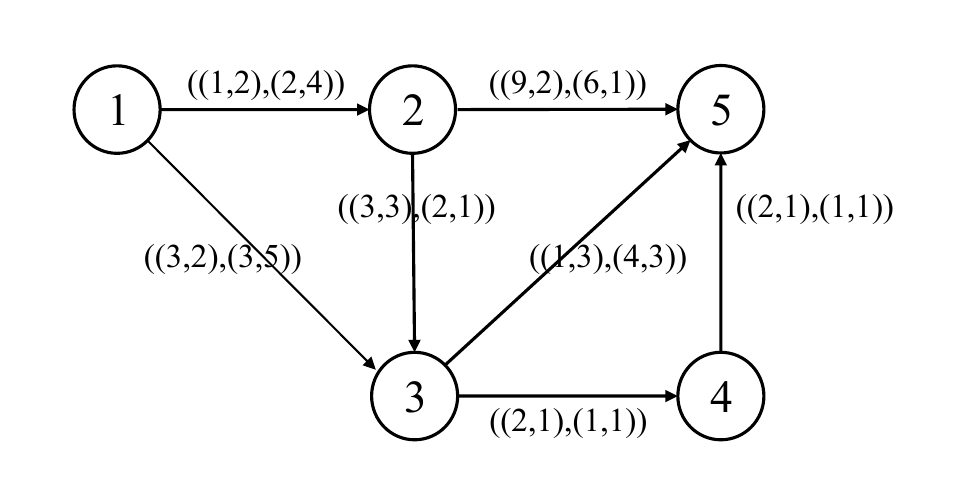}
		\caption{The weighted directed graph $G$.}
		\label{fig:np}
	\end{figure}
	
	\begin{table}[h]
		\centering
		\caption{All possible solutions from vertex $1$ to vertex $5$ and their respective objective values on the two parties.}
		\label{tab: G}
		\begin{tabular}{ccc}
			\toprule
			Paths    & Objective value on party $1$    & Objective value on party $2$               \\ \midrule
			$(1,2,5)$      & $(10,4)$           & $\mathbf{(8,5)}$            \\
			$(1,2,3,5)$    & $(5,8)$           & $(8,8)$        \\
			$(1,2,3,4,5)$  & $(8,7)$           & $(6,7)$        \\
			$(1,3,5)$      & $\mathbf{(4,5)}$  & $(7,8)$        \\
			$\mathbf{(1,3,4,5)}$    & $\mathbf{(7,4)}$  & $\mathbf{(5,7)}$        \\ \bottomrule
		\end{tabular}
	\end{table}
	
	For party 1, the Pareto set from vertex 1 to vertex 5 is \(\Phi_1 = \{(1, 3, 5), (1, 3, 4, 5)\} \). For party 2, the Pareto set is \( \Phi_2 = \{(1, 2, 5), (1, 3, 4, 5)\} \). Therefore, $\mathbf{x}^* =  (1, 3, 4, 5) $ is the common Pareto-optimal solution. 
	
	Now, consider the approximate solutions obtained via the evolutionary algorithm. Let the path sets for the two parties be \( P_1 = \{ (1, 3, 5), (1, 3, 4, 5) \} \) for party 1, and \( P_2 = \{ (1, 2, 5) \} \) for party 2. The corresponding box indices for party 1 and party 2 are $b_{r_1}(F_1(P_1)) = \{(2, 2)\}$ and $b_{r_2}(F_2(P_2)) = \{(3, 2)\}$.
	
	For the solutions in \( P_1 \), the box indices for party 2 are $\{(2, 3),(2, 2)\}$, which are not included in \( b_{r_2}(F_2(P_2)) \). This implies that the intersection of \( \Phi_1^{\varepsilon_1} \) and \( \Phi_2^{\varepsilon_2} \) is empty. Thus, despite the existence of a common Pareto solution \( (1, 3, 4, 5) \), we cannot find it in the approximate solution sets at this time.
	
	Therefore, the probability that the intersection of the approximate solution sets is empty is non-zero:
	\(
	\text{Pr} \left( \Phi_1^{\varepsilon_1} \cap \Phi_2^{\varepsilon_2} = \emptyset \right) > 0.
	\)
\end{proof}

However, as an approximation algorithm, this issue can be mitigated by relaxing the approximation ratio of one party. Taking Fig. \ref{fig:np} as an example, we relax the box size of party 2, represented as \(r_2^{\prime} = 1 + \varepsilon_2^{\prime}\), until a solution is found within the common non-dominated box, setting \(\varepsilon_2^{\prime} = 2\) expands the box size for party 2, resulting in \( b_{r_2^{\prime}}(F_2(P_2)) = \{(1,1)\}\). The box indices of the solutions in \(P_1\) for party 2 are \((1,1)\). At this point, the path \((1,3,5)\) and \((1,3,4,5)\) satisfies the required conditions and is output as the final solutions from vertex 1 to vertex 5. 

The pipeline of \(\text{EMPMO}^{\text{SP}}_{\text{simple}}\) for the MPMOSP is outlined in Algorithm \ref{alg:csimple} and consists of two stages. In the first stage, each party independently obtains its approximate Pareto solution set, denoted as \(\Phi_1^{\varepsilon_1}\) and \(\Phi_2^{\varepsilon_2}\). In the second stage, under a relaxed approximation policy, one party, assumed here to be party 1, proposes its approximate Pareto solutions \(\mathbf{x} \in \Phi_1^{\varepsilon_1}\) sequentially. Party 2 then decides whether to relax its approximation ratio \(\varepsilon_2\) and accept the proposed solution as a common solution. This interaction is modeled as an ultimatum game \cite{nowak2000fairness}.

\begin{algorithm}[!h]
	\caption{ Simple Evolutionary Multi-party Multi-objective Optimizer ($\text{EMPMO}_{\text{simple}}^{SP}$) } 
	\label{alg:csimple}
	\begin{algorithmic}[1]
		\For {$m \gets 1$ \textbf{to} $M$}
		\State Set initial population $P_m \leftarrow \{ (1)\}$ for each party;
		\EndFor
		\While {termination condition is not met}
		\For {$m \gets 1$ \textbf{to} $M$}
		\State Randomly select an individual $\mathbf{x} \in P_m$;
		\State Mutate $\mathbf{x}$ to get $\mathbf{x}'$;
		\If{$\nexists \mathbf{z} \in P_m$ such that $F_m(\mathbf{z}) \succ F_{m}(\mathbf{x}^{\prime}) \vee  b_r(F_{m}(\mathbf{z})) \succ b_r(F_{m}(\mathbf{x}^{\prime}))$}
		\State $P_m\leftarrow(P_m \setminus \{\mathbf{z} \in P_m \mid b_r(F_{m}(\mathbf{x}^{\prime}))\succeq
		b_r(F_{m}(\mathbf{z}))\})\cup\{\mathbf{x}'\}$;
		\EndIf
		\EndFor
		\EndWhile
		\For {$i \gets 1$ \textbf{to} $\mid P_1 \mid$} 
		\State Calculate the minimum $\varepsilon_{2,i}$ corresponding to $P_1(i,:)$
		\If {$ \varepsilon_{2,i} \leq \varepsilon_{2}^{max} $}
		\State $ \varepsilon_{2,i}^{\prime} = \varepsilon_{2,i} $
		\EndIf
		\EndFor
		\State Find the minimum $\varepsilon_{2,i}^{\prime}$ and its corresponding solution for each endpoint in $P_1$ 
		\State All solutions found constitute the final solution set 
	\end{algorithmic}
\end{algorithm}

\begin{definition}[Ultimatum Game]\label{def:uGame}
	For BPMOSP, the process of obtaining an approximate common Pareto set from the two parties' approximate Pareto sets, \(\Phi_1^{\varepsilon_1}\) and \(\Phi_2^{\varepsilon_2}\), is modeled as an ultimatum game \(\{\mathcal{M}, \Omega, \mathcal{U}\}\), where:
	\begin{itemize}
		\item \(\mathcal{M} = \{1, 2\}\) denotes the set of decision-makers (parties), with party 1 acting as the proposer and party 2 as the responder.
		\item \(\Omega = \{\Omega_1, \Omega_2\} \), where \( \Omega_1 = \text{Paths}(s, v)\) is the policy space of party 1, and \( \Omega_2 = \{\text{Accept}, \text{Reject}\}\) is the policy space of party 2.
		\item \(\mathcal{U} = (u_1, u_2)\) represents the utility functions of the two parties. For party 1, the utility function is:
		\[
		u_1(p, y) =
		\begin{cases}
			U_1^{\max}, & \text{if } p \in \Phi_1^{\varepsilon_1}\, \text{and}\, y=\text{Accept}, \\
			0, & \text{otherwise},
		\end{cases}
		\]
		where \(U_1^{\max}\) is the maximum utility when \(p \in \Phi_1^{\varepsilon_1}\).
		
		For party 2, the utility function is:
		\[
		u_2(p,y) =
		\begin{cases}
			U_2^{\max}, & \text{if } p \in \Phi_2^{\varepsilon_2}\ \\ & \text{and}\ =\text{Accept}, \\
			U_2^{\max} \frac{\varepsilon_2^{\max} - \varepsilon_2^{\prime}}{\varepsilon_2^{\max} - \varepsilon_2}, & \text{if } p \in \Phi_2^{\varepsilon_2^{\max}}, p \notin \Phi_2^{\varepsilon_2}\ \\ & \text{and}\  y=\text{Accept}, \\
			0, & \text{otherwise},
		\end{cases}
		\]
		where \(U_2^{\max}\) is the maximum utility when \(p \in \Phi_2^{\varepsilon_2}\), \(\varepsilon_2^{\max}\) is the maximum approximation ratio threshold acceptable to party 2, and $\varepsilon_2^{\prime}\in[\varepsilon_2, \varepsilon_2^{\max}]$ indicates the ratio to which path \(\mathbf{x}\) approximates a Pareto-optimal solution for party 2. The linear term ensures that the utility decreases proportionally as \(p\) deviates from \(\varepsilon_2\) within the range \([\varepsilon_2, \varepsilon_2^{\max}]\).
	\end{itemize}
\end{definition}

Definition 11 is essentially a variation of the ultimatum game. When two parties cannot reach a consensus, one of them will relax its conditions to seek cooperation, but will not exceed the bottom line. In this definition, we assume that party 2 will relax his conditions, and his benefits will also decrease when he relaxes his conditions. According to the payoff functions of both parties, when consensus is reached, both parties achieve the highest payoff, and the corresponding strategy is the Nash equilibrium.

\begin{theorem}
	Let \( \varepsilon_1 > 0 \), \( \varepsilon_2 > 0 \), \( \varepsilon = \min\{\varepsilon_1, \varepsilon_2\} \), and $1 + \varepsilon_1$ and $1 + \varepsilon_2$ represent the approximation ratios for party 1 and 2, respectively. Let $\delta=\frac{n \log \left(n w^{\max}\right)}{\log(1+\varepsilon)}$ and \( k = \max\{k_1, k_2\} \) represent the maximum number of objectives among all parties. The expected number of generations of  \( \text{EMPMO}^{\text{SP}}_{\text{simple}} \)  applied to BPMOSP, which achieves the Nash equilibrium of the ultimatum game for both parties is bounded by:
	$$
	O\left(n^3 \cdot \delta^{k-1} \log\left(n \delta^{k-1}\right)\right).
	$$
\end{theorem}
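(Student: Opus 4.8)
The plan is to decompose the runtime of \( \text{EMPMO}^{\text{SP}}_{\text{simple}} \) into the two phases dictated by its structure: (i) the parallel evolutionary search in which each party independently evolves its box-indexed population toward a \((1+\varepsilon_m)\)-approximate Pareto set, and (ii) the post-processing ultimatum game that relaxes party 2's ratio until consensus. Phase (ii) is deterministic given the outputs of phase (i), so the whole expected-generations bound comes from phase (i), and I would argue that the two independent searches can be analysed together (running in lockstep within each generation) so that their runtimes add rather than multiply. Thus it suffices to bound the expected number of generations for a single party's DEMO-style search on MOSP to fill every reachable non-dominated box, and then take the maximum (equivalently, the sum, up to a constant factor) over the two parties.

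For the single-party analysis I would follow Horoba's framework. First I would bound the population size: each objective value lies in \([1,(n-1)w^{\max}]\), so after taking \(\log_{r}\) with \(r = 1+\varepsilon\) the number of distinct box-coordinates per objective is \(O\!\left(\frac{\log(n w^{\max})}{\log(1+\varepsilon)}\right)\); multiplying by the path-length factor \(n\) to account for the target vertex and the extra room DEMO needs, one gets that each target vertex carries at most \(O(\delta^{k-1})\) boxes along its Pareto frontier with \(\delta = \frac{n\log(n w^{\max})}{\log(1+\varepsilon)}\), hence a population of size \(|P| = O(n\,\delta^{k-1})\) over all \(n\) vertices. Next I would set up the standard progress argument: order vertices by the (box-)length of their shortest path; to create a non-dominated box at a vertex \(v\) it suffices to select the right individual in \(P\) (probability \(\ge 1/|P|\)) and append the right single edge (probability \(\ge 1/|E| = \Omega(1/n^2)\) under the mutation operator, or \(\Omega(1/n)\) if mutation appends one vertex), so one successful ``box-creating'' step occurs in expected \(O(n^2 |P|) = O(n^3\,\delta^{k-1})\) generations. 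Finally I would count how many such steps are needed in the worst case: at most \(O(|P|) = O(n\,\delta^{k-1})\) boxes must be created, and — as is standard in these analyses — one uses a coupon-collector / union-bound over the \(\log\) of the number of targets to pay a \(\log(n\,\delta^{k-1})\) factor rather than a linear one, yielding \(O\!\left(n^3\,\delta^{k-1}\log(n\,\delta^{k-1})\right)\). Taking the sum over the two parties and observing \(\delta\) is defined with \(\varepsilon = \min\{\varepsilon_1,\varepsilon_2\}\) (so it dominates both parties' individual \(\delta_m\)) gives the claimed bound.

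The main obstacle I anticipate is the population-size bound, i.e.\ justifying the \(\delta^{k-1}\) (rather than \(\delta^{k}\)) exponent and the extra \(n\) factor. The \(k-1\) exponent is not automatic: it relies on the fact that along a Pareto frontier one coordinate is effectively determined (monotone) by the others, so the number of \emph{non-dominated} boxes per vertex is one lower-dimensional slice of the grid; making this rigorous requires the box-dominance antichain argument from \cite{laumanns2002combining, horoba2010exploring} and care that DEMO's diversity mechanism does not keep more than one individual per box. The second delicate point is the interaction with the ultimatum-game phase: I must verify that once party 1 has its full \((1+\varepsilon_1)\)-approximate set and party 2 its full \((1+\varepsilon_2)\)-approximate set, the relaxation loop over \(|P_1| = O(n\,\delta^{k-1})\) proposals terminates at the Nash equilibrium within the already-accounted budget (it adds only \(O(n\,\delta^{k-1})\) deterministic work, negligible against the evolutionary cost), and that such a consensus box is guaranteed to exist because a common Pareto-optimal path \(\mathbf{x}^*\) exists and \(\varepsilon_2^{\max}\) is assumed large enough — this is exactly where the standing assumption of a common solution, together with the relaxation argument illustrated on Fig.~\ref{fig:np}, is used.
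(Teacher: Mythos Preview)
Your proposal is correct and follows essentially the same approach as the paper: decompose into the two phases, observe that the ultimatum-game phase costs no additional generations, bound each party's evolutionary phase by Horoba's DEMO analysis on MOSP, and dominate both parties' per-party bounds via \(\varepsilon=\min\{\varepsilon_1,\varepsilon_2\}\) and \(k=\max\{k_1,k_2\}\). The paper's own proof is actually more terse than yours---it simply invokes Horoba's \(O\!\left(n^3 \delta_m^{k_m-1}\log(n\delta_m^{k_m-1})\right)\) bound as a black box for each party and sums, whereas you (correctly) sketch the population-size and progress-step arguments that underlie it; your discussion of the \(k-1\) exponent and the antichain argument is more explicit than anything the paper writes here.
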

\begin{proof}
	In the first phase of \( \text{EMPMO}^{\text{SP}}_{\text{simple}} \), the Pareto approximate solution sets for both parties are independently searched. In Ref. \cite{horoba2010exploring}, the entire process is divided into $n-1$ stages based on path length. In each stage, a path is selected for mutation until the population reaches the conditions for approximate dominance at that stage. The mutation strategy used is to add or delete nodes with equal probability only at the endpoints of the path.
	We removed two of the simplification operations and obtained the running time for each party is bounded by:
	\[
	O\left(n^3 \cdot \delta_m^{k_m-1} \cdot \log\left(n \delta_m^{k_m-1}\right)\right),
	\]
	where $\delta_m=\frac{n \log \left(n w_m^{\max}\right)}{\log(1+\varepsilon_m)}$. Thus, the overall running time for the first phase is the sum of the running times for both parties:
	\[
	\begin{aligned}
		&O\left(\sum_{m=1}^2 n^3 \cdot \delta_m^{k_m-1} \log\left(n \delta_m^{k_m-1}\right)\right) \\
		\leq & O\left(n^3 \cdot \delta^{k-1} \log\left(n \delta^{k-1}\right)\right),
	\end{aligned}
	\]
	where $\delta=\frac{n \log \left(n w^{\max}\right)}{\log(1+\varepsilon)}$. In the second phase, we sequentially check each of the first party's approximate Pareto solutions to verify whether it satisfies the Nash equilibrium of the ultimatum game as defined in Definition 11. This step does not require additional evaluation rounds.
\end{proof}

\subsection{Runtime Analysis of Consensus-based Evolutionary Bi-party Multi-objective Optimizer}
The \( \text{EMPMO}^{\text{SP}}_{\text{simple}} \) algorithm can be extended to solve the BPMOSP with asymmetric influence between decision-makers, where one party is required to relax its predefined approximation ratio. To promote fairness, the evolutionary process aims to maximize consensus between the two parties, thereby facilitating a more balanced negotiation.

It is well established that when a path's weight is determined by a single value, the shortest path possesses the optimal substructure property \cite{bellman1958routing}, meaning that every subpath of the shortest path is itself a shortest path. Similarly, in the case of the common shortest path with multiple weights for two parties, analogous properties hold. Lemma \ref{optimal substructure} formalizes the optimal substructure property for the common shortest path.

\begin{lemma} \label{optimal substructure} 
	Given a directed weighted graph \( G = (V, E, W) \), if path $\mathbf{x}=(v_{0},v_{1},...,v_{l-1},v_{l})$ is a common shortest path from source vertex $v_{0}$ to target vertex $v_{l}$, then $\mathbf{x}'=(v_{0},v_{1},...,v_{l-1})$ is a common shortest path from source vertex $v_{0}$ to target vertex $v_{l-1}$.
\end{lemma}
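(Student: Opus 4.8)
The plan is to adapt the classical cut-and-paste argument for optimal substructure to the multi-party, multi-objective setting, using the fact that edge weights are positive. Suppose $\mathbf{x}=(v_0,v_1,\dots,v_{l-1},v_l)$ is a common shortest path from $v_0$ to $v_l$, i.e.\ $\mathbf{x}$ lies in $\Phi_1\cap\Phi_2$ for the subproblem with target $v_l$. Write $\mathbf{x}'=(v_0,\dots,v_{l-1})$ for its prefix and let $e=(v_{l-1},v_l)$ be the final edge, so that for every party $m$ and objective $k$ we have the additive decomposition $f_{mk}(\mathbf{x}) = f_{mk}(\mathbf{x}') + w_{mk}(e)$. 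First I would argue by contradiction: assume $\mathbf{x}'$ is \emph{not} a common shortest path to $v_{l-1}$, meaning there is some path $\mathbf{y}'$ from $v_0$ to $v_{l-1}$ that dominates $\mathbf{x}'$ for at least one party, say $\mathbf{y}' \succ_{m_0} \mathbf{x}'$ for some $m_0\in\{1,2\}$ (here the contradiction hypothesis is the negation of ``$\mathbf{x}'\in\Phi_1\cap\Phi_2$''; since $\mathbf{x}'$ may fail to be optimal for either party, I handle a single arbitrary party $m_0$).

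Next I would construct the spliced path $\mathbf{y} = (\mathbf{y}', v_l)$, i.e.\ append the edge $e=(v_{l-1},v_l)$ to $\mathbf{y}'$; this is a valid path from $v_0$ to $v_l$ since $e\in E$. By the additive structure of the objective functions, for every objective $k$ of party $m_0$,
\[
f_{m_0 k}(\mathbf{y}) = f_{m_0 k}(\mathbf{y}') + w_{m_0 k}(e) \leq f_{m_0 k}(\mathbf{x}') + w_{m_0 k}(e) = f_{m_0 k}(\mathbf{x}),
\]
with strict inequality in at least one coordinate because $\mathbf{y}'\succ_{m_0}\mathbf{x}'$ and the term $w_{m_0 k}(e)$ is common to both sides. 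Hence $\mathbf{y}\succ_{m_0}\mathbf{x}$, i.e.\ $\mathbf{x}$ is dominated for party $m_0$ among paths to $v_l$, contradicting $\mathbf{x}\in\Phi_{m_0}$ and therefore $\mathbf{x}\in\Phi_1\cap\Phi_2$. This contradiction establishes that $\mathbf{x}'$ is Pareto-optimal for both parties among paths to $v_{l-1}$, i.e.\ a common shortest path.

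The argument is short and the only genuinely delicate point is getting the translation-invariance of the dominance relation right: I need that adding the same vector $w(e)$ to both objective vectors preserves the (strict) dominance $\succ_m$, which holds precisely because dominance is defined coordinatewise by inequalities and weak inequalities are preserved under translation while the strict coordinate stays strict. A secondary subtlety worth a sentence is that one should phrase the contradiction hypothesis carefully — ``$\mathbf{x}'$ is not a common shortest path'' unpacks to ``$\mathbf{x}'\notin\Phi_1$ or $\mathbf{x}'\notin\Phi_2$'', so there exists \emph{some} party $m_0$ and \emph{some} dominating path, which is exactly what the cut-and-paste step consumes; positivity of weights is not strictly needed for this direction but is consistent with the standing assumptions. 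I expect no real obstacle here; the main care is simply stating the decomposition $f_{mk}(\mathbf{x}) = f_{mk}(\mathbf{x}') + w_{mk}(v_{l-1},v_l)$ cleanly and invoking it uniformly over all $k$.
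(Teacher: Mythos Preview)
Your proposal is correct and follows essentially the same cut-and-paste contradiction argument as the paper: assume the prefix is not common Pareto optimal, splice in a dominating subpath, and contradict optimality of $\mathbf{x}$. Your handling of the negation --- unpacking ``not a common shortest path'' as $\mathbf{x}'\notin\Phi_{m_0}$ for \emph{some} party $m_0$ rather than assuming a single path dominates $\mathbf{x}'$ for all parties simultaneously --- is in fact more careful than the paper's own phrasing.
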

\begin{proof}
	Let \( \mathbf{x} \) be a common shortest path from source vertex \( v_0 \) to target vertex \( v_l \). This implies that, among all other paths from \( v_0 \) to \( v_l \), no path dominates \( \mathbf{x} \) for all parties simultaneously. Let \( F_m(\mathbf{x}) = (f_{m1}, \dots, f_{mk_m}) \) and \( w_m(v_{l-1}, v_l) = (w_{m1}, \dots, w_{mk_m}) \). Then, we have \( F_m(\mathbf{x}') = (f_{m1} - w_{m1}, \dots, f_{mk_m} - w_{mk_m}) \).
	
	Assume, for contradiction, that \( \mathbf{x}' \) is not a common shortest path from \( v_0 \) to \( v_{l-1} \). This means there exists a path \( \mathbf{z}' = (v_0, v_1, \dots, v_{l-1}) \) from \( v_0 \) to \( v_{l-1} \) that dominates \( \mathbf{x}' \) for all parties. Let \( F_m(\mathbf{z}') = (f_{m1}^\prime, \dots, f_{mk_m}^\prime) \). By Definition 2, we have \( f_{mk}^\prime \leq f_{mk} - w_{mk} \) for all \( 1 \leq k \leq k_m \), and \( F_m(\mathbf{z}') \neq F_m(\mathbf{x}') \).
	
	Using \( \mathbf{z}' \), we can construct a new path \( \mathbf{z} = (v_0, v_1, \dots, v_{l-1}, v_l) \) with \( F_m(\mathbf{z}) = (f_{m1}^\prime + w_{m1}, \dots, f_{mk_m}^\prime + w_{mk_m}) \). Since \( f_{mk}^\prime \leq f_{mk} - w_{mk} \) for all \( 1 \leq k \leq k_m \), it follows that \( F_m(\mathbf{z}) \succeq F_m(\mathbf{x}) \), which contradicts the assumption that \( \mathbf{x} \) is a common shortest path. Therefore, the assumption is false, and \( \mathbf{x}' \) must be a common shortest path from \( v_0 \) to \( v_{l-1} \).
\end{proof}

Lemma \ref{optimal substructure} establishes that a common solution exists at each path length. Building on this result, if a common solution is found at every path length, then as the path length increases, a common solution to each target vertex can be identified. Therefore, we propose a consensus-based evolutionary multi-party multi-objective optimizer (\(\text{EMPMO}^{\text{SP}}_{\text{cons}}\)), as presented in Algorithm \ref{alg:ctoc} for the BPMOSP. The details are as follows:

\begin{algorithm}[!h]
	\caption{ Consensus-based Evolutionary Multi-party Multi-objective Optimizer ($\text{EMPMO}_{cons}^{SP}$) } 
	\label{alg:ctoc}
	\begin{algorithmic}[1]
		\Require Parameter $r$ controlling the box size
		\State Initialize the population $P\leftarrow \{ (1)\}$;
		\While {termination condition is not met}
		\State Randomly select an individual $\mathbf{x} \in P$;
		\State Apply the mutation operator on $\mathbf{x}$ to get $\mathbf{x}'$;
		\If{$\exists m, \nexists \mathbf{z} \in P, F_m(\mathbf{z}) \succ F_{m}(\mathbf{x}^{\prime}) \vee  b_r(F_{m}(\mathbf{z})) \succ b_r(F_{m}(\mathbf{x}^{\prime}))$}
		\State $P\leftarrow(P \setminus \{\mathbf{z} \in P \mid \forall m',b_r(F_{m'}(\mathbf{x}^{\prime}))\succeq
		b_r(F_{m'}(\mathbf{z}))\})\cup\{\mathbf{x}'\}$; 
		\EndIf
		\EndWhile
	\end{algorithmic}
\end{algorithm}

During the initialization of the population, we set it to \( \{(1)\} \). Drawing inspiration from common one-bit mutation, we define a novel mutation operator tailored for the BPMOSP, as detailed in Definition \ref{ref:mutation operator}. This mutation operator randomly selects a vertex within the path to mutate. In each iteration, a solution is randomly chosen for mutation. Subsequently, the population is updated. If no solution in the population dominates the mutated solution, and no box index corresponding to any party dominates that of the mutated solution, the new solution is added to the population. Subsequently, any solutions that are dominated by the new solution or share the same box index for all parties are removed from the population. The input parameter r in the algorithm represents the size of the box, and its value is set to $(1+\varepsilon)^{1/(n-1)}$ in subsequent runtime analysis and experiments.

\begin{definition} \label{ref:mutation operator}
	Each mutation operation randomly and uniformly performs one of the following two operations:
	
	\textbf{Add}: Randomly select a vertex \( v_i \) in the path \( \mathbf{x} = (v_0, v_1, \ldots, v_{l-1}, v_l) \), where \( 0 \leq i \leq l \).
	\begin{itemize}
		\item If \( v_i \neq v_l \) (i.e., \( v_i \) is not the target vertex of the path), and there exists \( v' \in V \) such that \( (v_i, v') \in E \) and \( (v', v_{i+1}) \in E \), the new path after mutation is \( \mathbf{x}' = (v_0, \ldots, v_i, v', v_{i+1}, \ldots, v_l) \).
		\item If \( v_i = v_l \) (i.e., \( v_i \) is the target vertex of the path), and there exists \( v' \in V \) such that \( (v_l, v') \in E \), the new path after mutation is \( \mathbf{x}' = (v_0, v_1, \ldots, v_l, v') \).
	\end{itemize}
	
	\textbf{Delete}: Randomly select a vertex \( v_i \) in the path \( \mathbf{x} = (v_0, v_1, \ldots, v_{l-1}, v_l) \), where \( 1 \leq i \leq l-1 \).
	\begin{itemize}
		\item If \( 1 \leq i \leq l-2 \) and \( (v_i, v_{i+2}) \in E \), the new path after mutation is \( \mathbf{x}' = (v_0, \ldots, v_i, v_{i+2}, \ldots, v_l) \).
		\item If \( i = l-1 \), the new path after mutation is \( p' = (v_0, \ldots, v_{l-1}) \).
	\end{itemize}
\end{definition}

A more detailed explanation is provided in Example \ref{example2}.

\begin{example} \label{example2}
	Using the weighted directed graph in Figure \ref{fig:np}, the common solutions from vertex \(1\) to all other vertices can be determined as follows:
	\begin{itemize}
		\item From vertex \(1\) to vertex \(2\): \( (1, 2) \),
		\item From vertex \(1\) to vertex \(3\): \( (1, 3) \),
		\item From vertex \(1\) to vertex \(4\): \( (1, 3, 4) \),
		\item From vertex \(1\) to vertex \(5\): \( (1, 3, 4, 5) \).
	\end{itemize}
	
	Let \( \mathbf{x} = (1, 3, 4) \) and \( \mathbf{z} = (1, 4) \).
	
	\begin{table}[h]
		\centering
		\caption{All possible paths from source vertex \(1\) to target vertices except \(5\).}
		\label{tab: G1}
		\begin{tabular}{ccc}
			\toprule
			Paths          & Objective Value for Party \(1\) & Objective Value for Party \(2\) \\ \midrule
			\( \mathbf{(1, 2)} \)  & \( \mathbf{(1, 2)} \) & \( \mathbf{(2, 4)} \) \\ \midrule
			\( (1, 2, 3) \)  & \( (4, 5) \)           & \( (4, 5) \) \\
			\( \mathbf{(1, 3)} \)  & \( \mathbf{(3, 2)} \) & \( \mathbf{(3, 5)} \) \\ \midrule
			\( \mathbf{(1, 3, 4)} \) & \( \mathbf{(5, 3)} \) & \( \mathbf{(4, 6)} \) \\
			\( (1, 2, 3, 4) \) & \( (6, 6) \)          & \( (5, 6) \) \\
			\bottomrule
		\end{tabular}
	\end{table}
	
	During the mutation process, the path length increases incrementally. When the length is \(1\), non-common solutions like \(\mathbf{z}\) are generated, but the common solution \( \mathbf{x}' = (1, 3) \) is present in the population. Although a common solution from \(1\) to \(4\) is not immediately available, the population contains the common solution from \(1\) to \(3\). Through mutation, path \(\mathbf{x}\) (a common solution from \(1\) to \(4\)) can be identified.
	
	According to the update rules defined in Definition \ref{ref:mutation operator}, path \(\mathbf{z}\) will naturally be removed from the population. Even if \(\mathbf{x}\) is not obtained through mutation, the population will retain at least one solution satisfying conditions for each endpoint. If \(\mathbf{z}\) is retained, it indicates that it satisfies the approximate domination criteria for both parties, allowing a solution that meets the conditions to still be found.
	
	Thus, for any path length and target vertex, it suffices to identify at least one new common solution at each step.
\end{example}

To analyze the expected running time of $\text{EMPMO}^{\text{SP}}_{\text{cons}}$ on BPMOSP, several lemmas are required to assist in the proof.

\begin{lemma} \label{pmax}
	Let $r>1$ and $m \in \{1, 2\}$. When using $\text{EMPMO}^{\text{SP}}_{\text{cons}}$ to optimize all $F_m$, the maximum population size is upper bounded by
	$$\min_{1\le m \le M}(n-1)\cdot(\lfloor\log_{r}((n-1)w_{m}^{\max})\rfloor+1)^{k_{m}-1}+1$$
\end{lemma}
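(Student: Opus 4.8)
The plan is to bound the population size by counting, for each target vertex separately, how many distinct ``boxes'' can simultaneously be occupied by mutually non-dominated solutions, and then sum over the $n-1$ non-source vertices. First I would fix a party index $m$ and observe that by the update rule in Algorithm~\ref{alg:ctoc}, two solutions in $P$ sharing the same target vertex $v$ cannot have the same box index $b_r(F_m(\cdot))$ — otherwise one would have been removed when the other was inserted (the condition $\forall m',\, b_r(F_{m'}(\mathbf{x}')) \succeq b_r(F_{m'}(\mathbf{z}))$ includes party $m$, and equal box indices trivially satisfy $\succeq$). Hence the number of solutions in $P$ with target vertex $v$ is at most the number of distinct box indices $b_r(F_m(\mathbf{x}))$ that can be realized by paths ending at $v$ and coexist in $P$.

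Next I would bound that number of distinct box indices. For a path $\mathbf{x}$ of length at most $n-1$ ending at $v$, each coordinate satisfies $1 \le f_{mk}(\mathbf{x}) \le (n-1) w_m^{\max}$, so $\lfloor \log_r(f_{mk}(\mathbf{x})) \rfloor$ takes at most $\lfloor \log_r((n-1) w_m^{\max}) \rfloor + 1$ distinct values (using $w^{\min}\ge 1$, so the smallest value is $0$). If we used all $k_m$ coordinates freely this would give $(\lfloor \log_r((n-1) w_m^{\max}) \rfloor + 1)^{k_m}$, but the key refinement — the reason the exponent is $k_m - 1$ rather than $k_m$ — is that among the coexisting solutions with a common target vertex, no box index $\succ$-dominates another (again by the insertion guard $\nexists \mathbf{z}\in P,\, b_r(F_m(\mathbf{z})) \succ b_r(F_m(\mathbf{x}'))$). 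An antichain in the product order on $\{0,1,\dots,N\}^{k_m}$, where the first coordinate ranges over $N+1$ values, has size at most $(N+1)^{k_m-1}$: project away the first coordinate; each fiber of this projection is a chain in the remaining $k_m-1$ coordinates only if... more carefully, one shows that for each fixed value of the last $k_m-1$ coordinates there is at most one element of the antichain (two elements agreeing on the last $k_m-1$ coordinates are comparable via the first coordinate), giving the $(N+1)^{k_m-1}$ bound. Applying this with $N = \lfloor \log_r((n-1) w_m^{\max}) \rfloor$ bounds the per-vertex count by $(\lfloor \log_r((n-1) w_m^{\max}) \rfloor + 1)^{k_m-1}$.

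Then I would sum over the $n-1$ non-source target vertices, add $1$ for the trivial path $(1)$ at the source vertex (which always stays in $P$ and occupies no ``box'' in the above count), obtaining the bound $(n-1)\cdot(\lfloor\log_{r}((n-1)w_{m}^{\max})\rfloor+1)^{k_{m}-1}+1$ for each party $m$. Since the insertion and deletion guards in Algorithm~\ref{alg:ctoc} are quantified over \emph{all} parties $m'$, the bound from \emph{any single} party $m$ is valid simultaneously, so the population size is at most the minimum over $m \in \{1,\dots,M\}$ of these quantities, which is exactly the claimed bound. The main obstacle I expect is the antichain argument that yields exponent $k_m-1$ instead of $k_m$: one must carefully verify that the deletion rule (removing $\mathbf{z}$ only when $\mathbf{x}'$ weakly dominates its box index for \emph{all} parties) combined with the insertion guard genuinely forces the coexisting box indices for party $m$ to form an antichain with no two elements agreeing on all but one coordinate — this is where a subtle interaction between the multi-party guard and the single-party counting could go wrong, and it likely mirrors the corresponding lemma in Horoba~\cite{horoba2010exploring} for the single-party DEMO, adapted to the consensus update rule.
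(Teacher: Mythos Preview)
Your approach mirrors the paper's proof exactly: bound the number of non-dominated boxes per target vertex for a fixed party $m$ (the paper writes this as $\prod_{k=2}^{k_m}|b_r(f_{mk}(\cdot))|$, which is precisely your antichain/projection argument giving the exponent $k_m-1$), multiply by the $n-1$ non-source target vertices, add $1$ for the trivial source path $(1)$, then take the minimum over parties. One correction: your parenthetical justification that equal box indices for party $m$ alone force removal is wrong as written, since the deletion guard requires $\forall m'$, not just the fixed $m$ --- this is exactly the concern you correctly flag at the end. The paper resolves it with the single assertion that ``the condition for saving a solution is stricter than considering just one party alone,'' without further detail; your proof should either adopt that assertion or supply the missing argument.
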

\begin{proof}
	To determine the maximum population size, we divide the analysis into two parts. The first part considers the optimization of any single \( F_m \), while the second part extends the analysis to all \( F_m \).
	
	Let \( \mathrm{Path}_{1,i} \) represent all possible paths from source vertex \( 1 \) to target vertex \( i \). For an objective value \( f_{mk} \), there is a cardinality \( |b_r(f_{mk}(\mathrm{Path}_{1,i}))| \) for the corresponding \( m \)-th box. Given the following:
	$$\begin{aligned}
		|b_{r}(f_{mk}(\mathrm{Path}_{1,i}))| &\leq b_r(f_{mk}^{\max}) - b_r(f_{mk}^{\min}) + 1\\
		&= b_r((n-1)w_{m}^{\max})-b_r(1)+1 \\
		& =\lfloor\log_{r}\left(\left(n-1\right)w_{m}^{\max}\right)\rfloor+1.
	\end{aligned}$$
	Considering the population selecting rules in $\text{EMPMO}^{\text{SP}}_{\text{cons}}$, at most one candidate solution from each non-dominated box can enter the population. Therefore, we have:
	$$\begin{aligned}
		|b_r(F_{m}(\mathrm{Path}_{1,i}))| & \leq \prod_{k=2}^{k_m}|b_r(f_{mk}(\mathrm{Path}_{1,i}))| \\
		& \leq (\lfloor \log_{r}((n-1)w_{m}^{\max})\rfloor+1)^{k_{m}-1}
	\end{aligned}$$
	
	Thus, for the \( m \)-th party, \( \mathrm{Path}_{1,i} \) can have at most \( \left( \lfloor \log_r \left( (n-1) w_m^{\max} \right) \rfloor + 1 \right)^{k_m - 1} \) distinct boxes that can enter the population. Excluding the case where the source vertex and the target vertex are the same, there are \( n - 1 \) distinct target vertices. Therefore, for any \( F_m \), there are at most \( (n-1) \cdot \left( \lfloor \log_r \left( (n-1) w_m^{\max} \right) \rfloor + 1 \right)^{k_m - 1} \) boxes that can be included in the population.
	
	In the second part, the condition for saving a solution is stricter than considering just one party alone. Hence, when optimizing all \( F_m \), the maximum population size is:
	\[
	\min_{1\le m \le M} (n-1) \cdot \left( \lfloor \log_r \left( (n-1) w_m^{\max} \right) \rfloor + 1 \right)^{k_m - 1} + 1
	\]
	The final constant \( 1 \) accounts for the case where the path length is zero, corresponding to the set \( \{(1)\} \).
\end{proof}

\begin{lemma} \label{mutate}
	Let \( r > 1 \), \( m \in \{1, 2\} \), and \( \mathbf{x} = (v_{0}, v_{1}, \dots, v_{i-1}, v_{i}) \) be a path of length \( l \), where \( 0 \leq i \leq n - 2 \). Let \( \mathbf{z} = (v_{0}, u_{1}, \dots, u_{j-1}, u_{j}) \in P \) be a path in the population \( P \) such that \( u_j = v_i \). If \( F_{m}(\mathbf{z}) \succeq_{r^i} F_{m}(\mathbf{x}) \) for all \( m \), then \( F_{m}(\mathbf{z}') \succeq_{r^i} F_{m}(\mathbf{x}') \) for all \( m \), where \( \mathbf{x}' = (v_{0}, v_{1}, \dots, v_{i}, v_{i+1}) \) and \( \mathbf{z}' = (v_{0}, u_{1}, \dots, u_{j}, v_{i+1}) \).
\end{lemma}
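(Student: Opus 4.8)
The plan is to argue coordinatewise, exploiting that $\mathbf{x}'$ and $\mathbf{z}'$ are obtained from $\mathbf{x}$ and $\mathbf{z}$ by appending the \emph{same} edge $(v_i,v_{i+1})$. This is legitimate: since $u_j=v_i$, the path $\mathbf{z}'=(v_0,u_1,\dots,u_j,v_{i+1})$ is well defined (the edge $(v_i,v_{i+1})$ exists because $\mathbf{x}'$ is a path), and the constraint $i\le n-2$ keeps $\mathbf{x}'$ an admissible path of length $i+1\le n-1$; moreover $\mathbf{x}'$ and $\mathbf{z}'$ share the target vertex $v_{i+1}$, so $\varepsilon$-domination between them is defined per Definition~\ref{def:epsilon}. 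First I would record, for every party $m$ and objective $k\in\{1,\dots,k_m\}$, the identities $f_{mk}(\mathbf{x}')=f_{mk}(\mathbf{x})+w_{mk}(v_i,v_{i+1})$ and $f_{mk}(\mathbf{z}')=f_{mk}(\mathbf{z})+w_{mk}(v_i,v_{i+1})$, which are immediate from the definition of $f_{mk}$ as a sum of edge weights along the path.

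Next, fix $m$ and $k$. The hypothesis $F_m(\mathbf{z})\succeq_{r^i}F_m(\mathbf{x})$ unfolds (Definition~\ref{def:epsilon}) to $f_{mk}(\mathbf{z})\le r^i f_{mk}(\mathbf{x})$. The crucial step is then to absorb the trailing weight: since $r>1$ and $i\ge 0$ we have $r^i\ge 1$, and since all edge weights are positive (recall $w^{\min}\ge 1$ in the BPMOSP), $w_{mk}(v_i,v_{i+1})\le r^i\,w_{mk}(v_i,v_{i+1})$. Combining the identity for $\mathbf{z}'$ with these two estimates yields
\[
f_{mk}(\mathbf{z}')\le r^i f_{mk}(\mathbf{x})+r^i w_{mk}(v_i,v_{i+1})=r^i f_{mk}(\mathbf{x}').
\]
As this holds for every $k\in\{1,\dots,k_m\}$ and the target vertices coincide, $F_m(\mathbf{z}')\succeq_{r^i}F_m(\mathbf{x}')$; since $m$ was arbitrary, the claim follows for all $m$.

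There is no substantial obstacle here — the argument is a one-line estimate per coordinate. The only point requiring care, and the reason the conclusion is stated with exponent $i$ rather than the naively expected $i+1$, is the monotonicity remark $r^i\ge 1$: it lets the fresh edge weight be inflated by the same factor $r^i$ already applied to the prefix, which is precisely what prevents the approximation exponent from growing when a path is extended. In the write-up I would flag explicitly where positivity of the weights and $r>1$ are used, and note that the lemma therefore yields the slightly stronger $r^i$-domination, which trivially implies the $r^{i+1}$-domination needed downstream when this bound is chained over path lengths $1,\dots,n-1$ to obtain an overall $(1+\varepsilon)$-approximation.
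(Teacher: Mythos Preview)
Your proposal is correct and follows essentially the same approach as the paper: both argue coordinatewise, use the additivity identities $f_{mk}(\mathbf{x}')=f_{mk}(\mathbf{x})+w_{mk}(v_i,v_{i+1})$ and $f_{mk}(\mathbf{z}')=f_{mk}(\mathbf{z})+w_{mk}(v_i,v_{i+1})$, and then invoke $r^i\ge 1$ to absorb the appended edge weight into the factor $r^i$. Your write-up is in fact slightly more careful than the paper's in making explicit where positivity of weights, $r>1$, and the coincidence of target vertices are used.
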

\begin{proof}
	Given that \( F_{m}(\mathbf{z}) \succeq_{r^i} F_{m}(\mathbf{x}) \), it follows that \( f_{mk}(\mathbf{z}) \leq r^{i} \cdot f_{mk}(\mathbf{x}) \) for all \( 1 \leq k \leq k_m \). For the extended paths \( \mathbf{z}' \) and \( \mathbf{x}' \), we have:
	\[
	f_{mk}(\mathbf{z}') = f_{mk}(\mathbf{z}) + w_{mk}((u_{j}, v_{i+1}))
	\]
	\[
	f_{mk}(\mathbf{x}') = f_{mk}(\mathbf{x}) + w_{mk}((v_{i}, v_{i+1})).
	\]
	
	From \( f_{mk}(\mathbf{z}) \leq r^{i} \cdot f_{mk}(\mathbf{x}) \), it follows that:
	\[
	\begin{aligned}
		f_{mk}(\mathbf{z}) + w_{mk}(v_{i}, v_{i+1}) & \leq r^{i} \cdot f_{mk}(\mathbf{x}) + w_{mk}(v_{i}, v_{i+1}) \\
		& \leq r^{i} \cdot f_{mk}(\mathbf{x}) + r^{i} \cdot w_{mk}(v_{i}, v_{i+1}).
	\end{aligned}
	\]
	
	Thus:
	\[
	f_{mk}(\mathbf{z}') \leq r^{i} \cdot f_{mk}(\mathbf{x}'),
	\]
	
	which implies that \( F_{m}(\mathbf{z}') \succeq_{r^i} F_{m}(\mathbf{x}') \).
\end{proof}

\begin{lemma} \label{grow}
	Let \( r > 1 \), \( m \in \{1, 2\} \), and \( \mathbf{x} = (v_1, \ldots, v_i) \) be a path in the solution space. The set \( P_i \) consists of paths in the population \( P \) whose target vertices are \( v_i \). If \( P_i \) satisfies \( F_{m}(P_i) \succeq_{r^j} F_{m}(\mathbf{x}) \) for all \( m \), then the set \( P_i' \), consisting of paths with the same target vertex as \( \mathbf{x} \) in the subsequent population \( P' \) obtained through mutation, satisfies \( F_{m}(P_i') \succeq_{r^{j+1}} F_{m}(\mathbf{x}) \) for all \( m \).
\end{lemma}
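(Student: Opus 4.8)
The plan is to read the hypothesis as supplying a single \emph{common witness}: a path $\mathbf{z}\in P_i$ with $F_m(\mathbf{z})\succeq_{r^j}F_m(\mathbf{x})$ holding simultaneously for every party $m\in\{1,2\}$. Under this reading the lemma asserts that a common $r^j$-approximation of $\mathbf{x}$ anchored at the endpoint $v_i$ cannot be lost in one generation of Algorithm~\ref{alg:ctoc}; at worst it is coarsened by a single box-resolution factor, from $r^j$ to $r^{j+1}$. I would prove it by following $\mathbf{z}$ through the population-update step and splitting on whether $\mathbf{z}$ still belongs to the next population $P'$.

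First I would record the elementary fact that $\lfloor\log_r a\rfloor\le\lfloor\log_r b\rfloor$ implies $a<r\,b$ for positive reals $a,b$ (since $\log_r a<\lfloor\log_r a\rfloor+1\le\lfloor\log_r b\rfloor+1\le\log_r b+1$); this is the only place the box index is unpacked. \emph{Case~1: $\mathbf{z}\in P'$.} Then $\mathbf{z}$ still has target vertex $v_i$, so $\mathbf{z}\in P_i'$, and from $f_{mk}(\mathbf{z})\le r^j f_{mk}(\mathbf{x})\le r^{j+1}f_{mk}(\mathbf{x})$ (valid as $r>1$) we get $F_m(\mathbf{z})\succeq_{r^{j+1}}F_m(\mathbf{x})$ for both parties. \emph{Case~2: $\mathbf{z}\notin P'$.} By the update rule of Algorithm~\ref{alg:ctoc} a solution is deleted only in the iteration in which the current mutant $\mathbf{x}'$ is inserted, and only if $b_r(F_{m'}(\mathbf{x}'))\succeq b_r(F_{m'}(\mathbf{z}))$ holds for \emph{every} party $m'$. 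Hence $\mathbf{x}'\in P'$, and since box indices are comparable only between paths with a common target vertex, $\mathbf{x}'$ ends at $v_i$, i.e.\ $\mathbf{x}'\in P_i'$. Applying the elementary fact coordinatewise gives $f_{m'k}(\mathbf{x}')<r\,f_{m'k}(\mathbf{z})$ for all $k$ and both parties, and chaining with the witness bound $f_{m'k}(\mathbf{z})\le r^j f_{m'k}(\mathbf{x})$ yields $f_{m'k}(\mathbf{x}')\le r^{j+1}f_{m'k}(\mathbf{x})$, i.e.\ $F_{m'}(\mathbf{x}')\succeq_{r^{j+1}}F_{m'}(\mathbf{x})$ for both parties, so $\mathbf{x}'$ is the required common witness in $P_i'$.

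The proof is short, and the step needing the most care is Case~2: one must use the exact form of the deletion rule to argue that whenever the common witness is removed it is removed \emph{because} the inserted mutant box-dominates it for \emph{all} parties, so the mutant is itself a common witness, merely one box coarser. This is precisely the property the consensus update buys — in Algorithm~\ref{alg:ctoc} a deletion is triggered only by a path that box-dominates the outgoing one for all parties at once — and it is exactly what $\text{EMPMO}^{\text{SP}}_{\text{simple}}$ lacks, as Lemma~\ref{theo:empty} shows. Minor bookkeeping points are that all $f_{mk}$ are positive (edge weights are positive, and the normalization $w^{\min}\ge1$ keeps the box indices non-negative), that $\mathbf{x}$ need not lie in $P$, and that the deleted set may be empty when $\mathbf{x}'$ is rejected, in which case every witness falls under Case~1. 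Finally I would note that this lemma is meant to be iterated: combined with Lemma~\ref{mutate} for the deterministic one-edge extension and with the optimal-substructure property of Lemma~\ref{optimal substructure}, it lets the approximation exponent grow by at most one per edge along a common shortest path, so that over paths of length at most $n-1$ the accumulated factor is $r^{\,n-1}=1+\varepsilon$, which is the invariant behind the subsequent running-time theorem.
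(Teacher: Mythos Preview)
Your proposal is correct and follows essentially the same route as the paper: identify a common witness $\mathbf{z}\in P_i$, observe that either it survives into $P'$ or is displaced by a mutant that box-dominates it for all parties simultaneously (this is exactly the deletion rule of Algorithm~\ref{alg:ctoc}), unpack the box-index inequality to gain at most one factor of $r$, and chain with the hypothesis. The paper compresses your two cases into the single assertion that for every $\mathbf{z}\in P_i$ there exists $\mathbf{z}'\in P_i'$ with $b_r(F_m(\mathbf{z}'))\succeq b_r(F_m(\mathbf{z}))$ for all $m$, but the underlying argument is identical; your explicit case split and your remark that the all-party deletion criterion is precisely what makes the consensus update work are welcome clarifications.
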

\begin{proof}
	From the update rule of the population in Algorithm 6, for any \( \mathbf{x} \in P_i \subset P \), there exists \( \mathbf{z}' \in P_i' \subset P' \) such that \( b_r(F_{m}(\mathbf{z}')) \succeq b_r(F_{m}(\mathbf{z})) \) for all \( m \).
	
	Given \( b_r(F_{m}(\mathbf{z}')) \succeq b_r(F_{m}(\mathbf{z})) \) and \( F_{m}(\mathbf{z}) \succeq_{r^j} F_{m}(\mathbf{x}) \) (since \( F_{m}(P_i) \succeq_{r^j} F_{m}(\mathbf{x}) \) and \( \mathbf{z} \in P_i \)), it follows that:
	\[
	\lfloor \log_{r}(f_{mk}(\mathbf{z}')) \rfloor \leq \lfloor \log_{r}(f_{mk}(\mathbf{z})) \rfloor,
	\]
	then we have
	\[
	\frac{\log(f_{mk}(\mathbf{z}'))}{\log r} - 1 \leq \frac{\log(f_{mk}(\mathbf{z}))}{\log r},
	\]
	and then
	\[
	\log(f_{mk}(\mathbf{z}')) \leq \log(r \cdot f_{mk}(\mathbf{z})).
	\]
	Thus,
	\[
	f_{mk}(\mathbf{z}') \leq r \cdot f_{mk}(\mathbf{z}).
	\]
	
	By induction, since \( f_{mk}(\mathbf{z}) \leq r^j f_{mk}(\mathbf{x}) \), it follows that:
	\[
	f_{mk}(\mathbf{z}') \leq r \cdot r^j f_{mk}(\mathbf{x}) = r^{j+1} f_{mk}(\mathbf{x}).
	\]
	This demonstrates that \( F_{m}(P_i') \succeq_{r^{j+1}} F_{m}(\mathbf{x}) \) for all \( m \).
\end{proof}

\begin{theorem}
	Let \( \varepsilon > 0 \), \( m \in \{1, 2\} \), $\delta=\frac{n \log \left(n w^{\max}\right)}{\log(1+\varepsilon)}$ and \( k \) represent the maximum number of objectives among all parties. When \( \text{EMPMO}_{cons}^{SP} \) is applied to BPMOSP, it achieves a \((1 + \varepsilon)\)-approximation with an expected number of generations bounded by:
	\[
	 O\left(n^4 \cdot \delta^{k-1} \cdot \log\left(n\delta^{k-1}\right) \right).
	\]
\end{theorem}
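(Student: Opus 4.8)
The plan is to mirror the structure of the analysis of $\text{EMPMO}^{\text{SP}}_{\text{simple}}$ in the previous theorem, but to exploit the optimal substructure property (Lemma~\ref{optimal substructure}) together with Lemmas~\ref{pmax}, \ref{mutate}, and~\ref{grow} to track the growth of a \emph{single common} solution set rather than two separate approximate Pareto sets. First I would set the box parameter to $r = (1+\varepsilon)^{1/(n-1)}$, so that a path of length at most $n-1$ accumulates an approximation factor of at most $r^{n-1} = 1+\varepsilon$; this is the standard device from Horoba's framework and justifies why box-dominance at level $r$ per edge yields an overall $(1+\varepsilon)$-approximation. By Lemma~\ref{optimal substructure}, for every target vertex $v$ there is a common shortest path to $v$, and every prefix of it is a common shortest path to the corresponding intermediate vertex; this is what guarantees that a ``chain'' of common solutions of increasing length actually exists to be discovered.

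Next I would run the standard length-based stage argument. Partition the run into $n-1$ phases indexed by path length $\ell = 1, \dots, n-1$. The invariant carried into phase $\ell$ is: for every target vertex $v$ reachable by a common shortest path of length $\le \ell$, the population $P$ contains a path to $v$ that $r^{\ell}$-dominates (in all parties' objectives) a common shortest path to $v$. Lemma~\ref{grow} (applied with $j \le \ell$) shows this invariant is preserved by arbitrary further mutation steps — the approximation factor only degrades by one factor of $r$ per length increment — and Lemma~\ref{mutate} shows that extending a good path by one edge preserves the per-party $r^{i}$-domination. To advance from phase $\ell$ to phase $\ell+1$ I need: select from $P$ the right path of length $\ell$ (probability at least $1/p_{\max}$, where $p_{\max}$ is the population bound from Lemma~\ref{pmax}), and apply the ``Add'' branch of the mutation operator appending the correct next vertex at the endpoint (probability at least $\tfrac12 \cdot \tfrac{1}{\ell+1} \cdot \tfrac{1}{n}$, or simply $\Omega(1/n^2)$). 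So one successful extension step for one vertex takes expected time $O(n^2 p_{\max})$; by a coupon-collector argument over the at most $n-1$ target vertices, completing phase $\ell$ costs $O(n^3 p_{\max} \log(n p_{\max}))$ generations, and summing over $n-1$ phases gives $O(n^4 p_{\max} \log(n p_{\max}))$.

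Finally I would substitute the population bound. From Lemma~\ref{pmax}, $p_{\max} = O\!\big(n \cdot (\lfloor \log_r((n-1)w^{\max})\rfloor + 1)^{k-1}\big)$, and since $\log_r x = \frac{\log x}{\log r} = \frac{(n-1)\log x}{\log(1+\varepsilon)}$, we get $\lfloor \log_r((n-1)w^{\max})\rfloor + 1 = O\!\big(\frac{n \log(n w^{\max})}{\log(1+\varepsilon)}\big) = O(\delta)$, hence $p_{\max} = O(n \delta^{k-1})$. Plugging in yields $O\!\big(n^4 \cdot \delta^{k-1} \cdot \log(n\delta^{k-1})\big)$ generations, and the $(1+\varepsilon)$-approximation quality follows because every reported path has length $\le n-1$ and hence approximation factor $\le r^{n-1} = 1+\varepsilon$ for both parties simultaneously (the population-update rule only discards a box when some path box-dominates it for \emph{all} parties, so the common structure is never lost). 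The main obstacle I anticipate is the bookkeeping in the induction invariant: I must be careful that the ``$\exists m$ / $\forall m'$'' asymmetry in the acceptance and removal conditions of Algorithm~\ref{alg:ctoc} really does preserve, for \emph{every} target vertex, a path that simultaneously $r^{\ell}$-approximates a common optimum for \emph{both} parties — rather than merely preserving something good for one party at a time — and Lemmas~\ref{mutate} and~\ref{grow} need to be invoked in exactly the right order to close this gap, since a naive argument could lose the "common" property when a box is vacated.
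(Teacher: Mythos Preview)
Your overall strategy matches the paper's: set $r=(1+\varepsilon)^{1/(n-1)}$, run $n-1$ length-based phases, use Lemmas~\ref{mutate} and~\ref{grow} to maintain an approximation invariant that degrades by one factor of $r$ per phase, and bound each phase via a coupon-collector argument together with the population bound from Lemma~\ref{pmax}. That skeleton is exactly what the paper does.

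The gap is in the invariant and, consequently, in the coupon-collector step and the arithmetic. You track, for each target vertex $v$, \emph{one} common shortest path to $v$ and an $r^{\ell}$-approximation of it in $P$. But a single target vertex can admit several common Pareto-optimal paths with different objective trade-offs, and a $(1+\varepsilon)$-approximation of the common Pareto set must cover \emph{all} of them. The paper therefore carries the stronger invariant $F_m(P)\succeq_{r^{i+1}}F_m(S_{i+1})$ for all $m$, where $S_{i+1}$ is the set of \emph{all} paths of length at most $i+1$; the coupon collector in each phase then runs over the at most $P_{\max}$ non-dominated boxes, not over the $n-1$ vertices. This yields a per-phase cost of $O(n^2 P_{\max}\log P_{\max})$, and over $n-1$ phases with $P_{\max}=O(n\delta^{k-1})$ one gets the stated $O(n^4\delta^{k-1}\log(n\delta^{k-1}))$. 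Your stated per-phase cost $O(n^3 p_{\max}\log(n p_{\max}))$ does not follow from a coupon collector over $n-1$ vertices (that would give $O(n^2 p_{\max}\log n)$), and substituting $p_{\max}=O(n\delta^{k-1})$ into your total $O(n^4 p_{\max}\log(n p_{\max}))$ produces an extra factor of $n$ relative to the claimed bound. Restating the invariant over all paths of length $\le \ell$ and collecting over boxes rather than vertices repairs both the correctness of the approximation guarantee and the arithmetic simultaneously.
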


\begin{proof}
	Since there are \( n \) vertices in the graph, the maximum path length is \( n-1 \). Based on the path length, we divide the entire evolution process into \( n-1 \) stages. At the end of the \( i \)-th stage, we have \( F_{m}(P) \succeq_{r^{i+1}} F_{m}(S_{i+1}) \) for all \( m \), where \( P \) represents the population, \( 0 \leq i \leq n-2 \) is the path length, \( S_{i+1} \) represents all possible paths in the solution space with path lengths not exceeding \( i+1 \), and \( r = (1+\varepsilon)^{\frac{1}{n-1}} \). After completing all \( n-1 \) stages, it holds that \( r^{n-1} = 1+\varepsilon \).
	
	At the initialization of the population, let \( P = \{(v_1)\} = S_{0} \), which satisfies \( F_{m}(P) \succeq_{r^{0}} F_{m}(S_{0}) \) for all \( m \). At the beginning of the \( i \)-th stage, it holds that \( F_{m}(P) \succeq_{r^{i}} F_{m}(S_{i}) \) for all \( m \).
	
	For any \( \mathbf{x} \in S_i \), according to Lemma \ref{mutate}, there exists a path \( \mathbf{z} \) in the population with the same target vertex as \( \mathbf{x} \) such that \( F_{m}(\mathbf{z}') \succeq_{r^{i}} F_{m}(\mathbf{x}') \) for all \( m \), where \( \mathbf{x}' \in S_{i+1} \) and \( \mathbf{z}' \) is a mutation of \( \mathbf{z} \). By Lemma \ref{grow}, the subsequent population \( P' \) satisfies \( F_{m}(P') \succeq_{r^{i+1}} F_{m}(\mathbf{x}') \) for all \( m \). The \( i \)-th stage ends when \( F_{m}(P') \succeq_{r^{i+1}} F_{m}(S_{i+1}) \) is satisfied for all \( m \).
	
	According to Lemma \ref{optimal substructure}, we keep the form of $F_m(\mathbf{x})$, $w_m(v_{l-1}, v_l)$ and $F_m(\mathbf{x}')$ unchanged and let  \( F_m(\mathbf{z}) = (f_{m1}^\prime, \dots, f_{mk_m}^\prime) \) and \( F_m(\mathbf{z}') = (f_{m1}^\prime - w_{m1}, \dots, f_{mk_m}^\prime - w_{mk_m} )  \).
	If \( \mathbf{x}' \) is a approximate common shortest path from \( v_0 \) to \( v_{l-1} \) but \( \mathbf{z}' \) is not, we have \( F_m(\mathbf{x}') \succeq_{r^{j}} F_m(\mathbf{z}') \) ,that is \( (f_{mk} - w_{mk} ) \leq r^j(f_{mk}^{\prime} - w_{mk} )\) for all \( 1 \leq k \leq k_m \). After scaling the inequality, we can get $
	f_{mk} \leq r^jf_{mk}^{\prime} - (r^j - 1)w_{mk} \leq r^{j+1}f_{mk}^{\prime} - (r^j - 1)w_{mk} \leq r^{j+1}f_{mk}^{\prime}$
	because of $ r > 1$, that is \( F_m(\mathbf{x}) \succeq_{r^{j+1}} F_m(\mathbf{z}) \). Therefore, we know that a common solution can be found in the first stage, and an approximate common solution of the corresponding length can be found through mutation in the \( i \)-th stage. If this solution is deleted during population evolution, it means that there are other approximate common solutions that are closer to the true common solution than it.
	
	For the mutation operation, the probability of selecting a solution from the population is \( \frac{1}{|P|} \), the probability of performing the Add operation in the mutation operator on this solution is \( \frac{1}{2} \), and the probability of selecting the endpoint of the path and adding an edge is at least \( \frac{1}{n} \cdot \frac{1}{n-1} \). Thus, the probability of successfully executing the operation in Lemma \ref{mutate} is at least \( \frac{1}{2 \cdot |P| \cdot n(n-1)} \geq \frac{1}{2n(n-1)P_{\max}} \). According to Lemma \ref{pmax},
	\(
	P_{\max} = \min_{1\le m\le M} (n-1) \cdot \left(\lfloor \log_{r}((n-1)w_{m}^{\max}) \rfloor + 1\right)^{k_{m}-1} + 1.
	\) 
	
	Hence, the maximum expected number of generations for the \( i \)-th stage is
	\[
	2n(n-1) \cdot P_{\max} \cdot \sum_{j=1}^{P_{\max}} \frac{1}{j} = O\left(n^{2} \cdot P_{\max} \cdot \log P_{\max}\right).
	\]
	
	Let \( k = \max\{k_{1}, \ldots, k_{M}\} \). Since \( r = (1+\varepsilon)^{\frac{1}{n-1}} \), \( w_m^{\max} \leq w^{\max} \), we have
	\[
	\begin{aligned}
		&O\left(n^{2} \cdot P_{\max} \cdot \log P_{\max}\right) \\
		=&O\left(n^{2} \cdot n \left(\log_{r}(n w^{\max})\right)^{k-1} \log\left(n \cdot \left(\log_{r}(n w^{\max})\right)^{k-1}\right)\right) \\
		=&O\left(n^{3}\left(\frac{n \log(n w^{\max})}{\log(1+\varepsilon)}\right)^{k-1} \log\left(n \left(\frac{n \log(n w^{\max})}{\log(1+\varepsilon)}\right)^{k-1}\right)\right) \\
		=&O\left(n^3 \cdot \delta^{k-1} \cdot \log\left(n\delta^{k-1}\right) \right).
	\end{aligned}
	\]  
	
%

	Since there are \( n-1 \) stages in total, the overall expected number of generations is at most
	\[O\left(n^4 \cdot \delta^{k-1} \cdot \log\left(n\delta^{k-1}\right) \right).
	\]  
\end{proof}

\subsection{Comparison Between Bi-party Multi-objective Shortest Path and Multi-objective Shortest Path}
Consider the case where \( k_1 = k_2 = 2 \), representing a bi-party bi-objective shortest path (SP) problem. Compared to the generalized four-objective SP problem, the time complexities are as follows.

Bi-objective SP Problems: As stated in \cite{horoba2010exploring}, the upper bound on the expected number of generations is:
\[
O\left(n^3 \cdot \frac{n \log(n w^{\max})}{\log(1+\varepsilon) } \cdot \log\left(n \cdot \frac{n \log(n w^{\max})}{\log(1+\varepsilon)}\right)\right).
\]

This bound also applies to \(\text{EMPMO}_{\text{simple}}^{\text{SP}}\) for the bi-party bi-objective SP problem, but with a larger approximation ratio.

Four-objective Problems: The upper bound on  the expected number of generations is:
\[
O\left(n^3 \cdot \left(\frac{n \log(n w^{\max})}{\log(1+\varepsilon)}\right)^3 \cdot \log\left(n \cdot \left(\frac{n \log(n w^{\max})}{\log(1+\varepsilon)}\right)^{3} \right)\right).
\]

Bi-party Bi-objective SP Problem: Using  \(\text{EMPMO}_{\text{cons}}^{\text{SP}}\) , the upper bound on the expected number of generations is:
\[
O\left(n^4 \cdot \frac{n \log(n w^{\max})}{\log(1+\varepsilon)} \cdot \log\left(n \cdot \frac{n \log(n w^{\max})}{\log(1+\varepsilon)}\right)\right).
\]

In all cases, the approximation accuracy \(\varepsilon\) is consistent across the bi-objective, four-objective, and bi-party bi-objective problems. While  \(\text{EMPMO}_{\text{cons}}^{\text{SP}}\) modifies the mutation operator, its running time remains lower than that for the four-objective problem and slightly higher than that for the bi-objective problem. Importantly, unlike  \(\text{EMPMO}_{\text{simple}}^{\text{SP}}\), \(\text{EMPMO}_{\text{cons}}^{\text{SP}}\)  does not relax the approximation accuracy, thereby yielding higher-quality solutions.

\section{Experiments}
\label{sec:experiments}

In this section, we conduct short experiments on the artificial problem BPAOAZ and the bi-party multi-objective UAV path planning (BPUAVPP) to complement the theoretical results.

\subsection{On Artificial Problem BPAOAZ}

In this subsection, we analyze the average running times of the MOEA, i.e., SEMO, and three MPMOEAs: \( \text{EMPMO}_{\text{simple}} \), \( \text{EMPMO}_{\text{random}} \), and \( \text{EMPMO}_{\text{payoff}} \), on the artificial problem \( \mathrm{BPAOAZ} \). Each algorithm was independently executed 10 times. The results, illustrated in Fig. \ref{fig:runtime_p}, provide a comparative evaluation of their performance.

SEMO is designed to identify the complete Pareto set of the associated multi-objective optimization problem, whereas the other three algorithms focus on finding the common Pareto set. Among these, \( \text{EMPMO}_{\text{payoff}} \) serves as a theoretical performance baseline for comparison. The results demonstrate that SEMO exhibits the longest runtime, followed by \( \text{EMPMO}_{\text{simple}} \), \( \text{EMPMO}_{\text{random}} \), and \( \text{EMPMO}_{\text{payoff}} \), in descending order of execution time. These experimental findings corroborate the theoretical results presented in Section~\ref{sec: theoretical analysis}.

It is noteworthy that Theorem \ref{the:random} establishes that the expected runtime of  \( \text{EMPMO}_{\text{random}} \) on \( \mathrm{BPAOAZ} \) is bounded by \( O\left(\left(\frac{1}{\varphi} + \frac{1}{1-\varphi} \right) \frac{n^2}{2} \log n \right) \), consistent with the runtime complexity of \( \text{EMPMO}_{\text{simple}} \). However, as shown in Fig. \ref{fig:runtime_p}, when the parameter \( \varphi = 0.5 \) is set, the actual runtime of  \( \text{EMPMO}_{\text{random}} \) is closer to the theoretical lower bound \( \Omega(n \log n) \) and is comparable to that of \( \text{EMPMO}_{\text{payoff}} \) .

This discrepancy arises from the significant influence of the selection probability \( \varphi \) on runtime. Fig. \ref{fig:runtime_phi} further illustrates the runtime of \( \text{EMPMO}_{\text{random}} \)  as a function of \( \varphi \) across different problem sizes. For \( \mathrm{BPAOAZ} \), which features symmetric subproblems for both parties, \( \varphi = 0.5 \) is empirically identified as the optimal parameter. Additionally, as the problem size increases, the variability in runtime becomes more pronounced, peaking at the extremes of the parameter range. Due to the added randomness in the selection process,  \( \text{EMPMO}_{\text{random}} \) exhibits greater variance compared to  \( \text{EMPMO}_{\text{simple}} \). This is evident from the broader shaded regions in Fig. \ref{fig:runtime_phi}, which represent the range of runtime fluctuations. In addition, for a more rigorous proof, we add two experiments. One experiment (Fig. \ref{fig:p_checkphi}) shows the comparison results when $\varphi=0.95$ and further proves that the value of $\varphi$ will affect the running time of \( \text{EMPMO}_{\text{random}} \) to find the common solution. The other experiment (Fig. \ref{fig:p_checkrandom}) simulates the derivation process of \( \text{EMPMO}_{\text{random}} \) and proves the correctness of the lower bound of the running time.

\begin{figure}
	\centering
	\small
	\subfloat[Logarithmic variation of running time with problem size]{\includegraphics[width=0.4\columnwidth]{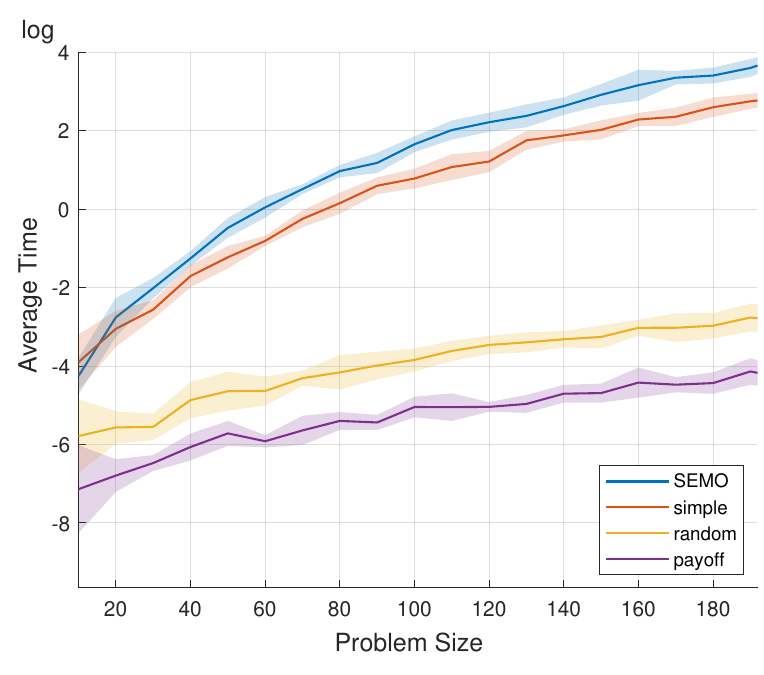}
		\label{fig:runtime_p}}
	\subfloat[Variation of \( \text{EMPMO}_{\text{random}} \) running time with \( \varphi \)]{\includegraphics[width=0.43\columnwidth]{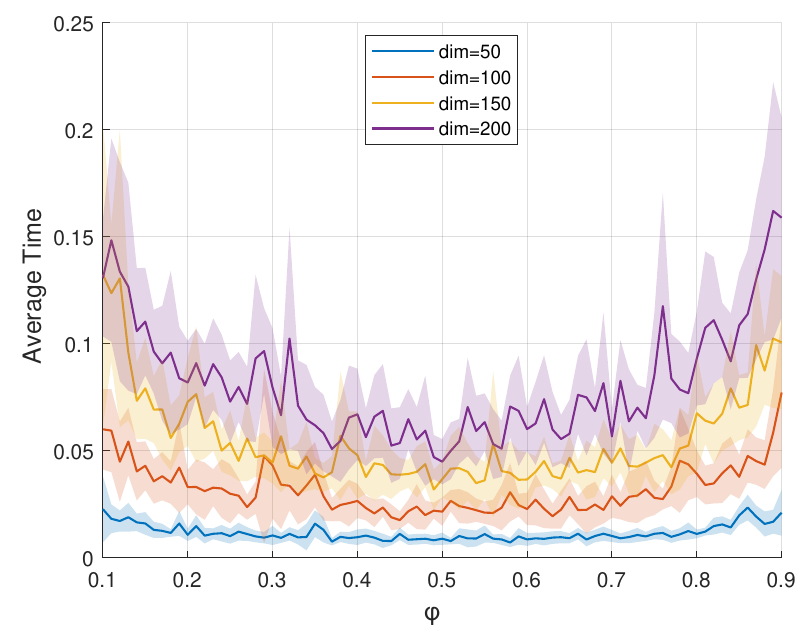}
		\label{fig:runtime_phi}}
	\caption{The average runtime of SEMO, \( \text{EMPMO}_{\text{simple}} \), \( \text{EMPMO}_{\text{random}} \), and \( \text{EMPMO}_{\text{payoff}} \) on artificial problem \( \mathrm{BPAOAZ} \) and the y-axis is the average runtime in base 10.}
	\label{fig:GRandRDR}
\end{figure}

\begin{figure}[h]
	\centering
	\small
	\subfloat[The running time of \( \text{EMPMO}_{\text{random}} \) with $\varphi=0.95$]{\includegraphics[width=0.46\columnwidth]{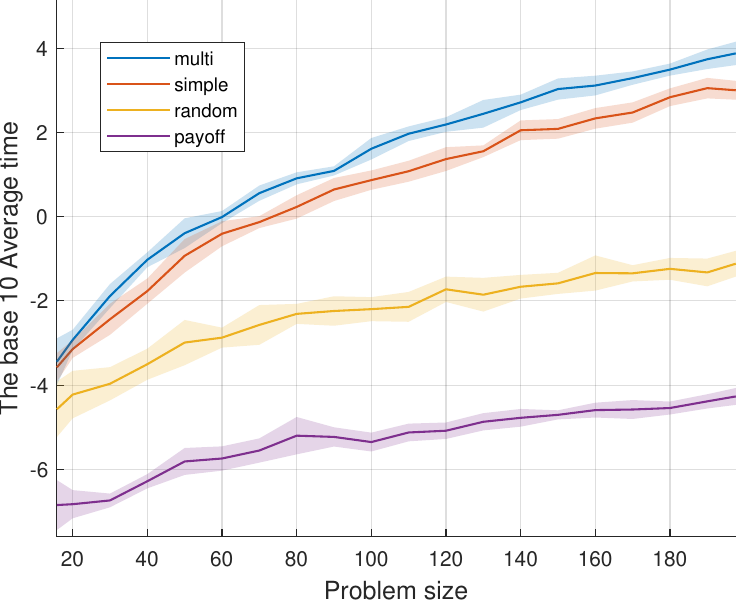}
		\label{fig:p_checkphi}}
	\subfloat[Variation of the running time of \( \text{EMPMO}_{\text{random}} \) ]{\includegraphics[width=0.43\columnwidth]{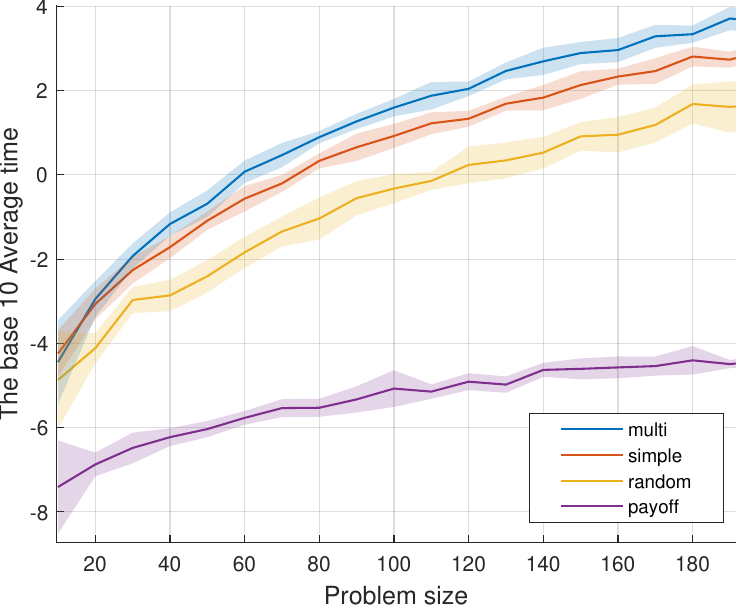}
		\label{fig:p_checkrandom}}
	
	\caption{The average runtime of SEMO, \( \text{EMPMO}_{\text{simple}} \), \( \text{EMPMO}_{\text{random}} \), and \( \text{EMPMO}_{\text{payoff}} \) on artificial problem \( \mathrm{BPAOAZ} \) and the y-axis is the average runtime in base 10.}
	\label{fig:GRcheckR}
\end{figure}




\subsection{On Bi-party Multi-objective UAV Path Planning}



\begin{figure}[htbp]
	\centering
	\small
	\subfloat[UAV1]{\includegraphics[width=0.32\columnwidth]{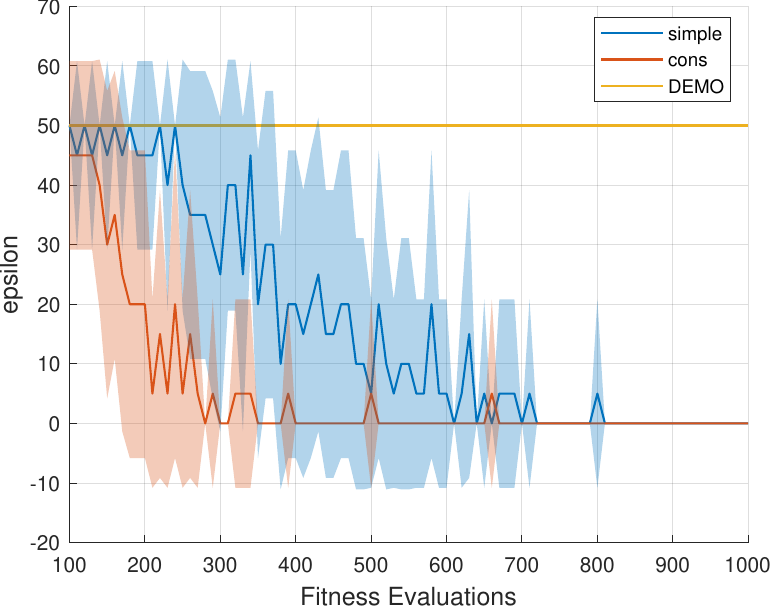}
		\label{fig:UAV1_max}}
	\subfloat[UAV2]{\includegraphics[width=0.32\columnwidth]{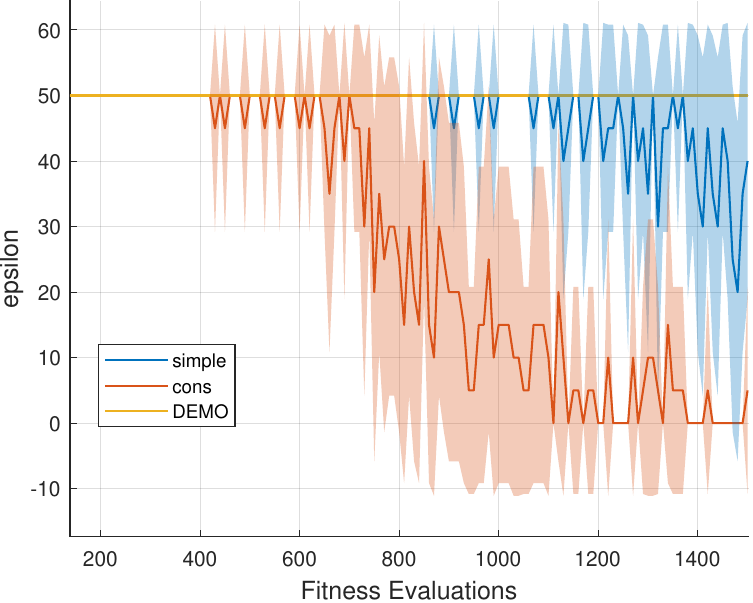}
		\label{fig:UAV3_max}}
	\subfloat[UAV3]{\includegraphics[width=0.32\columnwidth]{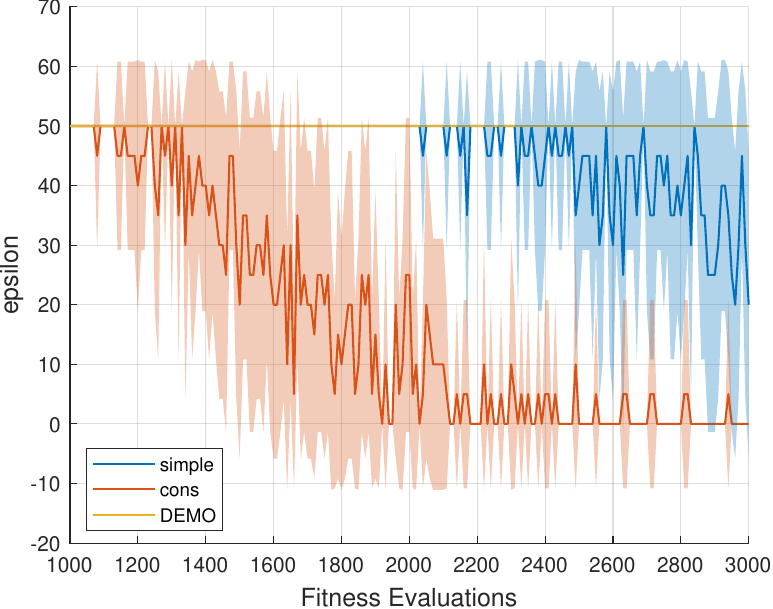}
		\label{fig:UAV5_max}}
	\caption{The largest minimum approximate degree of DEMO, \( \text{EMPMO}^{\text{SP}}_{\text{simple}} \), and \(\text{EMPMO}^{\text{SP}}_{\text{cons}} \) on BPUAVPP.}
	\label{fig:UAV_max}
\end{figure}

In this experiment, we evaluate the performance of DEMO, \( \text{EMPMO}^{\text{SP}}_{\text{simple}} \), and \( \text{EMPMO}^{\text{SP}}_{\text{cons}} \) on the BPMOSP problem, which is based on the bi-party multi-objective UAV path planning (BPUAVPP) problem proposed by Chen et al. \cite{10463192}. The problem involves two stakeholders: the efficiency party and the safety party. To account for the computational complexity of the underlying optimizer, we simplify the BPMOSP problem into a combinatorial optimization problem that seeks the shortest path on a finite set of nodes, disregarding continuous coordinates. Additionally, all constraints in the problem are omitted. The problem construction ensures the existence of a common solution.

\begin{itemize}
	\item The efficiency party focuses on minimizing the path length and the mission hover point distance, formulated as follows:
	\[
	f_{\textup{length}} = \sum_{i=0}^{n-1} ||\mathbf{g}_i||,\quad f_{\textup{distance}} = \sum_{k=0}^{K-1} \min_{i} ||\mathbf{p}_i - \mathbf{p}_k^{\text{job}}||,
	\]
	where \( g_i \) represents the length of the \( i \)-th traversed edge, \( p_k^{\text{job}} \) denotes the \( k \)-th preset UAV hover point, and \( p_i \) is the \( i \)-th discrete trajectory point.
	
	\item The safety party aims to minimize the risks to pedestrians and property, as described by the following objectives:
	
	\[
	f_{\textup{fatal}} = \sum_{i=0}^{n} c_{r_p}(x_i, y_i, z_i), \quad f_{\textup{eco}} = \sum_{i=0}^{n} \psi(z_i),
	\]
	where \( c_{r_p}(x, y, z)= P_{\text{crash}} S_h \sigma_{p}(x, y) R^{P}_f(z) \) represents the fatality risk cost, \( P_{\text{crash}} \) denotes the crash probability, which depends on UAV hardware and software reliability, \( S_h \) represents the crash impact area, \( \sigma_{p}(x, y) \) indicates population density at location \((x, y)\), and \( R^{P}_f(z) \) correlating with the kinetic energy of the impact and obscuration factors, and \( \psi \) is a lognormal distribution function of the flight height \( z_i \).
\end{itemize}



The experiments included three test cases for the BPUAVPP problem, consisting of 10, 30, and 50 vertices, referred to as UAV1, UAV2, and UAV3, respectively. Figures \ref{fig:UAV1_max} - \ref{fig:UAV5_max} depict the worst performance trends of the three algorithms under comparison across these test cases. The x-axis represents the number of evaluations, while the y-axis denotes the largest minimum degree to which all solutions in the population can approximately dominate the common solution for the same endpoint, which can be expressed as:
$
\max_{1\leq i\leq|P|}\varepsilon_{x_i},
$
where $P$ represents the population, and $x_i=(v_0,...,u_j)$ is the $i$-th solution in $P$. Let $\Phi_{u_j}$ represent the set of solutions in the common solution set whose endpoint is $u_j$, then $\varepsilon_{x_i}$ represents the minimum degree to which $x_i$ can approximately dominate all solutions in $\Phi_{u_j}$. In the experiment, the value of $\varepsilon_{x_i}$ is calculated using the bisection method.
This metric reflects the population's worst approximation quality relative to the common solution, with smaller values indicating better population quality.

The results clearly show that for the multi-objective optimization algorithm, the maximum \( \varepsilon \)-value did not converge. This is because the algorithm aims to identify the complete Pareto set (PS) for the multi-objective version of BPUAVPP, including non-common solutions. In contrast, the worst approximations of the other two algorithms converge as the number of evaluations increases, indicating that they are able to find common solutions over time. Among them, our proposed \( \text{EMPMO}^{\text{SP}}_{\text{cons}} \)  is able to find common solutions faster.

In the UAV1 case, a unique common solution can be identified. Since UAV1 consists of only 10 nodes, both the simple and cons algorithms eventually find the true common solution after sufficient evaluations, achieving \( \varepsilon = 0 \). A comparison across different problem complexities reveals that as the problem becomes more challenging, the approximation quality decreases for the same number of evaluations.

These experimental results strongly support the theoretical analysis presented earlier. First, the BPUAVPP problem cannot be effectively solved using multi-objective optimization algorithms, as the population introduces numerous non-common solutions. Second,  \( \text{EMPMO}^{\text{SP}}_{\text{simple}} \), which only enforces agreement between parties in the final stage, requires the algorithm to explore the complete PS during the evolutionary phase to achieve higher precision at the end. This slows the convergence speed. Finally, \( \text{EMPMO}^{\text{SP}}_{\text{cons}} \) not only identifies the correct common solution, as shown in the UAV1 and UAV2 cases, but also achieves higher precision compared to \( \text{EMPMO}^{\text{SP}}_{\text{simple}} \). 

We also conducted experiments on the average minimum degree to which all solutions in the population can approximately dominate the common solution for the same endpoint, and the results strongly supported the previous theoretical analysis. Figures \ref{fig:UAV1} - \ref{fig:UAV5} depict the performance trends of the three algorithms under comparison across these test cases. The x-axis represents the number of evaluations, while the y-axis denotes the average minimum degree to which all solutions in the population can approximately dominate the common solution for the same endpoint. This metric reflects the population's overall approximation quality relative to the common solution, with smaller values indicating better population quality.

\begin{figure}[htbp]
	\centering
	\small
	\subfloat[UAV1]{\includegraphics[width=0.32\columnwidth]{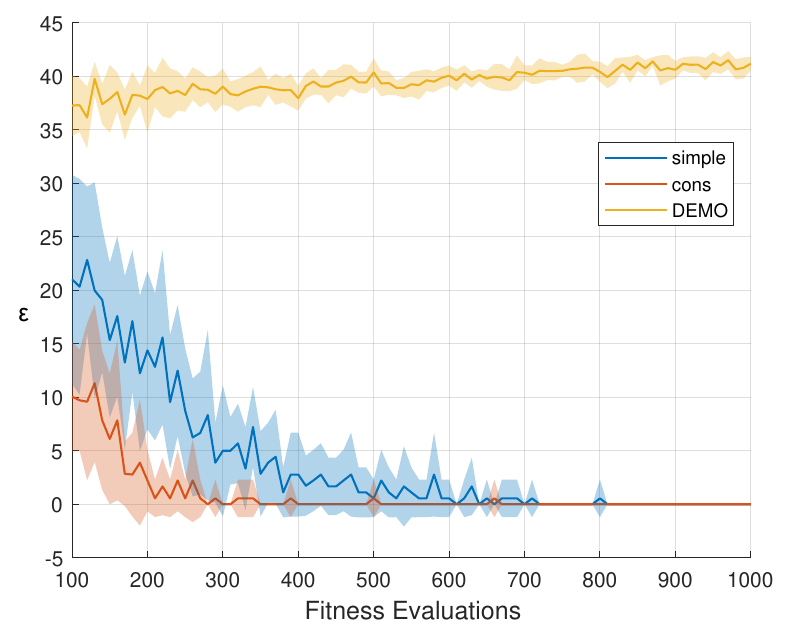}
		\label{fig:UAV1}}
	\subfloat[UAV2]{\includegraphics[width=0.32\columnwidth]{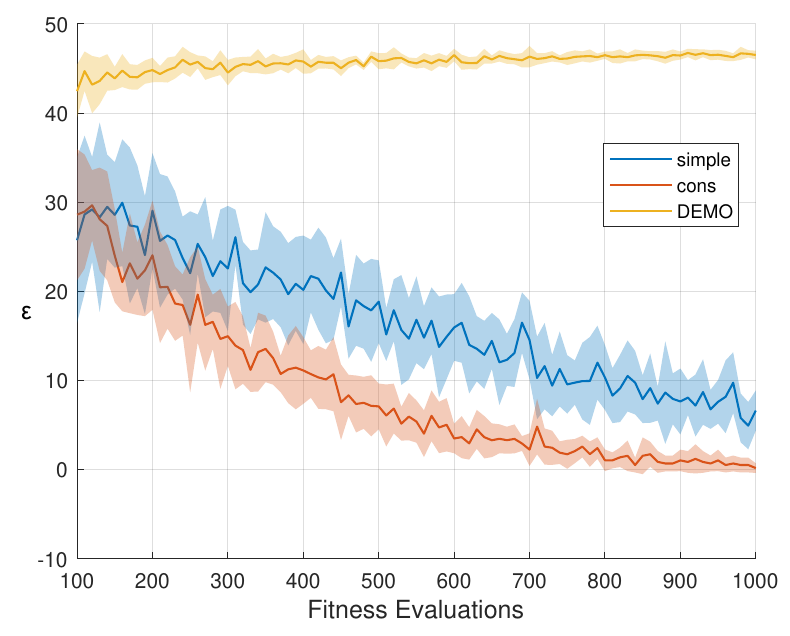}
		\label{fig:UAV3}}
	\subfloat[UAV3]{\includegraphics[width=0.32\columnwidth]{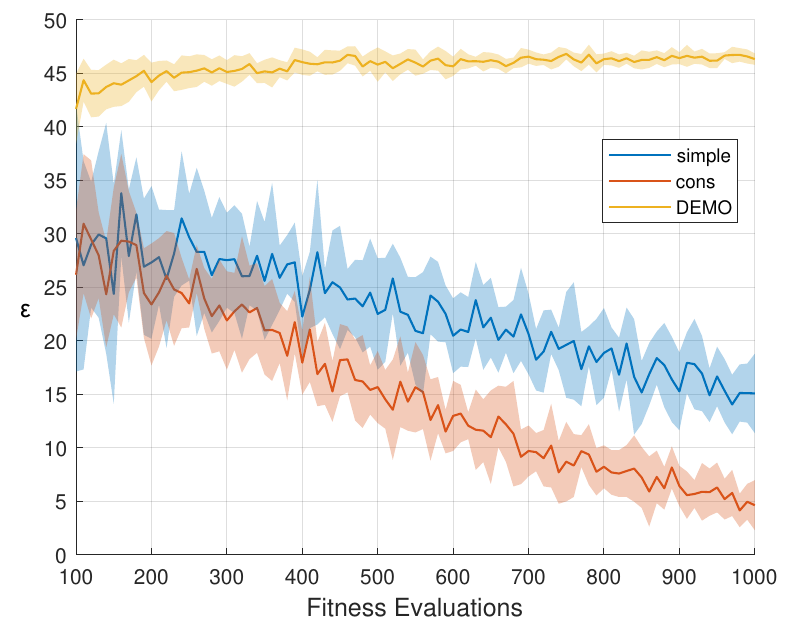}
		\label{fig:UAV5}}
	\caption{The average minimum approximate degree of DEMO, \( \text{EMPMO}^{\text{SP}}_{\text{simple}} \), and \(\text{EMPMO}^{\text{SP}}_{\text{cons}} \) on BPUAVPP.}
	\label{fig:UAV}
\end{figure}

The results in Figures \ref{fig:UAV1} - \ref{fig:UAV5} clearly demonstrate that for multi-objective optimization algorithms, the \( \varepsilon \)-value shows a slight upward trend as the number of evaluations increases. This is because the algorithm aims to identify the complete Pareto set (PS) for the multi-objective version of BPUAVPP, including non-common solutions. As evaluations increase, the inclusion of more non-common solutions leads to a rise in the population's overall \( \varepsilon \)-value. In contrast, the \( \varepsilon \)-values for the other two algorithms decrease over time.

\section{Conclusion}
\label{sec: Conclusion}
In this paper, we present the first mathematical analysis of the runtime of evolutionary algorithms applied to two-party multi-objective optimization problems. We demonstrate that multi-objective optimization algorithms are not suitable for bi-party multi-objective optimization problems, both in terms of runtime and the solution set. We then consider a transition from multi-objective optimization to multi-party multi-objective optimization by decoupling MPMOPs into two MOPs and separately optimizing them to compute their intersection for the common Pareto set. This approach is limited by the NP-hard nature of the problem, making it difficult to obtain exact solutions in polynomial time, and the final solution set composed of approximations often lacks an intersection. Finally, we propose two general bi-party multi-objective optimization frameworks, along with a customized algorithm for the bi-party multi-objective shortest path optimization. We provide theoretical guarantees for an approach that maintains a single population searching for a common solution, and demonstrate that considering the interaction between the two parties in each iteration optimizes both time efficiency and solution set quality. This paper lays the foundation for evolutionary multi-party multi-objective optimization analysis, which will be further extended in the future to analyze bi-party multi-objective optimization problems without a common solution , including the definition of optimality in the case of no common solution, the design of indicators, and specific proof analysis.  In addition, we will also  explore the population size in future work.

\bibliographystyle{IEEEtran}
\bibliography{ref}

\end{document}